\documentclass[twoside,11pt]{article}
\usepackage{jmlr2e}

\usepackage{epsf}
\usepackage{graphicx}
\usepackage{amsmath}
\usepackage{amssymb}
\usepackage{algorithmic}
\usepackage{algorithm}
\usepackage{natbib}
\usepackage{hyperref}

 \usepackage{algolyx}
 \keycomment{\#<}{>\#}
\makeatother

\def\Re{\mathbb{R}}
\def\Nat{{\rm I\kern\pIR N}}
\def\argmax{\mathop{\rm arg\,max}}
\def\argmin{\mathop{\rm arg\,min}}

\def\log{\mathop{\rm log}}

\def\Var{{\bf Var}}
\def\Cov{{\bf Cov}}
\def\exptE{{\bf E}}
\def\diag{\mathop{\rm diag}}

\def\A{{\mathcal{A}}}

\def\D{{\mathcal{D}}}
\def\E{{\mathcal{E}}}

\def\N{{\mathcal{N}}}
\def\O{{\mathcal{O}}}
\def\P{{\mathcal{P}}}

\def\X{{\mathcal{X}}}

\def\Z{{\mathcal{Z}}}

\def\vecalpha{{\boldsymbol{\alpha}}}
\def\vecbeta{{\boldsymbol{\beta}}}
\def\vectheta{{\boldsymbol{\theta}}}
\def\vecphi{{\boldsymbol{\phi}}}

\def\vecpsi{{\boldsymbol{\psi}}}

\def\vec0{{\boldsymbol{0}}}
\def\veca{{\boldsymbol{a}}}
\def\vecb{{\boldsymbol{b}}}
\def\vecf{{\boldsymbol{f}}}
\def\bvecf{{\boldsymbol{\bar{f}}}}
\def\vecg{{\boldsymbol{g}}}
\def\veck{{\boldsymbol{k}}}

\def\vecr{{\boldsymbol{r}}}

\def\vecu{{\boldsymbol{u}}}
\def\vecv{{\boldsymbol{v}}}
\def\vecw{{\boldsymbol{w}}}
\def\vecx{{\boldsymbol{x}}}
\def\vecy{{\boldsymbol{y}}}
\def\vecz{{\boldsymbol{z}}}

\def\matSigma{{\boldsymbol{\Sigma}}}
\def\matLambda{{\boldsymbol{\Lambda}}}
\def\matA{{\boldsymbol{A}}}
\def\matB{{\boldsymbol{B}}}
\def\matC{{\boldsymbol{C}}}

\def\matG{{\boldsymbol{G}}}
\def\matH{{\boldsymbol{H}}}
\def\matI{{\boldsymbol{I}}}
\def\matK{{\boldsymbol{K}}}

\def\matU{{\boldsymbol{U}}}
\def\matV{{\boldsymbol{V}}}
\def\matY{{\boldsymbol{Y}}}

\newtheorem{assumption}{Assumption}

\jmlrheading{17}{2016}{1-53}{9/10; Revised 4/16}{8/16}{Mohammad Ghavamzadeh, Yaakov Engel, and Michal Valko}

\ShortHeadings{Bayesian Policy Gradient and Actor-Critic Algorithms}{Ghavamzadeh, Engel, and Valko}
\firstpageno{1}

\begin{document}

\title{Bayesian Policy Gradient and Actor-Critic Algorithms}

\author{\name Mohammad Ghavamzadeh\thanks{Mohammad Ghavamzadeh is at Adobe Research, on leave of absence from Inria Lille - Team SequeL.} \email mohammad.ghavamzadeh@inria.fr \\
\addr Adobe Research \& Inria \\
\name Yaakov Engel \email yakiengel@gmail.com \\
\addr Rafael Advanced Defence System, Israel \\
\name Michal Valko \email michal.valko@inria.fr \\
\addr Inria Lille ---  SequeL team, France}

\editor{Jan Peters}

\maketitle


\begin{abstract} 
\vspace{-0.1in}

Policy gradient methods are reinforcement learning algorithms that adapt a parameterized policy by following a performance gradient estimate. Many conventional policy gradient methods use Monte-Carlo techniques to estimate this gradient. The policy is improved by adjusting the parameters in the direction of the gradient estimate. Since Monte-Carlo methods tend to have high variance, a large number of samples is required to attain accurate estimates, resulting in slow convergence. In this paper, we first propose a Bayesian framework for policy gradient, based on modeling the policy gradient as a Gaussian process. This reduces the number of samples needed to obtain accurate gradient estimates. Moreover, estimates of the natural gradient as well as a measure of the uncertainty in the gradient estimates, namely, the gradient covariance, are provided at little extra cost. Since the proposed Bayesian framework considers system trajectories as its basic observable unit, it does not require the dynamics within trajectories to be of any particular form, and thus, can be easily extended to partially observable problems. On the downside, it cannot take advantage of the Markov property when the system is Markovian. 

To address this issue, we proceed to supplement our Bayesian policy gradient framework with a new actor-critic learning model in which a Bayesian class of non-parametric critics, based on Gaussian process temporal difference learning, is used. Such critics model the action-value function as a Gaussian process, allowing Bayes' rule to be used in computing the posterior distribution over action-value functions, conditioned on the observed data. Appropriate choices of the policy parameterization and of the prior covariance (kernel) between action-values allow us to obtain closed-form expressions for the posterior distribution of the gradient of the expected return with respect to the policy parameters. 
We perform detailed experimental comparisons of the proposed Bayesian policy gradient and actor-critic algorithms with classic Monte-Carlo based policy gradient methods, as well as with each other, on a number of reinforcement learning problems.
\end{abstract}

\begin{keywords}
reinforcement learning, policy gradient methods, actor-critic algorithms, Bayesian inference, Gaussian processes 
\end{keywords}


\section{Introduction}
\label{sec:intro}

Policy gradient (PG) methods\footnote{The term has been coined in~\citet{Sutton00PG}, but here we use it more liberally to refer to a whole class of reinforcement learning algorithms.} are reinforcement learning (RL) algorithms that maintain a parameterized action-selection policy and update the policy parameters by moving them in the direction of an estimate of the gradient of a performance measure. Early examples of PG algorithms are the class of REINFORCE algorithms~\citep{Williams92SS},\footnote{Note that policy gradient methods have been studied in the control community (see e.g.,~\citealt{Dyer70CT,Jacobson70DD,Hasdorff76GO}) before REINFORCE. However, unlike REINFORCE that is model-free, they were all based on the exact model of the system (model-based).} which are suitable for solving problems in which the goal is to optimize the average reward. Subsequent work (e.g.,~\citealp{Kimura95RL,Marbach98SM,Baxter01IP}) extended these algorithms to the cases of infinite-horizon Markov decision processes (MDPs) and partially observable MDPs (POMDPs), while also providing much needed theoretical analysis.\footnote{It is important to note that the pioneering work of Gullapali and colleagues in the early 1990s~\citep{Gullapalli90SR,Gullapalli92LC,Gullapalli94AR} in applying policy gradient methods to robot learning problems had an important role in popularizing this class of algorithms. In fact policy gradient methods have been continuously proven to be one of the most effective class of algorithms in learning in robots.} However, both the theoretical results and empirical evaluations have highlighted a major shortcoming of these algorithms, namely, the high variance of the gradient estimates. This problem may be traced to the fact that in most interesting cases, the time-average of the observed rewards is a high-variance (although unbiased) estimator of the true average reward, resulting in the sample-inefficiency of these algorithms.

One solution proposed for this problem was to use an artificial {\em discount factor} in these algorithms~\citep{Marbach98SM,Baxter01IP}, however, this creates another problem by introducing bias into the gradient estimates. Another solution, which does not involve biasing the gradient estimate, is to subtract a {\em reinforcement baseline} from the average reward estimate in the updates of PG algorithms (e.g.,~\citealp{Williams92SS,Marbach98SM,Sutton00PG}). In~\citet{Williams92SS} an average reward baseline was used, and in~\citet{Sutton00PG} it was conjectured that an approximate value function would be a good choice for a state-dependent baseline. However, it was shown in~\citet{Weaver01OR} and~\citet{Greensmith04VR}, perhaps surprisingly, that the mean reward is in general {\em not} the optimal constant baseline, and that the true value function is generally {\em not} the optimal state-dependent baseline.
   
A different approach for speeding-up PG algorithms was proposed by~\citet{Kakade02NP} and refined and extended by~\citet{Bagnell03CP} and~\citet{Peters03RL}. The idea is to replace the policy gradient estimate with an estimate of the so-called {\em natural} policy gradient. This is motivated by the requirement that the policy updates should be invariant to bijective transformations of the parametrization. Put more simply, a change in the way the policy is parametrized should not influence the result of the policy update. In terms of the policy update rule, the move to natural-gradient amounts to linearly transforming the gradient using the inverse Fisher information matrix of the policy. In empirical evaluations, natural PG has been shown to significantly outperform conventional PG (e.g.,~\citealt{Kakade02NP,Bagnell03CP,Peters03RL,Peters08RL}).

Another approach for reducing the variance of policy gradient estimates, and as a result making the search in the policy-space more efficient and reliable, is to use an explicit representation for the value function of the policy. This class of PG algorithms are called actor-critic algorithms. Actor-critic (AC) algorithms comprise a family of RL methods that maintain two distinct algorithmic components: An {\em actor}, whose role is to maintain and update an action-selection policy; and a {\em critic}, whose role is to estimate the value function associated with the actor's policy. Thus, the critic addresses the problem of {\em prediction}, whereas the actor is concerned with {\em control}. Actor-critic methods were among the earliest to be investigated in RL~\citep{Barto83NE,Sutton84TC}. They were largely supplanted in the 1990's by methods, such as SARSA~\citep{Rummery94OQ}, that estimate action-value functions and use them directly to select actions without maintaining an explicit representation of the policy. This approach was appealing because of its simplicity, but when combined with function approximation was found to be unreliable, often failing to converge. These problems led to renewed interest in PG methods. 

Actor-critic algorithms (e.g.,~\citealt{Sutton00PG,Konda00AA,Peters05NA,Bhatnagar07IN}) borrow elements from these two families of RL algorithms. Like value-function based methods, a critic maintains a value function estimate, while an actor maintains a separately parameterized stochastic action-selection policy, as in policy based methods. While the role of the actor is to select actions, the role of the critic is to evaluate the performance of the actor. This evaluation is used to provide the actor with a feedback signal that allows it to improve its performance. The actor typically updates its policy along an estimate of the gradient (or natural gradient) of some measure of performance with respect to the policy parameters. When the representations used for the actor and the critic are {\em compatible}, in the sense explained in~\citet{Sutton00PG} and~\citet{Konda00AA}, the resulting AC algorithm is simple, elegant, and provably convergent (under appropriate conditions) to a local maximum of the performance measure used by the critic, plus a measure of the temporal difference (TD) error inherent in the function approximation scheme~\citep{Konda00AA,Bhatnagar09NA}. 

Existing AC algorithms are based on parametric critics that are updated to optimize frequentist fitness criteria. By ``frequentist'' we mean algorithms that return a point estimate of the value function, rather than a complete posterior distribution computed using Bayes' rule. A Bayesian class of critics based on Gaussian processes (GPs) has been proposed by~\citet{Engel03BM,Engel05RL}, called Gaussian process temporal difference (GPTD). By their Bayesian nature, these algorithms return a full posterior distribution over value functions. Moreover, while these algorithms may be used to learn a parametric representation for the posterior, they are generally capable of searching for value functions in an infinite-dimensional Hilbert space of functions, resulting in a non-parametric posterior. 

Both conventional and natural policy gradient and actor-critic methods rely on Monte-Carlo (MC) techniques in estimating the gradient of the performance measure. MC estimation is a frequentist procedure, and as such violates the {\em likelihood principle}~\citep{Berger84LP}.\footnote{The likelihood principle states that in a parametric statistical model, all the information about a data sample that is required for inferring the model parameters is contained in the likelihood function of that sample.} Moreover, although MC estimates are unbiased, they tend to suffer from high variance, or alternatively, require excessive sample sizes (see~\citealp{Ohagan87MC} for a discussion). In the case of policy gradient estimation this is exacerbated by the fact that consistent policy improvement requires multiple gradient estimation steps.

In~\citet{Ohagan91BQ} a Bayesian alternative to MC estimation is proposed.\footnote{\citet{Ohagan91BQ} mentions that this approach may be traced even as far back as~\citet{Poincare1896CP}.} The idea is to model integrals of the form $\int f(x) g(x) dx$ as GPs. This is done by treating the first term $f$ in the integrand as a {\em random function}, the randomness of which reflects our subjective uncertainty concerning its true identity. This allows us to incorporate our prior knowledge of $f$ into its prior distribution. Observing (possibly noisy) samples of $f$ at a set of points $\{x_1,\ldots,x_M\}$ allows us to employ Bayes' rule to compute a posterior distribution of $f$ conditioned on these samples. This, in turn, induces a posterior distribution over the value of the integral.  

In this paper, we first propose a Bayesian framework for policy gradient estimation by modeling the gradient as a GP. Our Bayesian policy gradient (BPG) algorithms use GPs to define a prior distribution over the gradient of the expected return, and compute its posterior conditioned on the observed data. This reduces the number of samples needed to obtain accurate gradient estimates. Moreover, estimates of the natural gradient as well as a measure of the uncertainty in the gradient estimates, namely the gradient covariance, are provided at little extra cost. Additional gains may be attained by learning a transition model of the environment, allowing knowledge transfer between policies. Since our BPG models and algorithms consider complete system trajectories as their basic observable unit, they do not require the dynamics within each trajectory to be of any special form. In particular, it is not necessary for the dynamics to have the Markov property, allowing the resulting algorithms to handle partially observable MDPs, Markov games, and other non-Markovian systems. On the downside, BPG algorithms cannot take advantage of the Markov property when this property is satisfied. To address this issue, we supplement our BPG framework with actor-critic methods and propose an AC algorithm that incorporates GPTD as its critic. However, rather than merely plugging-in our critic into an existing AC algorithm, we show how the posterior moments returned by the GPTD critic allow us to obtain closed-form expressions for the posterior moments of the policy gradient. This is made possible by utilizing the Fisher kernel~\citep{ShaweTaylor04KM} as our prior covariance kernel for the GPTD state-action {\em advantage} values. Unlike the BPG methods, the Bayesian actor-critic (BAC) algorithm takes advantage of the Markov property of the system trajectories and uses individual state-action-reward transitions as its basic observable unit. This helps reduce variance in the gradient estimates, resulting in steeper learning curves when compared to BPG and the classic MC approach.  

It is important to note that a short version of the two main parts of this paper, {\em Bayesian policy gradient} and {\em Bayesian actor-critic}, appeared in~\citet{Ghavamzadeh06BP} and~\citet{Ghavamzadeh07BA}, respectively. This paper extends these conference papers in the following ways:

\begin{itemize}
\item We have included a discussion on using Bayesian Quadrature (BQ) for estimating vector-valued integrals to the paper. This is totally relevant to this work because the gradient of a policy (the quantity that we are interested in estimating using BQ) is a vector-valued integral when the size of the policy parameter vector is more than 1, which is usually the case. This also helps to better see the difference between the two models we propose for BPG. In Model 1, we place a vector-valued Gaussian process (GP) over a component of the gradient integrant, while in Model 2, we put a scalar-valued GP over a different component of the gradient integrant.
\item We describe the BPG and BAC algorithms in more details and show the details of using online sparsification in these algorithms. Moreover, we show how BPG can be extended to partially observable Markov decision processes (POMDPs) along the same lines that the standard PG algorithms can be used in such problems.
\item In comparison to~\citet{Ghavamzadeh06BP}, we report more details of the experiments and more experimental results, especially in using the posterior variance (covariance) of the gradient to select the step size for updating the policy parameters.
\item We include all the proofs in this paper (almost none was reported in the two conference papers), in particular, the proofs of Propositions~\ref{prop:3},~\ref{prop:4},~\ref{prop:5}, and~\ref{prop:UV}. These proofs are important and the proof techniques are novel and definitely useful for the community. The importance of these proofs come from the fact that they show how with the right choice of GP prior (the one that uses the family of Fisher information kernels), we are able to use BQ and have a Bayesian estimate of the gradient integral, while initially everything indicates that BQ cannot be used for the estimation of this integral.
\item We apply the BAC algorithm to two new domains: ``Mountain Car", a 2-dimensional continuous state and 1-dimensional discrete action problem, and ``Ship Steering", a 4-dimensional continuous state and 1-dimensional continuous action problem.
\end{itemize}


\section{Reinforcement Learning, Policy Gradient, and Actor-Critic Methods}
\label{sec:RL}

Reinforcement learning (RL)~\citep{Bertsekas96NP,Sutton98IR} is term describing a class of learning problems in which an agent (or controller) interacts with a dynamic, stochastic, and incompletely known environment (or plant), with the goal of finding an action-selection strategy, or {\em policy}, to optimize some measure of its long-term performance. This interaction is conventionally modeled as a Markov decision process (MDP)~\citep{Puterman94MD}, or if the environmental state is not completely observable, as a partially observable MDP (POMDP)~\citep{Astrom65OC,Smallwood73OC,Kaelbling98PA}. In this work we restrict our attention to the discrete-time MDP setting.

Let $\P(\X)$, $\P(\A)$, and $\P(\Re)$ denote the set of probability distributions on (Borel) subsets of the sets $\X$, $\A$, and $\Re$, respectively. A MDP is a tuple $\left( \X,\A,q,P,P_0 \right)$ where $\X$ and $\A$ are the state and action spaces; $q(\cdot|x,a)\in\P(\Re)$ and $P(\cdot|x,a)\in\P(\X)$ are the probability distributions over rewards and next states when action $a$ is taken at state $x$ (we assume that $q$ and $P$ are stationary); and $P_0(\cdot)\in\P(\X)$ is the probability distribution according to which the initial state is selected. We denote the random variable distributed according to $q(\cdot|x,a)$ as $r(x,a)$ and its mean as $\bar{r}(x,a)$. 

In addition, we need to specify the rule according to which the agent selects an action at each possible state. We assume that this rule is {\em stationary}, i.e.,~does not depend explicitly on time. A stationary policy $\mu(\cdot|x) \in \P(\A)$ is a probability distribution over actions, conditioned on the current state. A MDP controlled by a policy $\mu$ induces a Markov chain over state-action pairs $\vecz_t =(x_t,a_t)\in\Z=\X\times\A $, with a transition probability density 
$P^\mu(\vecz_t|\vecz_{t-1})=P(x_t|x_{t-1},a_{t-1})\mu(a_t|x_t)$, and an initial state density $P_0^\mu(\vecz_0)=P_0(x_0)\mu(a_0|x_0)$. We generically denote by 
$\xi=(\vecz_0,\vecz_1,\ldots,\vecz_T)\in\Xi,\;T\in\{0,1,\ldots, \infty\}$ a path generated by this Markov chain. Note that $\Xi$ is the set of all possible trajectories that can be generated by the Markov chain induced by the current policy $\mu$. The probability (density) of such a path is given by
\begin{equation}
\Pr(\xi;\mu)=P^\mu_0(\vecz_0)\prod_{t=1}^{T}P^\mu(\vecz_t|\vecz_{t-1})=P_0(x_0)\prod_{t=0}^{T-1}\mu(a_t|x_t) P(x_{t+1}|x_t,a_t).
\label{eq:prob_path}
\end{equation}
We denote by $R(\xi)=\sum_{t=0}^{T-1}\gamma^tr(x_t,a_t)$ the (possibly discounted, $\gamma\in [0,1]$) {\em cumulative return} of the path $\xi$. $R(\xi)$ is a random variable both because the path $\xi$ itself is a random variable, and because, even for a given path, each of the rewards sampled in it may be stochastic. The expected value of $R(\xi)$ for a given path $\xi$ is denoted by $\bar{R}(\xi)$. Finally, we define the {\em expected return} of policy $\mu$ as
\begin{equation}
\eta(\mu)= \exptE\big[R(\xi)\big] = \int_\Xi\bar{R}(\xi)\Pr(\xi;\mu)d\xi.
\label{exp-ret}
\end{equation} 
The $t$-step state-action occupancy density of policy $\mu$ is given by
\begin{equation*}
{P_t^\mu}(\vecz_t)=\int_{\Z^t} d\vecz_0 \ldots d\vecz_{t-1} P_0^\mu(\vecz_0) \prod_{i=1}^{t}
P^\mu(\vecz_{i}|\vecz_{i-1}).
\end{equation*}
It can be shown that under certain regularity conditions~\citep{Sutton00PG}, the expected return of policy $\mu$ may be written in terms of state-action pairs (rather than in terms of trajectories as in Equation~\ref{exp-ret}) as 
\begin{equation}
\eta(\mu) = \int_{\Z} d\vecz \pi^\mu(\vecz) \bar{r}(\vecz),
\label{eq:exp-ret2}
\end{equation}
where $\pi^\mu(\vecz)=\sum_{t=0}^\infty \gamma^t {P_t^\mu}(\vecz)$ is a discounted weighting of state-action pairs encountered while following policy $\mu$. Integrating $a$ out of $\pi^\mu(\vecz)=\pi^\mu(x,a)$ results in the corresponding discounted weighting of states encountered by following policy $\mu$: $\nu^\mu(x)=\int_{\A}da\pi^\mu(x,a)$. Unlike $\nu^\mu$ and $\pi^\mu$, $(1-\gamma)\nu^\mu$ and $(1-\gamma)\pi^\mu$ are distributions. They are analogous to the stationary distributions over states and state-action pairs of policy $\mu$ in the undiscounted setting, respectively, since as $\gamma\rightarrow 1$, they tend to these distributions, if they exist. 

Our aim is to find a policy $\mu^*$ that maximizes the expected return, i.e., 
\begin{small}$\mu^*=\argmax_\mu\eta(\mu)$\end{small}. A policy $\mu$ is assessed according to the expected cumulative rewards associated with states $x$ or state-action pairs $\vecz$. For all states 
$x\in\X$ and actions $a\in\A$, the action-value function and the value function of policy $\mu$ are defined as
\begin{equation*}
Q^\mu(\vecz)=\exptE\left[\sum_{t=0}^\infty\gamma^tr(\vecz_t)|\vecz_0=\vecz\right]\quad\quad\text{and}\quad\quad V^\mu(x)=\int_\A da\mu(a|x)Q^\mu(x,a).
\end{equation*}

In policy gradient (PG) methods, we define a class of smoothly parameterized stochastic policies $\big\{\mu(\cdot|x;\vectheta), x \in \X, \vectheta \in \Theta\big\}$. We estimate the gradient of the expected return, defined by Equation~\ref{exp-ret} (or Equation~\ref{eq:exp-ret2}), with respect to the policy parameters $\vectheta$, from the observed system trajectories. We then improve the policy by adjusting the parameters in the direction of the gradient (e.g.,~\citealt{Williams92SS,Marbach98SM,Baxter01IP}). Since in this setting a policy $\mu$ is represented by its parameters $\vectheta$, policy dependent functions such as $\eta(\mu)$, $\Pr\big(\xi;\mu)$, $\pi^\mu(\vecz)$, $\nu^\mu(x)$, $V^\mu(x)$, and $Q^\mu(\vecz)$ may be written as $\eta(\vectheta)$, $\Pr(\xi;\vectheta)$, $\pi(\vecz;\vectheta)$, $\nu(x;\vectheta)$, $V(x;\vectheta)$, and $Q(\vecz;\vectheta)$, respectively. We assume
\begin{assumption}[Differentiability] \label{ass:1}
For any state-action pair $(x,a)$ and any policy parameter $\vectheta\in\Theta$, the policy $\mu(a|x;\vectheta)$ is continuously differentiable in the parameters $\vectheta$.
\end{assumption}
The {\em score function} or {\em likelihood ratio} method has become the most prominent technique for gradient estimation from simulation. It has been first proposed in the 1960's~\citep{Aleksandrov68SO,Rubinstein69SP} for computing performance gradients in i.i.d.~(independently and identically distributed) processes, and was then extended to {\em regenerative} processes including MDPs by~\citet{Glynn86SA,Glynn90LR},~\citet{Reiman86SA,Reiman89SA},~\citet{Glynn95LR}, and to episodic MDPs by~\citet{Williams92SS}. This method estimates the gradient of the expected return with respect to the policy parameters $\vectheta$, defined by Equation~\ref{exp-ret}, 
%
%
using the following equation:\footnote{Throughout the paper, we use the notation $\nabla$ to denote $\nabla_{\vectheta}$ -- the gradient w.r.t.~the policy parameters.}
\begin{equation}
\label{eq:grad}
\nabla\eta(\vectheta)=\int\bar{R}(\xi)\frac{\nabla \Pr(\xi;\vectheta)}{\Pr(\xi;\vectheta)}\Pr(\xi;\vectheta)d\xi.
\end{equation}
%
In Equation~\ref{eq:grad}, the quantity 
$\frac{\nabla \Pr(\xi;\vectheta)}{\Pr(\xi;\vectheta)}=\nabla\log \Pr(\xi;\vectheta)$ is called the (Fisher) score function or likelihood ratio. Since the initial-state distribution $P_0$ and the state-transition distribution $P$ are independent of the policy parameters $\vectheta$, we may write the score function for a path $\xi$ using Equation~\ref{eq:prob_path} as\footnote{To simplify notation, we omit $\vecu$'s dependence on the policy parameters $\vectheta$, and denote $\vecu(\xi;\vectheta)$ as $\vecu(\xi)$ in the sequel.}
\begin{equation}
\label{eq:score}
\vecu(\xi;\vectheta)=\nabla\log \Pr(\xi;\vectheta)=\frac{\nabla \Pr(\xi;\vectheta)}{\Pr(\xi;\vectheta)}=\sum_{t=0}^{T-1}\frac{\nabla\mu(a_t|x_t;\vectheta)}{\mu(a_t|x_t;\vectheta)}=\sum_{t=0}^{T-1}\nabla\log\mu(a_t|x_t;\vectheta).
\end{equation}

Previous work on policy gradient used classical MC to estimate the gradient in Equation~\ref{eq:grad}. These methods generate i.i.d.~sample paths $\xi_1,\ldots,\xi_M$ according to $\Pr(\xi;\vectheta)$, and estimate the gradient $\nabla\eta(\vectheta)$ using the MC estimator
\begin{equation}
\label{eq:grad_MC}
\widehat{\nabla\eta}(\vectheta)=\frac{1}{M}\sum_{i=1}^MR(\xi_i)\nabla\log \Pr(\xi_i;\vectheta)=\frac{1}{M}\sum_{i=1}^MR(\xi_i)\sum_{t=0}^{T_i-1}\nabla\log\mu(a_{t,i}|x_{t,i};\vectheta).
\end{equation}
This is an unbiased estimate, and therefore, by the law of large numbers, 
$\widehat{\nabla\eta}(\vectheta)\rightarrow\nabla\eta(\vectheta)$ as $M$ goes to infinity, with probability one. 

The policy gradient theorem (\citealp[Proposition~1]{Marbach98SM}; \citealp[Theorem~1]{Sutton00PG}; \citealp[Theorem~1]{Konda00AA}) states that the gradient of the expected return, defined by Equation~\ref{eq:exp-ret2}, for parameterized policies satisfying Assumption~\ref{ass:1} is given by
\begin{equation}
\nabla\eta(\vectheta)=\int dxda\;\nu(x;\vectheta)\nabla\mu(a|x;\vectheta)Q(x,a;\vectheta).
\label{eq:policy_grad}
\end{equation}
Observe that if $b:\X\rightarrow\Re$ is an arbitrary function of $x$ (also called a {\em baseline}), then

\vspace{-0.1in}
\begin{small}
\begin{equation*}
\int_\Z dxda\;\nu(x;\vectheta)\nabla\mu(a|x;\vectheta)b(x)=\int_\X dx\;\nu(x;\vectheta)b(x)\nabla\Big(\int_\A da\mu(a|x;\vectheta)\Big)=\int_\X dx\;\nu(x;\vectheta)b(x)\nabla(1)=0,
\end{equation*}
\end{small}
\vspace{-0.1in}

\noindent
and thus, for any baseline $b(x)$, the gradient of the expected return can be written as

\vspace{-0.1in}
\begin{small}
\begin{equation}
\nabla\eta(\vectheta)=\int dxda\;\nu(x;\vectheta)\nabla\mu(a|x;\vectheta)\big(Q(x,a;\vectheta)+b(x)\big)= \int_\Z d\vecz \pi(\vecz;\vectheta)\nabla\log\mu(a|x;\vectheta)\big(Q(\vecz;\vectheta)+b(x)\big).
\label{eq:policy_grad_baseline}
\end{equation}
\end{small}
\vspace{-0.1in}

\noindent
The baseline may be chosen in such a way so as to minimize the variance of the gradient estimates~\citep{Greensmith04VR}. 

Now consider the actor-critic (AC) framework in which the action-value function for a fixed policy $\mu$, $Q^\mu$, is approximated by a learned function approximator. If the approximation is sufficiently good, we may hope to use it in place of $Q^\mu$ in Equations~\ref{eq:policy_grad} and~\ref{eq:policy_grad_baseline}, and still point roughly in the direction of the true gradient.~\citet{Sutton00PG} and~\citet{Konda00AA} showed that if the approximation $\hat{Q}^\mu(\cdot;\vecw)$ with parameter 
$\vecw$ is {\em compatible}, i.e., $\nabla_\vecw\hat{Q}^\mu(x,a;\vecw)=\nabla\log\mu(a|x;\vectheta)$, and if it minimizes the mean squared error
\begin{equation}
\E^\mu(\vecw)=\int_\Z d\vecz\pi^\mu(\vecz)\big[Q^\mu(\vecz)-\hat{Q}^\mu(\vecz;\vecw)\big]^2 
\label{eq:mse1}
\end{equation}
for parameter value $\vecw^*$, then we may replace $Q^\mu$ with $\hat{Q}^\mu(\cdot;\vecw^*)$ in Equations~\ref{eq:policy_grad} and~\ref{eq:policy_grad_baseline}. The second condition means that $\hat{Q}^\mu(\cdot;\vecw^*)$ is the projection of $Q^\mu$ onto the space $\{\hat{Q}^\mu(\cdot;\vecw)|\vecw\in\Re^n\}$, with respect to a $\ell_2$-norm weighted by $\pi^\mu$. 

An approximation for the action-value function, in terms of a linear combination of basis functions, may be written as $\hat{Q}^\mu(\vecz;\vecw)=\vecw^\top\vecpsi(\vecz)$. This approximation is compatible if the $\vecpsi$'s are compatible with the policy, i.e., $\vecpsi(\vecz;\vectheta)=\nabla\log\mu(a|x;\vectheta)$. Note that compatibility is well defined under Assumption~\ref{ass:1}. Let $\E^\mu(\vecw)$ denote the mean squared error
\begin{equation}
\E^\mu(\vecw)=\int_\Z d\vecz\pi^\mu(\vecz)\big[Q^\mu(\vecz)-\vecw^\top\vecpsi(\vecz)-b(x)\big]^2
\label{eq:mse2}
\end{equation} 
of our compatible linear approximation $\vecw^\top\vecpsi(\vecz)$ and an arbitrary baseline $b(x)$. Let $\vecw^*=\argmin_\vecw\E^\mu(\vecw)$ denote the optimal parameter. It can be shown that the value of $\vecw^*$ does not depend on the baseline $b(x)$. As a result, the mean squared-error problems of Equations~\ref{eq:mse1} and~\ref{eq:mse2} have the same solutions (see e.g.,~\citealt{Bhatnagar07IN,Bhatnagar09NA}). It can also be shown that if the parameter $\vecw$ is set to be equal to $\vecw^*$, then the resulting mean squared error $\E^\mu(\vecw^*)$, now treated as a function of the baseline $b(x)$, is further minimized by setting $b(x)=V^\mu(x)$~\citep{Bhatnagar07IN,Bhatnagar09NA}. In other words, the variance in the action-value function estimator is minimized if the baseline is chosen to be the value function itself. 

A convenient and rather flexible choice for a space of policies that ensures compatibility between the policy and the action-value representation is a parametric exponential family 
\begin{equation*}
\mu(a|x;\vectheta)=\frac{1}{Z_\vectheta(x)}\exp\left(\vectheta^\top\vecphi(x,a) \right),
\end{equation*}
where $Z_\vectheta(x)=\int_\A da\exp\big(\vectheta^\top\vecphi(x,a)\big)$ is a normalizing factor, referred to as the {\em partition function}. It is easy to show that $\vecpsi(\vecz)=\vecphi(\vecz)-\exptE_{a|x}\left[\vecphi(\vecz)\right]$, where $\exptE_{a|x}[\cdot]=\int_\A da\mu(a|x;\vectheta)[\cdot]$, and as a result, $\hat{Q}^\mu(\vecz;\vecw^*)=\vecw^{*\top}\big(\vecphi(\vecz)-\exptE_{a|x}[\vecphi(\vecz)]\big)+b(x)$ is a compatible action-value function for this family of policies. Note that $\exptE_{a|x}[\hat{Q}(\vecz;\vecw^*)]=b(x)$, since 
$\exptE_{a|x}\big[\vecphi(\vecz)-\exptE_{a|x}[\vecphi(\vecz)]\big]=0$. This means that if $\hat{Q}^\mu(\vecz;\vecw^*)$ approximates $Q^\mu(\vecz)$, then $b(x)$ must approximate the value function $V^\mu(x)$. The term $\hat{A}^\mu(\vecz;\vecw^*)=\hat{Q}^\mu(\vecz;\vecw^*)-b(x)=\vecw^{*\top}\big(\vecphi(\vecz)-\exptE_{a|x}[\vecphi(\vecz)]\big)$ approximates the {\em advantage function} $A^\mu(\vecz)=Q^\mu(\vecz)-V^\mu(x)$~\citep{Baird93AU}.



\section{Bayesian Quadrature}
\label{sec:BQ}

Bayesian quadrature (BQ)~\citep{Ohagan91BQ} is, as its name suggests, a Bayesian method for evaluating an integral using samples of its integrand. We consider the problem of evaluating the integral
\begin{equation}
\label{eq:integral}
\rho=\int f(x)g(x)dx .
\end{equation}
If $g(x)$ is a probability density function, i.e., $g(x)=p(x)$, this becomes the problem of evaluating the expected value of $f(x)$. A well known frequentist approach to evaluating such expectations is the Monte-Carlo (MC) method. For MC estimation of such expectations, it is typically required that samples $x_1,x_2,\ldots,x_M$ are drawn from $p(x)$.\footnote{If samples are drawn from some other distribution, importance sampling variants of MC may be used.} The integral in Equation~\ref{eq:integral} is then estimated as
\begin{equation}
\label{eq:mc}
\hat\rho_{MC} = \frac{1}{M} \sum_{i=1}^M f(x_i) .
\end{equation}
It is easy to show that $\hat\rho_{MC}$ is an unbiased estimate of $\rho$, with variance that diminishes to zero as $M \rightarrow \infty$. However, as~\citet{Ohagan87MC} points out, MC estimation is fundamentally unsound, as it violates the likelihood principle, and moreover, does not make full use of the data at hand. The alternative proposed in~\citet{Ohagan91BQ} is based on the following reasoning: In the Bayesian approach, $f(\cdot)$ is random simply because it is unknown. We are therefore uncertain about the value of $f(x)$ until we actually evaluate it. In fact, even then, our uncertainty is not always completely removed, since measured samples of $f(x)$ may be corrupted by noise. Modeling $f$ as a Gaussian process (GP) means that our uncertainty is completely accounted for by specifying a Normal prior distribution over functions. This prior distribution is specified by its mean and covariance, and is denoted by $f(\cdot) \sim \N\big(\bar{f}(\cdot),k(\cdot,\cdot)\big)$. This is shorthand for the statement that $f$ is a GP with prior mean and covariance 
\begin{equation}
\exptE\big[f(x)\big] = \bar{f}(x) \quad \text{and} \quad \Cov\big[f(x),f(x')\big]=k(x,x'), \quad \forall x,x'\in\X,
\end{equation} 
respectively. The choice of kernel function $k$ allows us to incorporate prior knowledge on the smoothness properties of the integrand into the estimation procedure. When we are provided with a set of samples $\D_M = \big\{\big(x_i,y(x_i)\big)\big\}_{i=1}^M$, where $y(x_i)$ is a (possibly noisy) sample of $f(x_i)$, we apply Bayes' rule to condition the prior on these sampled values. If the measurement noise is normally distributed, the result is a Normal posterior distribution of $f|\D_M$. The expressions for the posterior mean and covariance are standard:
\begin{align}
\label{eq:f_post} 
\exptE\big[f(x)|\D_M\big] = \bar{f}(x) &+ \veck(x)^\top\matC (\vecy - \bvecf), \nonumber\\ 
\Cov\big[f(x),f(x')|\D_M\big] &= k(x,x') - \veck(x)^\top\matC \veck(x').
\end{align} 
Here and in the sequel, we make use of the definitions:
\begin{table}[!h]
\begin{center}
\begin{tabular}{lll}
$\bvecf=\big(\bar{f}(x_1),\ldots,\bar{f}(x_M)\big)^\top,\hspace{0.2in}$ &  & $\veck(x)=\big(k(x_1,x),\ldots,k(x_M,x)\big)^\top,$ \\
$\vecy=\big( y(x_1),\ldots,y(x_M) \big)^\top,$ & $[\matK]_{i,j}=k(x_i,x_j),\hspace{0.2in}$ & $\matC=\left(\matK+\matSigma \right)^{-1},$
\end{tabular}
\end{center}
\vspace{-0.25in}
\end{table}

\noindent
where $\matK$ is the kernel (or Gram) matrix, and $[\matSigma]_{i,j}$ is the measurement noise covariance between the $i$th and $j$th samples. It is typically assumed that the measurement noise is i.i.d.,~in which case $\matSigma = \sigma^2 \matI$, where $\sigma^2$ is the noise variance and $\matI$ is the (appropriately sized - here $M \times M$) identity matrix. 

Since integration is a linear operation, the posterior distribution of the integral in Equation~\ref{eq:integral} is also Gaussian, and the posterior moments are given by~\citep{Ohagan91BQ}
\begin{align}
\label{eq:rho_post}
\exptE[\rho|\D_M] &= \int \exptE\big[f(x)|\D_M\big] g(x)dx , \nonumber\\ 
\Var[\rho|\D_M] &= \iint \Cov\big[f(x),f(x')|\D_M\big] g(x)g(x')dxdx'.
\end{align} 
Substituting Equation~\ref{eq:f_post} into Equation~\ref{eq:rho_post}, we obtain
\begin{equation*}
\exptE[\rho|\D_M] = \rho_0 + \vecb^\top\matC (\vecy - \bvecf)\hspace{0.25in}\text{and}\hspace{0.25in} \Var[\rho|\D_M] = b_0 - \vecb^\top\matC \vecb, 
\end{equation*}
where we made use of the definitions:
\begin{equation}
\label{eq:q}
\rho_0 = \int \bar{f}(x) g(x)dx\;,\quad\quad\vecb = \int \veck(x) g(x)dx\;,\quad\quad b_0 = \iint k(x,x') g(x)g(x')dxdx'. 
\end{equation}  
Note that $\rho_0$ and $b_0$ are the prior mean and variance of $\rho$, respectively.

\citet{Rasmussen03BM} experimentally demonstrated how this approach, when applied to the evaluation of an expectation, can outperform MC estimation by orders of magnitude in terms of the mean-squared error.
 
In order to prevent the problem from ``degenerating into infinite regress'', as phrased by~\citet{Ohagan91BQ},\footnote{What O'Hagan means by ``degenerating into infinite regress" is simply that if we cannot compute the posterior integrals of Equation~\ref{eq:q} analytically, then we have started with estimating one integral (Equation~\ref{eq:integral}) and ended up with three (Equation~\ref{eq:q}), and if we repeat this process, this can go forever and leave us with infinite integrals to evaluate. Therefore, for Bayesian MC to work, it is crucial to be able to analytically calculate the posterior integrals, and this can be achieved through the way we divide the integrant into two parts and the way we select the kernel function.} we should choose the functions $g$, $k$, and $\bar{f}$ so as to allow us to solve the integrals in Equation~\ref{eq:q} analytically. For example, O'Hagan provides the analysis required for the case where the integrands in Equation~\ref{eq:q} are products of multivariate Gaussians and polynomials, referred to as Bayes-Hermite quadrature. One of the contributions of our work is in providing analogous analysis for kernel functions that are based on the {\em Fisher kernel}~\citep{Jaakkola98EG,ShaweTaylor04KM}.

It is important to note that in MC estimation, samples must be drawn from the distribution $p(x)=g(x)$, whereas in the Bayesian approach, samples may be drawn from arbitrary distributions. This affords us with flexibility in the choice of sample points, allowing us, for instance, to actively design the samples $x_1,\ldots,x_M$ so as to maximize information gain.


\subsection{Vector-Valued Integrals}
\label{subsec:non-scalar-int}

\citet{Ohagan91BQ} treated the case where the integral to be estimated is a scalar-valued integral.  However, in the context of our PG method, it is useful to consider vector-valued integrals, such as the gradient of the expected return with respect to the policy parameters, which we shall study in Section~\ref{sec:BPG}. In the BQ framework, an integral of the form in Equation~\ref{eq:integral} may be vector-valued for one of two possible reasons: either $f$ is a vector-valued GP and $g$ is a scalar-valued function, or $f$ is a scalar-valued GP and $g$ is a vector-valued function. These two possibilities correspond to two very different data-generation models. In the first of these, an $n$-valued function $f(\cdot) = \big(f_1(\cdot), \ldots, f_n(\cdot) \big)^\top$ is sampled from the GP distribution of $f$. This distribution may include correlations between different components of $f$. Hence, in general, to specify the GP prior distribution, one needs to specify not only the covariance kernel of each component $j$ of $f$, $k_{j,j}(x,x') = \Cov\big[f_j(x),f_j(x')\big]$, but also cross-covariance kernels for pairs of different components, $k_{j,\ell}(x,x') = \Cov\big[f_j(x),f_\ell(x')\big]$. Thus, instead of a single kernel function, we now need to specify a matrix of kernel functions.\footnote{Note that to satisfy the symmetry property of the covariance operator, we require that $k_{j,\ell}(x,x') = k_{\ell,j}(x',x) =  k_{\ell,j}(x,x')$, for all $x, x' \in \X$ and $j, \ell \in \{ 1,\ldots, n \} $.}  Similarly, we also need to specify a vector of prior means, consisting of a function for each component: $\bar{f}_j(x) = \exptE\big[f_j(x)\big]$. The distribution of the measurement noise added to $f(x)$ to produce $y(x)$ may also include correlations, requiring us to specify an array of noise covariance matrices $\matSigma_{j,\ell}$. As we show below, the GP posterior distribution is also specified in similar terms. 

In the second model, a scalar-valued function is sampled from the GP prior distribution, which is specified by a single prior mean function and a single prior covariance-kernel function. Gaussian noise may be added, and the result is then multiplied by each of the components of the $n$-valued function $g$ to produce the integrand. This model is significantly simpler, both conceptually and in terms of the number of parameters required to specify it.
To see how a model of the first kind may arise, consider the following example.
\begin{example}
Let $\rho(\vectheta)=\int f(x;\vectheta) g(x)dx$, where $f$ is a scalar GP, parameterized by a vector of parameters $\vectheta$. Its prior mean and covariance functions must therefore depend on $\vectheta$. We denote these dependencies by writing: 
\begin{equation*}
\exptE\big[f(x;\vectheta)\big] = \bar{f}(x;\vectheta), \quad \Cov\big[f(x;\vectheta),f(x';\vectheta)\big]=k(x,x';\vectheta),\quad\forall x,x'\in\X .
\end{equation*} 
We choose $\bar{f}(x;\vectheta)$ and $ k(x,x';\vectheta)$ so as to be once and twice differentiable in $\vectheta$, respectively. Suppose now that we are not interested in estimating $\rho(\vectheta)$, but rather in its gradient with respect to the parameters $\vectheta$: $\nabla_\vectheta \rho(\vectheta) = \int \nabla_\vectheta f(x;\vectheta) g(x)dx$. It may be easily verified that the mean functions and covariance kernels of the vector-valued GP 
$\nabla_\vectheta f(x;\vectheta)$ are given by
\begin{equation*}
\exptE\big[\nabla_\vectheta f(x;\vectheta)\big] = \nabla_\vectheta  \bar{f}(x;\vectheta)\quad\text{\em and}\quad\Cov\big[\partial_{\theta_j} f(x;\vectheta), \partial_{\theta_\ell} f(x';\vectheta)\big] = \partial_{\theta_j} \partial_{\theta_\ell} k(x,x';\vectheta) ,
\end{equation*}
where $\partial_{\theta_j}$ denotes the $j$th component of $\nabla_\vectheta$. \hfill$\Box$
\end{example}

Propositions~\ref{prop:1} and~\ref{prop:2} specify the form taken by the mean and covariance functions of the integral GP under the two models discussed above.

\begin{proposition}[Vector-valued GP] \label{prop:1}
Let $f$ be an $n$-valued GP with mean functions $\bar{f}_j(x)=\exptE\big[f_j(x)\big]$ and covariance functions $k_{j,\ell}(x,x')=\Cov\big[f_j(x),f_\ell(x')\big],\;\forall j,\ell\in\{1,\ldots, n\}$, and let $g$ be a scalar-valued function. Then, the mean and covariance of $\rho$ defined by Equation~\ref{eq:integral} are of the following form:
\begin{equation*}
\exptE\big[\rho_j\big] = \int  \bar{f}_j(x) g(x) dx,\quad\Cov\big[\rho_j, \rho_\ell\big]=\iint k_{j,\ell}(x,x') g(x)g(x')dxdx',\quad\forall j,\ell\in\{1,\ldots, n\}.
\end{equation*}
\end{proposition}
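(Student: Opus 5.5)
The argument uses only the linearity of expectation and bilinearity of covariance, passed through the integral. Since $g$ is a fixed scalar function and the randomness comes only from $f$, the $j$th component of the vector-valued integral in Equation~\ref{eq:integral} is
\begin{equation*}
\rho_j=\int f_j(x)g(x)dx,
\end{equation*}
a linear functional of the scalar random function $f_j$. I will first note this componentwise decomposition, and then treat the mean and the covariance separately, in the same way \citet{Ohagan91BQ} obtains Equation~\ref{eq:rho_post} in the scalar case.

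For the mean, I take expectations of $\rho_j$ and exchange $\exptE$ with $\int dx$ using Fubini's theorem. This gives $\exptE[\rho_j]=\int \exptE[f_j(x)]g(x)dx=\int \bar{f}_j(x)g(x)dx$.

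For the covariance, I write
\begin{equation*}
\rho_j-\exptE[\rho_j]=\int \big(f_j(x)-\bar{f}_j(x)\big)g(x)dx,
\end{equation*}
and likewise for $\rho_\ell$ with integration variable $x'$. The product of the two centered integrals is a double integral over $(x,x')$. Taking expectations and again applying Fubini moves $\exptE$ inside, giving $\exptE\big[(f_j(x)-\bar{f}_j(x))(f_\ell(x')-\bar{f}_\ell(x'))\big]=k_{j,\ell}(x,x')$ by the definition of the cross-covariance kernels. This yields $\Cov[\rho_j,\rho_\ell]=\iint k_{j,\ell}(x,x')g(x)g(x')dxdx'$.

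The only real obstacle is justifying these exchanges of expectation and integration. I would handle it by assuming the natural integrability condition
\begin{equation*}
\int \exptE|f_j(x)|\,|g(x)|dx<\infty,
\end{equation*}
together with the analogous second-moment condition
\begin{equation*}
\iint \sqrt{k_{j,j}(x,x)\,k_{\ell,\ell}(x',x')}\,|g(x)||g(x')|dxdx'<\infty,
\end{equation*}
which bounds $\exptE|(f_j(x)-\bar f_j(x))(f_\ell(x')-\bar f_\ell(x'))|$ via Cauchy--Schwarz. These conditions are implicit in the paper's requirement that $\rho$ be well defined. Optionally, I would also remark that $\rho$ is jointly Gaussian, since it is a limit of linear combinations of jointly Gaussian variables; this is not needed for the stated moment formulas.
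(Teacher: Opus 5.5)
Your proof is correct and follows the paper's approach. The paper only remarks that the result follows from the definition of covariance in terms of expectations and from exchanging GP expectations with integration over $x$, and you carry this out componentwise via Fubini, adding the integrability conditions (and, implicitly, a jointly measurable version of $f$) that the paper leaves unstated.
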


\begin{proposition}[Scalar-valued GP]  \label{prop:2}
Let $f$ be a scalar-valued GP with mean function $\bar{f}(x)=\exptE\big[f(x)\big]$ and covariance function $k(x,x')=\Cov\big[f(x), f(x')\big]$, and let $g$ be an $n$-valued function. Then, the mean and covariance of $\rho$ defined by Equation~\ref{eq:integral} are of the following form:
\begin{equation*}
\exptE\big[\rho_j\big]=\int\bar{f}(x)g_j(x)dx,\quad\Cov\big[\rho_j,\rho_\ell\big]=\iint k(x,x')g_j(x)g_\ell(x')dxdx',\quad\forall j,\ell\in\{1,\ldots, n\}.
\end{equation*}
\end{proposition}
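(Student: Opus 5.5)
The argument is a direct application of linearity of expectation and bilinearity of covariance to the integral $\rho=\int f(x)g(x)\,dx$, now read componentwise. Here $f$ is scalar and $g=(g_1,\ldots,g_n)^\top$. So the $j$th component of $\rho$ is
\begin{equation*}
\rho_j=\int f(x)g_j(x)\,dx ,
\end{equation*}
a linear functional of the single GP $f$. Each $\rho_j$ is therefore well defined as a random variable, and the vector $\rho$ is jointly Gaussian, being a collection of linear functionals of the same Gaussian process. This is the same observation used for Equation~\ref{eq:rho_post} in the scalar case.

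For the mean, I would exchange expectation and integration (Fubini):
\begin{equation*}
\exptE[\rho_j]=\int \exptE[f(x)]\,g_j(x)\,dx=\int \bar f(x)g_j(x)\,dx .
\end{equation*}

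For the covariance, I would write
\begin{equation*}
\Cov[\rho_j,\rho_\ell]=\exptE\big[(\rho_j-\exptE\rho_j)(\rho_\ell-\exptE\rho_\ell)\big],
\end{equation*}
and substitute $\rho_j-\exptE\rho_j=\int (f(x)-\bar f(x))g_j(x)\,dx$ together with the analogous expression for $\ell$ in the variable $x'$. This turns the product into the double integral
\begin{equation*}
\iint (f(x)-\bar f(x))(f(x')-\bar f(x'))\,g_j(x)g_\ell(x')\,dx\,dx' .
\end{equation*}
Moving the expectation inside gives $\iint k(x,x')g_j(x)g_\ell(x')\,dx\,dx'$. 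This mirrors the proof of Proposition~\ref{prop:1}; the only difference is that the cross-structure now comes from the two different weight functions $g_j,g_\ell$ instead of from a cross-covariance kernel $k_{j,\ell}$.

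The only real obstacle is justifying the interchange of expectation and integration. I would assume integrability conditions such as $\int |\bar f||g_j|<\infty$ and $\iint |k(x,x')||g_j(x)||g_\ell(x')|<\infty$. Since $\exptE|f(x)-\bar f(x)|\le k(x,x)^{1/2}$, Tonelli gives absolute integrability of the integrands as soon as $\int k(x,x)^{1/2}|g_j(x)|\,dx<\infty$, and then Fubini applies. The same reasoning covers the posterior: replacing $\bar f,k$ by the posterior moments of Equation~\ref{eq:f_post} yields the posterior mean and covariance of $\rho$.
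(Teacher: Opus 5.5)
Your proposal is correct and follows the same route as the paper. The paper states only that the result follows from the definition of covariance in terms of expectations and from exchanging GP expectation with integration over $x$. Your Fubini/Tonelli conditions supply the justification the paper leaves implicit: the one-point bound handles the mean, and for the covariance the Cauchy--Schwarz bound $\sqrt{k(x,x)\,k(x',x')}$ on the cross term, together with a jointly measurable version of $f$, does the job.
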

The proofs of these two propositions follow straightforwardly from the definition of the covariance operator in terms of expectations, and the order-exchangeability of GP expectations and integration with respect to $x$.

To wrap things up, we need to describe the form taken by the posterior moments of $f$ in the vector-valued GP case.
Using the standard Gaussian conditioning formulas, it is straightforward to show that
\begin{align}
\label{eq:fvec_post} 
\exptE\big[f_j(x)|\D_M\big] &= \bar{f}_j(x) + \sum_{m,m'} \veck_{j,m}(x)^\top \matC_{m,m'} (\vecy_{m'} - \bvecf_{m'}), \nonumber\\ 
\Cov\big[f_j(x),f_\ell(x')|\D_M\big] &= k_{j,\ell}(x,x') - \sum_{m,m'} \veck_{j,m}(x)^\top \matC_{m,m'} \veck_{m',\ell}(x'),
\end{align}
where
\begin{align*}
& \bvecf_j = \big(\bar{f}_j(x_1),\ldots,\bar{f}_j(x_M)\big)^\top,
& \veck_{j,\ell}(x) = \big( k_{j,\ell}(x_1,x),\ldots,k_{j,\ell}(x_M,x) \big)^\top,
\\
& \vecy_\ell = \big( y_\ell(x_1),\ldots,y_\ell(x_M) \big)^\top, 
&[\matK_{j,\ell}]_{i,i'} = k_{j,\ell}(x_i,x_{i'}),
\end{align*}
\[
\matK =
\left[
\begin{array}{lll}
\matK_{1,1} & \ldots & \matK_{1,n} \\
\vdots & & \vdots \\
\matK_{n,1} & \ldots & \matK_{n,n} \\
\end{array}
\right],
\quad
\matSigma =
\left[
\begin{array}{lll}
\matSigma_{1,1} & \ldots & \matSigma_{1,n} \\
\vdots & & \vdots \\
\matSigma_{n,1} & \ldots & \matSigma_{n,n} \\
\end{array}
\right],
\quad
\matC = \left(\matK+\matSigma \right)^{-1},
\]
and $\matC_{j,\ell}$ is the $(j,\ell)$th $M \times M$ block of $\matC$. The posterior moments of $f$, given in Equation~\ref{eq:fvec_post}, may now be substituted into the expressions for the moments of the integral $\rho$, given in Proposition~\ref{prop:1}, to produce the posterior moments of $\rho$ in the vector-valued GP case. 

Clearly, models of the first type (vector-valued GP) are potentially richer and more complex than models of the second type (scalar-valued GP), as the latter only requires us to define a single prior mean function, a single prior covariance kernel function, and a single noise covariance matrix; while the former requires us to define $n$ prior mean functions, and $n(n+1)/2$ prior covariance kernel functions and noise covariance matrices. One way to simplify the first type of models is to define a single prior mean function $\bar{f}$, a single prior covariance kernel function $k$,  and to postulate that $\bar{f}_j(x) = \bar{f}(x)$, $k_{j,\ell}(x,x') = \delta_{j,\ell} \, k(x,x')$, and $\matSigma_{j,\ell} = \delta_{j,\ell} \, \matSigma,\;\forall j,\ell\in\{1,\ldots,n\}$, where $\delta$ denotes the Kronecker delta function. Applying these simplifying assumptions to the expressions for the posterior moments (Equation~\ref{eq:fvec_post}) results in a complete decoupling between the posterior moments for the different components of $f$, and consequently a decoupling between the components of the integral $\rho$ as well, since Equation~\ref{eq:fvec_post} becomes
\begin{align}
\label{eq:fvec_post_simplified} 
\exptE\big[f_j(x)|\D_M\big] &= \bar{f}_j(x) +  \veck_{j,j}(x)^\top \matC_{j,j} (\vecy_{j} - \bvecf_{j}), \nonumber\\ 
\Cov\big[f_j(x),f_\ell(x')|\D_M\big] &= \delta_{j,\ell} \left( k_{j,j}(x,x') - \veck_{j,j}(x)^\top \matC_{j,\ell} \veck_{\ell,\ell}(x') \right),
\end{align}
where $\matC_{j,\ell} = \delta_{j,\ell}\left(\matK_{j,\ell}+\matSigma_{j,\ell} \right)^{-1}$. Note that all the terms in Equation~\ref{eq:fvec_post_simplified}, except $\vecy_{j}$, do not depend on the indices $j$ and $\ell$. In other words, these simplifying assumptions amount to assuming that $\rho$ is a vector of $n$ independent integrals, each of which may be estimated individually as 
\begin{align*}
\exptE\big[f_j(x)|\D_M\big] &= \bar{f}(x) + \veck(x)^\top \matC (\vecy_{j} - \bvecf), \nonumber\\ 
\Cov\big[f_j(x),f_\ell(x')|\D_M\big] &= \delta_{j,\ell} \left( k(x,x') - \veck(x)^\top \matC \veck(x') \right),
\end{align*}
where $\matC=(\matK+\matSigma)^{-1}$. It should, however, be kept in mind that ignoring correlations between the components of $f$, when such correlations exist, may result in suboptimal use of the available data (\citealp[see][Chapter 9]{Rasmussen06GP} for references on GP regression with multiple outputs).

\section{Bayesian Policy Gradient}
\label{sec:BPG}

In this section, we use vector-valued Bayesian quadrature to estimate the gradient of the expected return with respect to the policy parameters, allowing us to propose new {\em Bayesian policy gradient} (BPG) algorithms. In the frequentist approach to policy gradient, the performance measure used is $\eta(\vectheta)=\int \bar{R}(\xi) \Pr(\xi;\vectheta) d\xi$ (Equation~\ref{exp-ret}). In order to serve as a useful performance measure, it has to be a deterministic function of the policy parameters $\vectheta$. This is achieved by averaging the cumulative return $R(\xi)$ over all possible paths $\xi$ and all possible returns accumulated in each path. In the Bayesian approach we have an additional source of randomness, namely, our subjective Bayesian uncertainty concerning the process generating the cumulative return. Let us denote 
\begin{equation}
\eta_B(\vectheta) =
\int\bar{R}(\xi) \Pr(\xi;\vectheta) d\xi,
\label{eq:eta_bayes}
\end{equation}
where $\eta_B(\vectheta)$ is a random variable because of the Bayesian uncertainty. Under the quadratic loss, the optimal Bayesian performance measure is the posterior expected value of $\eta_B(\vectheta)$, $\exptE\big[\eta_B(\vectheta)|\D_M\big]$. However, since we are interested in optimizing the performance rather than evaluating it,\footnote{Although evaluating the posterior distribution of performance is an interesting question in its own right.} we would rather evaluate the posterior distribution of the {\em gradient} of $\eta_B(\vectheta)$ with respect to the policy parameters $\vectheta$. The posterior mean of the gradient is \footnote{We may interchange the order of the gradient and the expectation operators for the mean, $\nabla \exptE\big[\eta_B(\vectheta)\big]=\exptE\big[\nabla \eta_B(\vectheta)\big]$, but the same is not true for the variance, namely,
$\nabla\Var\big[\eta_B(\vectheta)\big]\neq\Cov\big[\nabla \eta_B(\vectheta)\big]$.}     
\begin{equation}
\label{eq:grad:Bayes}
\nabla\exptE\big[\eta_B(\vectheta)|\D_M\big]=\exptE\big[\nabla\eta_B(\vectheta)|\D_M\big]=\exptE\left[\int R(\xi)\frac{\nabla \Pr(\xi;\vectheta)}{\Pr(\xi;\vectheta)}\Pr(\xi;\vectheta)d\xi \Big| \D_M\right].
\end{equation}
Consequently, in BPG we cast the problem of estimating the gradient of the expected return (Equation~\ref{eq:grad:Bayes}) in the form of Equation~\ref{eq:integral}. As described in Section~\ref{sec:BQ}, we need to partition the integrand into two parts, $f(\xi;\vectheta)$ and $g(\xi;\vectheta)$. We will model $f$ as a GP and assume that $g$ is a function known to us. We will then proceed by calculating the posterior moments of the gradient  $\nabla\eta_B(\vectheta)$ conditioned on the observed data. 
Because in general, $R(\xi)$ cannot be known exactly, even for a given $\xi$ (due to the stochasticity of the rewards), $R(\xi)$ should always belong to the GP part of the model, i.e., $f(\xi;\vectheta)$. Interestingly, in certain cases it is sufficient to know the Fisher information matrix corresponding to $\Pr(\xi;\vectheta)$, rather than having exact knowledge of $\Pr(\xi;\vectheta)$ itself. We make use of this fact in the sequel. In the next two sections, we investigate two different ways of partitioning the integrand in Equation~\ref{eq:grad:Bayes}, resulting in two distinct Bayesian policy gradient models. 


\subsection{Model 1 -- Vector-Valued GP}
\label{subsec:M1}

In our first model, we define $g$ and $f$ as follows:
\begin{equation*}
g(\xi;\vectheta)=\Pr(\xi;\vectheta) \quad,\quad f(\xi;\vectheta)=\bar{R}(\xi)\frac{\nabla \Pr(\xi;\vectheta)}{\Pr(\xi;\vectheta)}=\bar{R}(\xi)\nabla\log \Pr(\xi;\vectheta).
\end{equation*}  
We place a vector-valued GP prior over $f(\xi;\vectheta)$ which induces a GP prior over the corresponding noisy measurement $y(\xi;\vectheta)=R(\xi) \nabla\log\Pr(\xi;\vectheta)$. We adopt the simplifying assumptions discussed at the end of Section~\ref{subsec:non-scalar-int}: We assume that each component of $f(\xi;\vectheta)$ may be evaluated independently of all other components, and use the same kernel function and noise covariance for all components of $f(\xi;\vectheta)$. We therefore omit the component index $j$ from $\matK_{j,j}$, $\matSigma_{j,j}$ and $\matC_{j,j}$, denoting them simply as $\matK$, $\matSigma$ and $\matC$, respectively.
Hence, for the $j$th component of $f$ and $y$ we have, a priori
\begin{align*}
\vecf_j &= \big( f_j(\xi_1;\vectheta),\ldots,f_j(\xi_M;\vectheta) \big)^\top \sim \N(\vec0,\matK), \\
\vecy_j &= \big( y_j(\xi_1;\vectheta),\ldots,y_j(\xi_M;\vectheta) \big)^\top \sim \N(\vec0,\matK+\matSigma).
\end{align*}
In this vector-valued GP model, the posterior mean and covariance of $\nabla\eta_B(\vectheta)$ are
\begin{equation}
\label{model1_posteriors}
\exptE\big[\nabla\eta_B(\vectheta)|\D_M\big] = \matY \matC \vecb \quad\quad \text{and} \quad\quad \Cov\big[\nabla\eta_B(\vectheta)|\D_M\big] = \left(b_0 - \vecb^\top \matC \vecb \right) \matI,
\end{equation} 
respectively, where
\begin{equation} \label{integrals1}
\matY = \left[
\begin{array}{l}
	\vecy_1^\top \\
	\vdots \\
	\vecy_n^\top \\
\end{array}
\right], \quad
\vecb=\int\veck(\xi)\Pr(\xi;\vectheta)d\xi\;, \quad \text{and} \quad b_0=\iint k(\xi,\xi')\Pr(\xi;\vectheta)\Pr(\xi';\vectheta)d\xi d{\xi'}.
\end{equation} 

Our choice of kernel, which allows us to derive closed-form expressions for $\vecb$ and $b_0$, and as a result for the posterior moments of the gradient, is the quadratic Fisher kernel~\citep{Jaakkola98EG,ShaweTaylor04KM}
\begin{equation}
\label{kernel1}
k(\xi_i,\xi_j)=\left(1+\vecu(\xi_i)^\top\matG(\vectheta)^{-1}\vecu(\xi_j)\right)^2,
\end{equation} 
where $\vecu(\xi)=\nabla\log\Pr(\xi;\vectheta)$ is the Fisher score function of the path $\xi$ defined by Equation~\ref{eq:score}, and $\matG(\vectheta)$ is the corresponding Fisher information matrix defined as\footnote{To simplify notation, we omit  $\matG$'s dependence on the policy parameters $\vectheta$, and denote $\matG(\vectheta)$ as $\matG$ in the sequel.}
\begin{equation}
\label{Fisher:Matrix}
\matG(\vectheta)=\exptE\big[\vecu(\xi)\vecu(\xi)^\top\big]=\int\vecu(\xi)\vecu(\xi)^\top\Pr(\xi;\vectheta)d\xi.
\end{equation}
\begin{proposition}\label{prop:3}
Using the quadratic Fisher kernel from Equation~\ref{kernel1}, the integrals $\vecb$ and $b_0$ in Equation~\ref{integrals1} have the following closed form expressions
\begin{equation*}
(\vecb)_i=1+\vecu(\xi_i)^\top\matG^{-1}\vecu(\xi_i)\hspace{0.5in}\text{and}\hspace{0.5in}b_0=1+n.
\end{equation*}
\end{proposition}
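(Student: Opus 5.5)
The plan is to expand the quadratic Fisher kernel and integrate term by term against $\Pr(\xi;\vectheta)$. Two facts about the score function carry the whole computation. The first is that the score has zero mean: $\exptE[\vecu(\xi)]=\int\nabla\Pr(\xi;\vectheta)d\xi=\nabla\int\Pr(\xi;\vectheta)d\xi=\nabla 1=\vec0$. This is justified by Assumption~\ref{ass:1} together with the usual regularity allowing differentiation under the integral. The second is that the second moment of the score is $\matG$, by the definition in Equation~\ref{Fisher:Matrix}.

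For $(\vecb)_i$, I would write
\[
k(\xi_i,\xi)=1+2\vecu(\xi_i)^\top\matG^{-1}\vecu(\xi)+\big(\vecu(\xi_i)^\top\matG^{-1}\vecu(\xi)\big)^2 .
\]
Integrating against $\Pr(\xi;\vectheta)$ handles each term separately.
\begin{itemize}
\item The constant term gives $1$.
\item The linear term vanishes, because the score has zero mean.
\item For the quadratic term, rewrite it as $\vecu(\xi_i)^\top\matG^{-1}\vecu(\xi)\vecu(\xi)^\top\matG^{-1}\vecu(\xi_i)$. Its expectation is $\vecu(\xi_i)^\top\matG^{-1}\matG\matG^{-1}\vecu(\xi_i)=\vecu(\xi_i)^\top\matG^{-1}\vecu(\xi_i)$.
\end{itemize}
Summing the three terms gives the claimed $(\vecb)_i$.

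For $b_0$, I would reuse this result. Since $k(\xi,\xi')$ is integrated over both arguments, integrate over $\xi'$ first; by the computation above this yields $1+\vecu(\xi)^\top\matG^{-1}\vecu(\xi)$. Then integrate over $\xi$ using the trace trick:
\[
\exptE\big[\vecu^\top\matG^{-1}\vecu\big]=\mathrm{tr}\big(\matG^{-1}\exptE[\vecu\vecu^\top]\big)=\mathrm{tr}(\matI_n)=n ,
\]
where $n$ is the dimension of $\vectheta$. This gives $b_0=1+n$. Exchanging the order of integration is harmless because the integrand is nonnegative.

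The calculation itself is routine. The only delicate step is the zero-mean property of the score, which needs differentiation under the integral sign, and implicitly requires $\matG$ to be invertible for the kernel to be defined. I would state both as standing regularity assumptions rather than prove them.
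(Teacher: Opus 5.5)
Your proposal is correct and follows the paper's proof essentially step for step: you expand the quadratic Fisher kernel, kill the linear term with the zero-mean score, and collapse the quadratic term using $\matG$ (for $b_0$ you reuse the $(\vecb)_i$ computation rather than re-expanding, which amounts to the same thing). The only cosmetic difference is the last step for $b_0$, where your trace identity $\exptE[\vecu^\top\matG^{-1}\vecu]=\mathrm{tr}(\matG^{-1}\matG)=n$ replaces the paper's eigendecomposition $\matG=\matV\matLambda\matV^\top$; this is the same computation written more compactly.
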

\begin{proof}
See Appendix~\ref{sec:proofP3}.
\end{proof}


\subsection{Model 2 -- Scalar-Valued GP}
\label{subsec:M2}

In our second model, we define $g$ and $f$ as follows:
\begin{equation*}
g(\xi;\vectheta)=\nabla\Pr(\xi;\vectheta) \quad , \quad f(\xi)=\bar{R}(\xi).
\end{equation*} 
Now $g$ is a vector-valued function, while $f$ is a scalar valued GP representing the expected return of the path given as its argument. The noisy measurement corresponding to $f(\xi_i)$ is $y(\xi_i) = R(\xi_i)$, namely, the {\em actual} return accrued while following the path $\xi_i$. In this model, the posterior mean and covariance of the gradient $\nabla\eta_B(\vectheta)$ are 
\begin{equation}
\label{model2_posteriors}
\exptE\big[\nabla\eta_B(\vectheta)|\D_M\big] = \matB \matC \vecy \quad\quad \text{and} \quad\quad \Cov\big[\nabla\eta_B(\vectheta)|\D_M\big] = \matB_0-\matB \matC \matB^\top,
\end{equation}
respectively, where
\begin{align} \label{integrals2}
\vecy &= \big( R(\xi_1), \ldots, R(\xi_M) \big)^\top, \quad\quad\quad\quad\quad\quad\quad\quad\quad\quad \matB = \int \nabla\Pr(\xi;\vectheta) \veck(\xi)^\top d\xi\;, \nonumber\\
\matB_0 &= \iint k(\xi,\xi')\nabla \Pr(\xi;\vectheta)\nabla\Pr(\xi';\vectheta)^\top d\xi d{\xi'}.
\end{align}

Here, our choice of kernel function, which again allows us to derive closed-form expressions for $\matB$ and $\matB_0$, is the Fisher kernel~\citep{Jaakkola98EG,ShaweTaylor04KM}
\begin{equation}
\label{kernel2}
k(\xi,\xi') = \vecu(\xi)^\top\matG^{-1}\vecu(\xi').
\end{equation} 
\begin{proposition}\label{prop:4}
Using the Fisher kernel from Equation~\ref{kernel2}, the integrals $\matB$ and $\matB_0$ in Equation~\ref{integrals2} have the following closed-form expressions
\begin{equation*}
\matB=\matU \quad\quad \text{and} \quad\quad \matB_0=\matG,
\end{equation*} 
{\em where} $\matU=\big[\vecu(\xi_1), \ldots ,\vecu(\xi_M) \big]$.
\end{proposition}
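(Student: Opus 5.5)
The plan is to substitute the Fisher kernel of Equation~\ref{kernel2} into the definitions in Equation~\ref{integrals2}. Then we use the identity $\nabla\Pr(\xi;\vectheta)=\Pr(\xi;\vectheta)\vecu(\xi)$, which follows from Equation~\ref{eq:score}, to turn every integral into an expectation under $\Pr(\xi;\vectheta)$ of a product of score functions. The only nontrivial input is the definition of the Fisher information matrix in Equation~\ref{Fisher:Matrix}, namely $\matG=\int\vecu(\xi)\vecu(\xi)^\top\Pr(\xi;\vectheta)d\xi$. We assume, as the statement implicitly does, that $\matG$ is invertible.

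For $\matB$, its $i$th column is $\int\nabla\Pr(\xi;\vectheta)k(\xi,\xi_i)d\xi$. With the kernel written as $k(\xi,\xi_i)=\vecu(\xi)^\top\matG^{-1}\vecu(\xi_i)$, this column becomes
\[
\int\Pr(\xi;\vectheta)\vecu(\xi)\vecu(\xi)^\top d\xi\,\matG^{-1}\vecu(\xi_i)=\matG\matG^{-1}\vecu(\xi_i)=\vecu(\xi_i).
\]
Here we pulled the constant factor $\matG^{-1}\vecu(\xi_i)$ outside the integral. Stacking the columns over $i=1,\ldots,M$ gives $\matB=\matU$.

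For $\matB_0$, we substitute in the same way:
\[
\matB_0=\iint\Pr(\xi)\Pr(\xi')\,\vecu(\xi)\big(\vecu(\xi)^\top\matG^{-1}\vecu(\xi')\big)\vecu(\xi')^\top d\xi\,d\xi'.
\]
The scalar kernel value can be moved between the two outer vectors, so the integrand factorizes as $\big(\vecu(\xi)\vecu(\xi)^\top\big)\matG^{-1}\big(\vecu(\xi')\vecu(\xi')^\top\big)$. The double integral then splits into a product of two single integrals, each equal to $\matG$. This gives $\matB_0=\matG\matG^{-1}\matG=\matG$.

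The main point needing care is this regrouping step, which relies on the associativity of $\vecu\,(\vecu^\top\matG^{-1}\vecu')\,\vecu'^\top=(\vecu\vecu^\top)\matG^{-1}(\vecu'\vecu'^\top)$. The other is justifying the Fubini-type separation of the integrals, which requires that $\matG$ be finite, i.e., that the score have finite second moments under $\Pr(\xi;\vectheta)$. Once these are accepted, both identities follow directly from the definition of $\matG$.
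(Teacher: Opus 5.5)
Your proposal is correct and follows essentially the same route as the paper's proof in Appendix~\ref{sec:proofP4}. You substitute the Fisher kernel, replace $\nabla\Pr(\xi;\vectheta)$ by $\vecu(\xi)\Pr(\xi;\vectheta)$, regroup the scalar kernel value, and recognize each resulting integral as $\matG$, which gives $\matG\matG^{-1}\vecu(\xi_i)$ and $\matG\matG^{-1}\matG$; your remarks on invertibility of $\matG$ and finite second moments of the score are regularity conditions the paper leaves implicit.
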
  
\begin{proof}
See Appendix~\ref{sec:proofP4}.
\end{proof}

Table~\ref{tab:models} summarizes the two BPG models presented in Sections~\ref{subsec:M1} and~\ref{subsec:M2}. Our choice of Fisher-type kernels was motivated by the notion that a good representation should depend on the process generating the data (see~\citealp{Jaakkola98EG,ShaweTaylor04KM}, for a thorough discussion). Our particular selection of linear and quadratic Fisher kernels were guided by the desideratum that the posterior moments of the gradient be analytically tractable as discussed in Section~\ref{sec:BQ}. 

\begin{table}
\begin{center}
\begin{tabular}{|l|l|l|}
\hline
& {\bf Model 1} & {\bf Model 2} \\
\hline
Deter. factor ($g$)
& $g(\xi;\vectheta)=\Pr(\xi;\vectheta)$ 
& $g(\xi;\vectheta)=\nabla\Pr(\xi;\vectheta)$ \\
GP factor ($f$)
& $f(\xi;\vectheta)=\bar{R}(\xi)\nabla\log\Pr(\xi;\vectheta)$ 
& $f(\xi)=\bar{R}(\xi)$ \\
Measurement ($y$)
& $y(\xi;\vectheta) = R(\xi)\nabla\log\Pr(\xi;\vectheta)$ 
& $y(\xi) = R(\xi)$\\
Prior mean of $f$ 
& $\exptE\big[f_j(\xi;\vectheta)\big] = 0$ 
& $\exptE\big[f(\xi)\big] = 0$ \\
Prior Cov. of $f$ 
& $\Cov\big[f_j(\xi;\vectheta),f_\ell(\xi';\vectheta)\big] = \delta_{j,\ell} \, k(\xi,\xi')$ 
& $\Cov\big[f(\xi),f(\xi')\big] = k(\xi,\xi')$\\
Kernel function 
& $k(\xi,\xi') = \big( 1+\vecu(\xi)^\top \matG^{-1} \vecu(\xi') \big)^2$
& $k(\xi,\xi') = \vecu(\xi)^\top \matG^{-1} \vecu(\xi')$\\
$\exptE\big[\nabla\eta_B(\vectheta)|\D_M\big]=$ 
& $\matY\matC\vecb$
& $\matB\matC\vecy$\\
$\Cov\big[\nabla\eta_B(\vectheta)|\D_M\big]=$ 
& $(b_0-\vecb^\top \matC \vecb) \matI$
& $\matB_0 - \matB \matC \matB^\top$\\
$\vecb$ or $\matB$ 
& $(\vecb)_i=1+\vecu(\xi_i)^\top \matG^{-1} \vecu(\xi_i)$ 
& $\matB=\matU$  \\
$b_0$ or $\matB_0$
& $b_0 = 1+n$ 
& $\matB_0 = \matG$\\
\hline
\end{tabular}
\end{center}
\caption{Summary of the Bayesian policy gradient Models 1 and 2.}
\label{tab:models}
\end{table}

As described above, in either model, we are restricted in the choice of kernel (quadratic Fisher kernel in Model~1~and Fisher kernel in Model~2) in order to be able to derive closed-form expressions for the posterior mean and covariance of the gradient integral. The loss due to this restriction depends on the problem at hand and is hard to quantify. This loss is exactly the loss of selecting an inappropriate prior in any Bayesian algorithm or, more generally, of choosing a wrong representation (function space) in a machine learning algorithm (referred to as {\em approximation error} in approximation theory). However, the experimental results of Section~\ref{sec:BPG-experiments} indicate that this restriction did not cause a significant error (especially for Model~1) in our gradient estimates, as those estimated by BPG were more accurate than the ones estimated by the MC-based method, given the same number of samples.


\subsection{A Bayesian Policy Gradient Evaluation Algorithm}
\label{subsec:BPG_eval_alg}

We can now use our two BPG models to define corresponding algorithms for evaluating the gradient of the expected return with respect to the policy parameters. Pseudo-code for these algorithms is shown in Algorithm~\ref{alg:BPG-Eval}. The generic algorithm (for either model) takes a set of policy parameters $\vectheta$ and a sample size $M$ as input, and returns an estimate of the posterior moments of the gradient of the expected return with respect to the policy parameters. This algorithm generates $M$ sample paths to evaluate the gradient. For each path $\xi_i$, the algorithm first computes its score function $\vecu(\xi_i)$ (Line 6). The score function is needed for computing the kernel function $k$, the measurement $\vecy$ in Model 1, and $\vecb$ or $\matB$. The algorithm then computes the return $R$ and the measurement $y(\xi_i)$ for the observed path $\xi_i$ (Lines 7 and 9), and updates the kernel matrix $\matK$ (Line 8) using 
\begin{equation}
\label{K_dict}
\matK := 
\begin{bmatrix}
\matK & \veck(\xi_i) \\
\veck^\top(\xi_i) & k(\xi_i,\xi_i) 
\end{bmatrix} .
\end{equation}
Finally, the algorithm adds the measurement error $\matSigma$ to the covariance matrix $\matK$ (Line 12) and computes the posterior moments of the policy gradient (Line 14). $\matB(:,i)$ on Line 10 denotes the $i$th column of the matrix $\matB$.

\begin{algorithm}
\begin{algor}[1]
\item [{*}]{\bf BPG\_Eval}$(\vectheta,M)$ \\
\hspace*{0.25in}$\bullet\;\;$sample size $M>0$\\
\hspace*{0.25in}$\bullet\;\;$a vector of policy parameters $\vectheta\in\Re^n$ \\ 
\item [{*}]Set $\matG=\matG(\vectheta)\;\;\;,\;\;\;\D=\emptyset$
\item [for]$i=1$ to $M$
\item [{*}]Sample a path $\xi_i$ using the policy $\mu(\vectheta)$
\item [{*}]$\D := \D \bigcup $\{$\xi_i$\}
\item [{*}]$\vecu(\xi_i)=\sum_{t=0}^{T_i-1}\nabla\log\mu(a_{t,i}|x_{t,i};\vectheta)$ \\
\item [{*}]$R(\xi_i)=\sum_{t=0}^{T_i-1}r(x_{t,i},a_{t,i})$
\item [{*}]Update $\matK$ using Equation~\ref{K_dict}
\item [{*}]$y(\xi_i)=R(\xi_i)\vecu(\xi_i)$\hspace{0.85in}(Model 1)\hspace{0.275in}or\hspace*{0.275in}$y(\xi_i)=R(\xi_i)$\hfill(Model 2) \\
\item [{*}]$(\vecb)_i=1+\vecu(\xi_i)^\top\matG^{-1}\vecu(\xi_i)$\hspace{0.275in}(Model 1)\hspace*{0.275in}or\hspace{0.275in}$\matB(:,i)=\vecu(\xi_i)$\hfill(Model 2) \\
\item [endfor]
\item [{*}]$\matC=(\matK+\matSigma)^{-1}$
\item [{*}]$b_0=1+ n$ \hspace{1.45in}(Model 1)\hspace*{0.275in}or\hspace{0.275in}$\matB_0=\matG$ \hfill(Model 2) \\
\item [{*}]Compute the posterior mean and covariance of the policy gradient \\
\hspace*{0.1in}$\exptE\big(\nabla\eta_B(\vectheta)|\D\big)=\matY\matC\vecb\;$\hspace{0.25in},\hspace{0.25in}$\Cov\big(\nabla\eta_B(\vectheta)|\D\big)=(b_0-\vecb^\top\matC\vecb)\matI$\hfill(Model 1)\\ 
or \\
\hspace*{0.1in}$\exptE\big(\nabla\eta_B(\vectheta)|\D\big)=\matB\matC\vecy\;$\hspace{0.25in},\hspace{0.25in}$\Cov\big(\nabla\eta_B(\vectheta)|\D\big)=\matB_0-\matB\matC\matB^\top$\hfill(Model 2) \\
\item [{*}]{\bf return}\hspace{0.12in} $\exptE\big(\nabla\eta_B(\vectheta)|\D\big)\;\;\;$ and $\;\;\;\Cov\big(\nabla\eta_B(\vectheta)|\D\big)$
\end{algor}
\caption{\label{alg:BPG-Eval}A Bayesian Policy Gradient Evaluation Algorithm}
\end{algorithm}

The kernel functions used in Models~1 and~2 (Equations~\ref{kernel1} and~\ref{kernel2}) are both based on the Fisher kernel. Computing the Fisher kernel requires calculating the Fisher information matrix $\matG(\vectheta)$ (Equation~\ref{Fisher:Matrix}). Consequently, every time we update the policy parameters, we need to recompute $\matG$. In Algorithm~\ref{alg:BPG-Eval} we assume that the Fisher information matrix is known. However, in most practical situations this will not be the case, and consequently the Fisher information matrix must be estimated. Let us briefly outline two possible approaches for estimating the Fisher information matrix in an online manner.

\begin{paragraph}{1) Monte-Carlo Estimation:} 
The BPG algorithm generates a number of sample paths using the current policy parameterized by $\vectheta$ in order to estimate the gradient $\nabla\eta_B(\vectheta)$. We can use these generated sample paths to estimate the Fisher information matrix $\matG(\vectheta)$ in an online manner, by replacing the expectation in $\matG$ with empirical averaging as $\hat{\matG}_M(\vectheta)=\frac{1}{M}\sum_{i=1}^M\vecu(\xi_i)\vecu(\xi_i)^\top$. It can be shown that $\hat{\matG}_M$ is an unbiased estimator of $\matG$. One may obtain this estimate recursively $\hat{\matG}_{i+1}=(1-\frac{1}{i})\hat{\matG}_i+\frac{1}{i}\vecu(\xi_i)\vecu(\xi_i)^\top$, or more generally $\hat{\matG}_{i+1}=(1-\zeta_i)\hat{\matG}_i+\zeta_i\vecu(\xi_i)\vecu(\xi_i)^\top$, where $\zeta_i$ is a step-size with $\sum_i\zeta_i=\infty$ and $\sum_i\zeta_i^2<\infty$. Using the Sherman-Morrison matrix inversion lemma, it is possible to directly estimate the inverse of the Fisher information matrix as
\begin{equation*}
\hat{\matG}_{i+1}^{-1}=\frac{1}{1-\zeta_i}\left[\hat{\matG}_i^{-1}-\zeta_i\frac{\hat{\matG}_i^{-1}\vecu(\xi_i)\big(\hat{\matG}_i^{-1}\vecu(\xi_i)\big)^\top}{1-\zeta_i+\zeta_i\vecu(\xi_i)^\top\hat{\matG}_i^{-1}\vecu(\xi_i)}\right].
\end{equation*}
\end{paragraph}

\begin{paragraph}{2) Maximum Likelihood Estimation:} 
The Fisher information matrix defined by Equation~\ref{Fisher:Matrix} depends on the probability distribution over paths. This distribution is a product of two factors, one corresponding to the current policy and the other corresponding to the MDP's state-transition probability $P$ (see Equation~\ref{eq:prob_path}). Thus if $P$ is known, the Fisher information matrix may be evaluated offline. We can model $P$ using a parameterized model and then estimate the maximum likelihood (ML) model parameters. This approach may lead to a model-based treatment of policy gradients, which could allow us to transfer information between different policies. Current policy gradient algorithms, including the algorithms described in this paper, are extremely wasteful of training data, since they do not have any {\em disciplined} way to use data collected for previous policy updates in computing the update of the current policy. Model-based policy gradient may help solve this problem. 
\end{paragraph}


\subsection{BPG Online Sparsification}
\label{subsec:BPG-sparsification}

Algorithm~\ref{alg:BPG-Eval} can be made more efficient, both in time and memory, by sparsifying the solution. Such sparsification may be performed incrementally and helps to numerically stabilize the algorithm when the kernel matrix is singular, or nearly so. Sparsification may, in some cases, reduce the accuracy of the solution (the posterior moments of the policy gradient), but  it often makes the algorithms significantly faster, especially for large sample sizes. 
Here we use an online sparsification method proposed by~\citet{Engel02SO} (see also~\citealp{Csato02SO}) to selectively add a new observed path to a set of {\em dictionary} paths $\tilde{\D}$ used as a basis for representing or approximating the full solution. We only add a new path $\xi_i$ to $\tilde{\D}$, if $k(\xi_i,\xi_i)-\tilde{\veck}(\xi_i)^\top\tilde{\matK}^{-1}\tilde{\veck}(\xi_i)>\tau$, where $\tilde{\veck}$ and 
$\tilde{\matK}$ are the dictionary kernel vector and kernel matrix before observing 
$\xi_i$, respectively, and $\tau$ is a positive threshold parameter that determines the level of accuracy in the approximation as well as the level of sparsity attained. If the new path is added to $\tilde{\D}$ the dictionary kernel matrix $\tilde{\matK}$ is expanded as shown in Equation~\ref{K_dict}.

\begin{proposition}\label{prop:5}
Let $\tilde{\matK}$ be the $m \times m$ sparse kernel matrix, where $m\leq M$ is the cardinality of $\tilde{\D}_M$. Let $\matA$ be the $M\times m$ matrix, whose $i$th row is $[\matA]_{i,|\tilde{\D}_i|}=1$ and $[\matA]_{i,j}=0\;;\;\forall j\neq |\tilde{\D}_i|$, if we add the sample path $\xi_i$ to the set of sample paths, and be $\tilde{\veck}(\xi_i)^\top\tilde{\matK}^{-1}$ followed by zeros, otherwise. Finally, let $(\tilde{\vecb})_i=1+\vecu(\xi_i)^\top\matG^{-1}\vecu(\xi_i)$ and $\tilde{\matB}=[\vecu(\xi_1), \ldots, \vecu(\xi_m)]$ with $\xi_i\in\tilde{\D}$. Then, using the sparsification method described above, the posterior moments of the gradient are given by 
\begin{align*}
&\;\;\exptE\big[\nabla\eta_B(\vectheta)|\D_M\big]=\matY\matSigma^{-1}\matA\big(\tilde{\matK}\matA^\top\matSigma^{-1}\matA+\matI\big)^{-1}\tilde{\vecb} \\
&\hspace{4.75in}\text{{\bf for Model 1}} \\
&\;\;\Cov\big[\nabla\eta_B(\vectheta)|\D_M\big]=\Big[(1+n)-\tilde{\vecb}^\top\matA^\top\matSigma^{-1}\matA\big(\tilde{\matK}\matA^\top\matSigma^{-1}\matA+\matI\big)^{-1}\tilde{\vecb}\Big]\matI \\
&\hspace{-0.175in}\text{and} \\
&\;\;\exptE\big[\nabla\eta_B(\vectheta)|\D_M\big]=\tilde{\matB}\big(\matA^\top\matSigma^{-1}\matA\tilde{\matK}+\matI\big)^{-1}\matA^\top\matSigma^{-1}\vecy \\
&\hspace{4.75in}\text{{\bf for Model 2}} \\
&\;\;\Cov\big[\nabla\eta_B(\vectheta)|\D_M\big]=\matG-\tilde{\matB}\big(\matA^\top\matSigma^{-1}\matA\tilde{\matK}+\matI\big)^{-1}\matA^\top\matSigma^{-1}\matA\tilde{\matB}^\top\;.
\end{align*}
%
%
\end{proposition}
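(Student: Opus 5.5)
The plan is to make the standard sparse approximation explicit: replace the full kernel matrix by its low-rank dictionary surrogate, substitute this into the Model 1 and Model 2 posterior formulas (Equations~\ref{model1_posteriors} and~\ref{model2_posteriors}), and then simplify with the Woodbury identity.

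The first step is to establish that, under the sparsification rule, every sample path's feature vector in the RKHS is represented (exactly for dictionary points, approximately for the others) as a linear combination of dictionary features. For a path $\xi_i$ not added to $\tilde{\D}$, the projection coefficients are $\tilde{\veck}(\xi_i)^\top\tilde{\matK}^{-1}$. For a path that is added, the coefficient vector is the unit vector at its own dictionary index $|\tilde{\D}_i|$. Padding with zeros for dictionary elements added later, these coefficients are exactly the rows of $\matA$. Consequently we approximate
\[
\matK \approx \matA\tilde{\matK}\matA^\top
\quad\text{and}\quad
\veck(\xi) \approx \matA\tilde{\veck}(\xi).
\]
Integrating the second relation against $\Pr(\xi;\vectheta)$ gives $\vecb \approx \matA\tilde{\vecb}$, and integrating against $\nabla\Pr(\xi;\vectheta)$ gives $\matB \approx \tilde{\matB}\matA^\top$. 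The closed forms for $\tilde{\vecb}$ and $\tilde{\matB}$ follow from Propositions~\ref{prop:3} and~\ref{prop:4} applied to dictionary points. The prior terms $b_0 = 1+n$ and $\matB_0 = \matG$ are unchanged, since they involve no data.

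The second step is the algebra. We need
\[
\matC = \left(\matA\tilde{\matK}\matA^\top + \matSigma\right)^{-1}.
\]
Rather than inverting this $M\times M$ matrix, I use the push-through identity
\[
\left(\matA\tilde{\matK}\matA^\top + \matSigma\right)^{-1}\matA
= \matSigma^{-1}\matA\left(\tilde{\matK}\matA^\top\matSigma^{-1}\matA + \matI\right)^{-1}.
\]
To verify it, multiply both sides on the left by $\matA\tilde{\matK}\matA^\top + \matSigma$ and on the right by $\tilde{\matK}\matA^\top\matSigma^{-1}\matA + \matI$; both sides then reduce to $\matA\tilde{\matK}\matA^\top\matSigma^{-1}\matA + \matA$.

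Applying this to Model 1:
\[
\matY\matC\vecb = \matY\matC\matA\tilde{\vecb}
= \matY\matSigma^{-1}\matA\left(\tilde{\matK}\matA^\top\matSigma^{-1}\matA + \matI\right)^{-1}\tilde{\vecb}.
\]
Similarly, $\vecb^\top\matC\vecb = \tilde{\vecb}^\top\matA^\top\matC\matA\tilde{\vecb}$, which yields the stated covariance.

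For Model 2 the transposed version of the identity is needed:
\[
\matA^\top\matC = \left(\matA^\top\matSigma^{-1}\matA\tilde{\matK} + \matI\right)^{-1}\matA^\top\matSigma^{-1}.
\]
This follows by transposing the identity above, using the symmetry of $\tilde{\matK}$, $\matSigma$ and $\matC$. Then
\[
\matB\matC\vecy = \tilde{\matB}\matA^\top\matC\vecy
= \tilde{\matB}\left(\matA^\top\matSigma^{-1}\matA\tilde{\matK} + \matI\right)^{-1}\matA^\top\matSigma^{-1}\vecy,
\]
and $\matB\matC\matB^\top = \tilde{\matB}\matA^\top\matC\matA\tilde{\matB}^\top$ gives the covariance.

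I expect the main obstacle to be step one, namely justifying $\vecb \approx \matA\tilde{\vecb}$ and $\matB \approx \tilde{\matB}\matA^\top$. These relations hold only for the sample paths, whereas the integrals run over all $\xi$. The resolution is to define the sparse model consistently: the GP is represented as $f(\cdot) = \tilde{\veck}(\cdot)^\top\vecalpha$ with the data entering only through $\matA$. Under that model, the posterior integrals involve $\int\tilde{\veck}(\xi)\Pr(\xi;\vectheta)\,d\xi = \tilde{\vecb}$ and $\int\nabla\Pr(\xi;\vectheta)\,\tilde{\veck}(\xi)^\top d\xi = \tilde{\matB}$ directly. I will present the result in this form, noting that it coincides with the full solution whenever the threshold $\tau = 0$ and the kernel features are exactly spanned by the dictionary.
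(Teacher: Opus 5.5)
Your proposal is correct and follows essentially the same route as the paper: substitute $\matK\approx\matA\tilde{\matK}\matA^\top$ and $\veck(\cdot)\approx\matA\tilde{\veck}(\cdot)$ into the full posteriors, integrate to obtain $\vecb=\matA\tilde{\vecb}$ (and $\matB=\tilde{\matB}\matA^\top$ for Model 2), leave the prior terms $1+n$ and $\matG$ unchanged, and finish with the push-through identity, which the paper takes from a lemma in Engel's thesis while you verify it directly.

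The obstacle you anticipate at the end is not really one. The relation $\veck(\xi)\approx\matA\tilde{\veck}(\xi)$ comes from replacing the feature images of the sample paths $\xi_i$ by their projections onto the span of the dictionary features, so it holds for every $\xi$ (take inner products with the image of $\xi$). Integrating it against $\Pr(\xi;\vectheta)$ or $\nabla\Pr(\xi;\vectheta)$ is therefore legitimate, and this is exactly what the paper does.
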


\begin{proof}
See Appendix~\ref{sec:proofP5}.
\end{proof}


\subsection{A Bayesian Policy Gradient Algorithm}
\label{subsec:BPG_alg}

So far we were concerned with estimating the gradient of the expected return with respect to the policy parameters. In this section, we present a Bayesian policy gradient (BPG) algorithm that employs the Bayesian gradient estimation methods proposed in Section~\ref{subsec:BPG_eval_alg} to update the policy parameters. The pseudo-code of this algorithm is shown in Algorithm~\ref{alg:BPG}. The algorithm starts with an initial vector of policy parameters $\vectheta_0$, and updates the parameters in the direction of the posterior mean of the gradient of the expected return estimated by Algorithm~\ref{alg:BPG-Eval}. This is repeated $N$ times, or alternatively, until the 
gradient estimate is sufficiently close to zero. 

\begin{algorithm}
\begin{algor}[1]
\item [{*}]{\bf BPG}$(\vectheta_0,\vecbeta,N,M)$ \\
\hspace*{0.25in}$\bullet\;\;$initial policy parameters $\vectheta_0\in\Re^n$ \\
\hspace*{0.25in}$\bullet\;\;$learning rates $\beta_j\;,\;j=0,\ldots,N-1$ \\
\hspace*{0.25in}$\bullet\;\;$number of policy updates $N>0$ \\ 
\hspace*{0.25in}$\bullet\;\;$sample size $M>0$ for the gradient evaluation algorithm ({\bf BPG\_Eval}) \\
\item [for]$j=0$ to $N-1$
\item [{*}]$\Delta\vectheta_j=\exptE\big(\nabla\eta_B(\vectheta_j)|\D_M\big)\;\;$ from $\;\;${\bf BPG\_Eval}$(\vectheta_j,M)$
\item [{*}]$\vectheta_{j+1}=\vectheta_j+\beta_j\Delta\vectheta_j$\hfill(Conventional Gradient) \\
or \\
$\vectheta_{j+1}=\vectheta_j+\beta_j\matG(\vectheta_j)^{-1}\Delta\vectheta_j$\hfill(Natural Gradient) \\
\item [endfor]
\item [{*}]{\bf return} $\vectheta_N$
\end{algor}
\caption{\label{alg:BPG}A Bayesian Policy Gradient Algorithm}
\end{algorithm}


\section{Extension to Partially Observable Markov Decision Processes}
\label{sec:pomdp}

The Bayesian policy gradient models and algorithms of Section~\ref{sec:BPG} can be extended to partially observable Markov decision processes (POMDPs) along the same lines as in Section 6 of~\citet{Baxter01IP}. In the partially observable case, the stochastic parameterized policy $\mu(\cdot|\cdot;\vectheta)$ controls a POMDP, i.e., the policy has access to an observation process that depends on the state, but it may not observe the state itself directly.

Specifically, for each state $x\in\X$, an observation $o\in\O$ is generated independently according to a probability distribution $P_o$ over observations in $\O$. We denote the probability of observation $o$ at state $x$ by $P_o(o|x)$. A stationary stochastic parameterized policy $\mu(\cdot|\cdot;\vectheta)$ is a function mapping observations $o\in\O$ into probability distributions over the actions $\mu(\cdot|o;\vectheta)\in\P(\A)$. In this case, the probability of a path $\xi=(x_0,a_0,x_1,a_1,\ldots,x_{T-1},a_{T-1},x_T)$, $T\in\{0,1,\ldots,\infty\}$ generated by the Markov chain induced by policy $\mu(\cdot|\cdot;\vectheta)$ is given by
\begin{equation*}
\Pr(\xi;\mu)=\Pr(\xi;\vectheta)=P_0(x_0)\int\prod_{t=0}^{T-1}P_o(o_t|x_t)\mu(a_t|o_t;\vectheta)P(x_{t+1}|x_t,a_t)do_0da_0\ldots do_{T-1}da_{T-1}.
\end{equation*}
The Fisher score of this path may be written as 
\begin{equation*}
\vecu(\xi;\vectheta) = \nabla\log\Pr(\xi;\vectheta) = \frac{\nabla\Pr(\xi;\vectheta)}{\Pr(\xi;\vectheta)} = \int\left(\sum_{t=0}^{T-1}\nabla\log\mu(a_t|o_t;\vectheta)\right)do_0da_0\ldots do_{T-1}da_{T-1},
\end{equation*}
which is the same as in the observable case (Equation~\ref{eq:score}), except here the policy is defined over observations instead of states. As a result, the models and algorithms of Section~\ref{sec:BPG} may be used in the partially observable case with no change, substituting observations for states. 

Moreover, similarly to the gradient estimated by the GPOMDP algorithm in~\citet{Baxter01IP}, the gradient estimated by Algorithm~\ref{alg:BPG-Eval}, $\nabla\eta_B(\vectheta)$, may be employed with the conjugate-gradients and line-search methods of~\citet{Baxter01EI} for making better use of gradient information. This allows us to exploit the information contained in the gradient estimate more aggressively than by simply adjusting the parameters by a small amount in the direction of $\nabla\eta_B(\vectheta)$. Conjugate-gradients and line-search are two widely used techniques in non-stochastic optimization that allow us to find better gradient directions than the pure gradient direction, and to obtain better step sizes, respectively.  

Note that in this section, we followed~\citet{Baxter01IP} (the GPOMDP algorithm) and considered stochastic policies that map observations to actions. However, as mentioned by~\citet{Baxter01IP}, it is immediate that the same algorithm works for any finite history of observations. Moreover, along the same way that~\citet{Aberdeen01PG} showed that GPOMDP can be extended to apply to policies with internal state, our BPG POMDP algorithm can also be extended to handle such policies.

\section{BPG Experimental Results}
\label{sec:BPG-experiments}

In this section, we compare the Bayesian quadrature (BQ) and the plain MC gradient estimates on a simple bandit problem as well as on a continuous state and action linear quadratic regulator (LQR). We also evaluate the performance of the Bayesian policy gradient (BPG) algorithm described in Algorithm~\ref{alg:BPG} on the LQR, and compare it with a 
Monte-Carlo based policy gradient (MCPG) algorithm.


\subsection{A Simple Bandit Problem}
\label{subsec:bandit}

The goal of this example is to compare the BQ and MC estimates of the gradient (for some fixed set of policy parameters) using the same sample. Our bandit problem has a single state and a continuous action space $\A=\Re$, thus, each path $\xi_i$ consists of a single action $a_i$. The policy, and therefore also the distribution over the paths is given by $a\sim\N(\theta_1=0,\theta_2^2=1)$. The parameters $\theta_1$ and $\theta_2$ are the mean and the standard deviation of this distribution. The score function of the path $\xi=a$ and the Fisher information matrix for the policy are $\vecu(\xi)=[a,a^2-1]^\top$ and 
$\matG=\text{diag}(1,2)$, respectively.

Table~\ref{tab:bandit} shows the exact gradient of the expected return and its MC and BQ estimates using $10$ and $100$ samples for two instances of the bandit problem corresponding to two different deterministic reward functions $r(a)=a$ and $r(a)=a^2$. The average over $10^4$ runs of the MC and BQ estimates and their standard deviations are reported in Table~\ref{tab:bandit}. The true gradient is analytically tractable and is reported as ``Exact" in Table~\ref{tab:bandit} for reference.

\begin{table}
\centering
\begin{footnotesize}
\begin{tabular}{|c|c|c|c|c|c|} \hline
& Exact & MC (10) & BQ (10) & MC (100) & BQ (100)\\ \hline
$r(a)=a$ & $\begin{pmatrix} 1 \\ 0 \end{pmatrix}$ 
                 & $\begin{pmatrix}0.995\pm0.438 \\ -0.001\pm0.977\end{pmatrix}$ 
                 & $\begin{pmatrix}0.986\pm0.050 \\ 0.001\pm0.060\end{pmatrix}$ 
                 & $\begin{pmatrix}1.000\pm0.140 \\ 0.004\pm0.317\end{pmatrix}$ 
                 & $\begin{pmatrix}1.000\pm0.000001 \\ 0.000\pm 0.000004\end{pmatrix}$ \\ \hline
$r(a)=a^2$ & $\begin{pmatrix} 0 \\ 2 \end{pmatrix}$ 
                     & $\begin{pmatrix}0.014\pm1.246 \\ 2.034\pm2.831\end{pmatrix}$
                     & $\begin{pmatrix}0.001\pm0.082 \\ 1.925\pm0.226\end{pmatrix}$
                     & $\begin{pmatrix}0.005\pm0.390 \\ 1.987\pm0.857\end{pmatrix}$
                     & $\begin{pmatrix}0.000\pm0.000003 \\ 2.000\pm0.000011\end{pmatrix}$ \\ \hline
\end{tabular}
\end{footnotesize} 
\caption{The true gradient of the expected return and its MC and BQ estimates for two instances of the bandit problem corresponding to two different reward functions.}
\label{tab:bandit}
\end{table}

As shown in Table~\ref{tab:bandit}, the variance of the BQ estimates are lower than the variance of the MC estimates by an order of magnitude for the small sample size ($M=10$), and by 6 orders of magnitude for the large sample size ($M=100$). The BQ estimate is also more accurate than the MC estimate for the large sample size, and is roughly the same for the small sample size.
 

\subsection{Linear Quadratic Regulator}
\label{subsec:LQR}

In this section, we consider the following linear system in which the goal is to minimize the expected return over $20$ steps.\footnote{What we mean by reward and return in this section is in fact cost and loss, and this is why we are dealing with a minimization, and not a maximization, problem here. The reason for this is to maintain consistency in notations and definitions throughout the paper.} Thus, it is an episodic problem with paths of length $20$.

\begin{table}
\centering
\begin{tabular}{ll}
{\bf System:} & {\bf Policy:} \\
Initial State: $x_0\sim\N(0.3,0.001)$\hspace{1.1in} & Actions: $a_t\sim\mu(\cdot|x_t;\vectheta)=\N(\lambda x_t,\sigma^2)$ \\
Reward: $r_t=x_t^2+0.1a_t^2$ & Parameters: $\vectheta=(\lambda\;,\;\sigma)^\top$ \\
Transition: $x_{t+1}=x_t+a_t+n_x\;;\;n_x\sim\N(0,0.01)$ & \\
\end{tabular}
\end{table}

We run two sets of experiments on this system. We first fix the set of policy parameters and compare the BQ and MC estimates of the gradient of the expected return using the same sample. We then proceed to solving the complete policy gradient problem and compare the performance of the BPG algorithm (with both conventional and natural gradients) with a Monte-Carlo based policy gradient (MCPG) algorithm.


\subsubsection{Gradient Estimation}
\label{subsec:LQR_grad_est}

In this section, we compare the BQ and MC estimates of the gradient of the expected return for the policy induced by parameters $\lambda=-0.2$ and $\sigma=1$. We use several different sample sizes (number of paths used for gradient estimation) $M=5j\;,\;j=1,\ldots,20$ for the BQ and MC estimates. For each sample size, we compute the MC and BQ estimators using the same sample, repeat this process $10^4$ times, and then compute the average. The true gradient is analytically tractable and is used for comparison purposes. 

Figure~\ref{Fig1-LQR} shows the mean squared error (MSE) (left column) and the mean absolute angular error (right column) of the MC and BQ estimates of the gradient for several different sample sizes. The absolute angular error is the absolute value of the angle between the true and estimated gradients. In this figure, the BQ gradient estimates were calculated using Model 1 (top row) and Model 2 (bottom row) with sparsification. The error bars in the figures on the right column are the standard errors of the mean absolute angular errors.

\begin{figure}
\begin{center}
\includegraphics[height=5cm,width=1\textwidth]{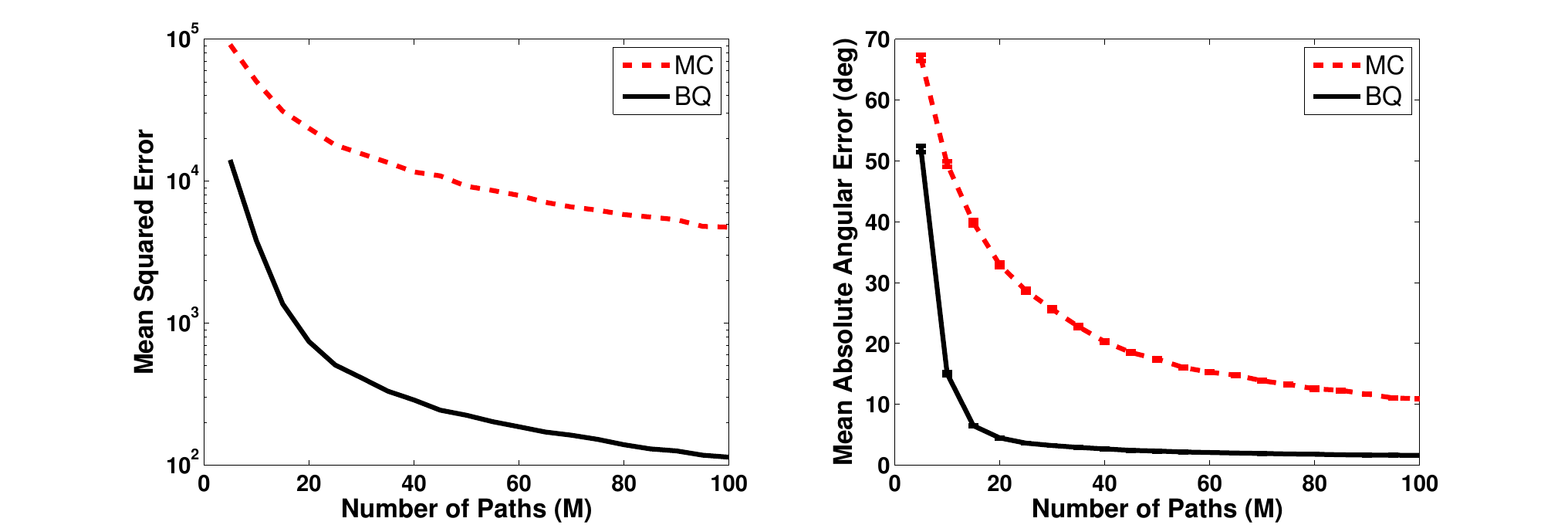} 
\includegraphics[height=5cm,width=1\textwidth]{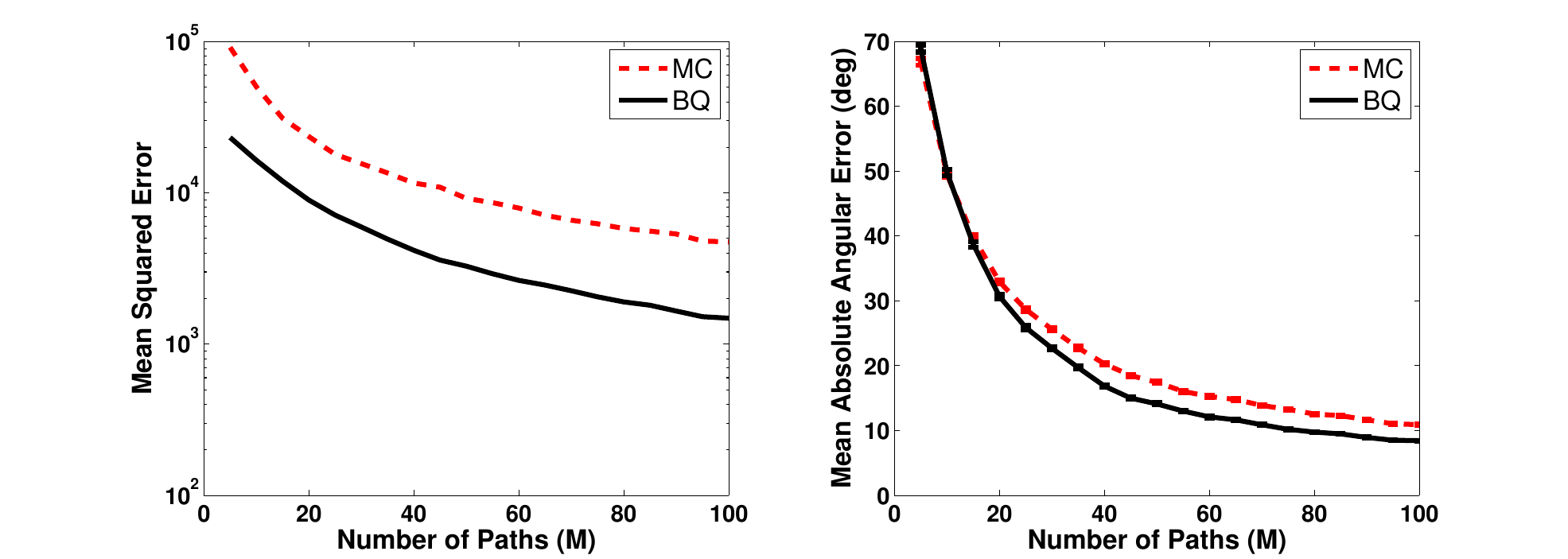} 
\caption{Results for the LQR problem using Model 1 (top row) and Model 2 (bottom row) with sparsification. Shown are the MSE (left column) and the mean absolute angular error (right column) of the MC and BQ estimates as a function of the number of sample paths $M$. All results are averaged over $10^4$ runs.} 
\label{Fig1-LQR}
\end{center}
\end{figure}

We ran another set of experiments in which we added i.i.d.~Gaussian noise to the rewards: \begin{small}$r_t=x_t^2+0.1a_t^2+n_r\;;\;n_r\sim\N(0,\sigma_r^2)$\end{small}. Note that in Models~1 and~2, $y(\xi)$, the noisy sample of $f(\xi)$, is of the form $R(\xi)\nabla\log\Pr(\xi;\vectheta)$ and $R(\xi)$, respectively (see Sections~\ref{subsec:M1} and~\ref{subsec:M2}). Moreover, since each reward $r_t$ is a Gaussian random variable with variance $\sigma_r^2$, the return $R(\xi)=\sum_{t=0}^{T-1}r_t$ is also a Gaussian random variable with variance $T\sigma_r^2$. Therefore in this case, the measurement noise covariance matrices for Models~1 and~2 may be written as \begin{small}$\matSigma=T\sigma_r^2\diag\Big(\big(\frac{\partial}{\partial\theta_i}\log p(\xi_1;\vectheta)\big)^2,\ldots,\big(\frac{\partial}{\partial\theta_i}\log p(\xi_M;\vectheta)\big)^2\Big)$\end{small} and $\matSigma=T\sigma_r^2\matI$, respectively, where $T=20$ is the path length.\footnote{In Model~1, $\matSigma$ is the measurement noise covariance matrix for the $i$th component of the gradient $\frac{\partial}{\partial\theta_i}\eta_B(\vectheta)$. Note that $\frac{\partial}{\partial\theta_i}\log p(\xi_j;\vectheta)$ depends only on the policy and can be calculated using Equation~\ref{eq:score}.} We tried two different Gaussian reward noise standard deviations: $\sigma_r=0.1\;\text{and}\;1$ in our experiments. Adding noise to the rewards slightly increased the error of the BQ and MC estimates of the gradient. However, the graphs comparing these estimates remained quite similar to those shown in Figure~\ref{Fig1-LQR}. Hence in Figure~\ref{Fig2-LQR}, we compare the MSE (left column) and the mean absolute angular error (right column) of the BQ estimates with and without noise in the rewards as a function of the number of sample paths $M$. In this figure, the noise in the rewards has variance $\sigma_r^2=1$, and the BQ gradient estimates were calculated using Model 1 (top row) and Model 2 (bottom row) with sparsification.

\begin{figure}[!h]
\begin{center}
\includegraphics[height=5cm,width=1\textwidth]{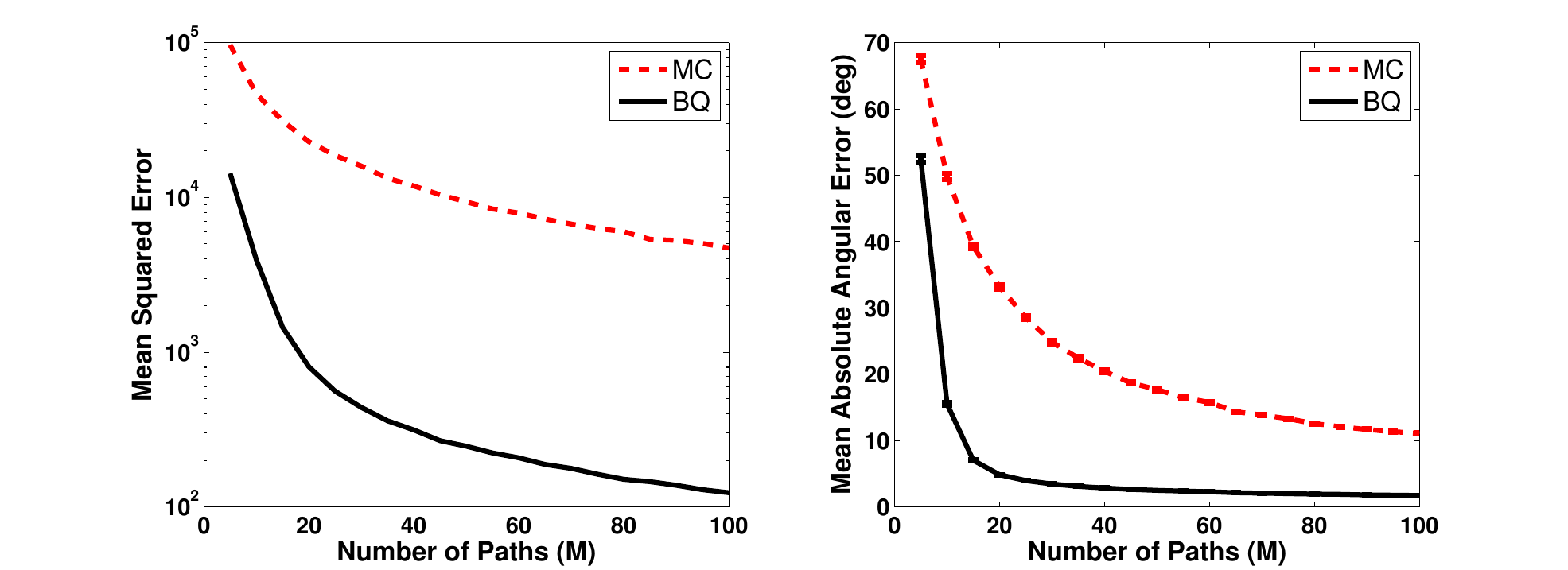} 
\includegraphics[height=5cm,width=1\textwidth]{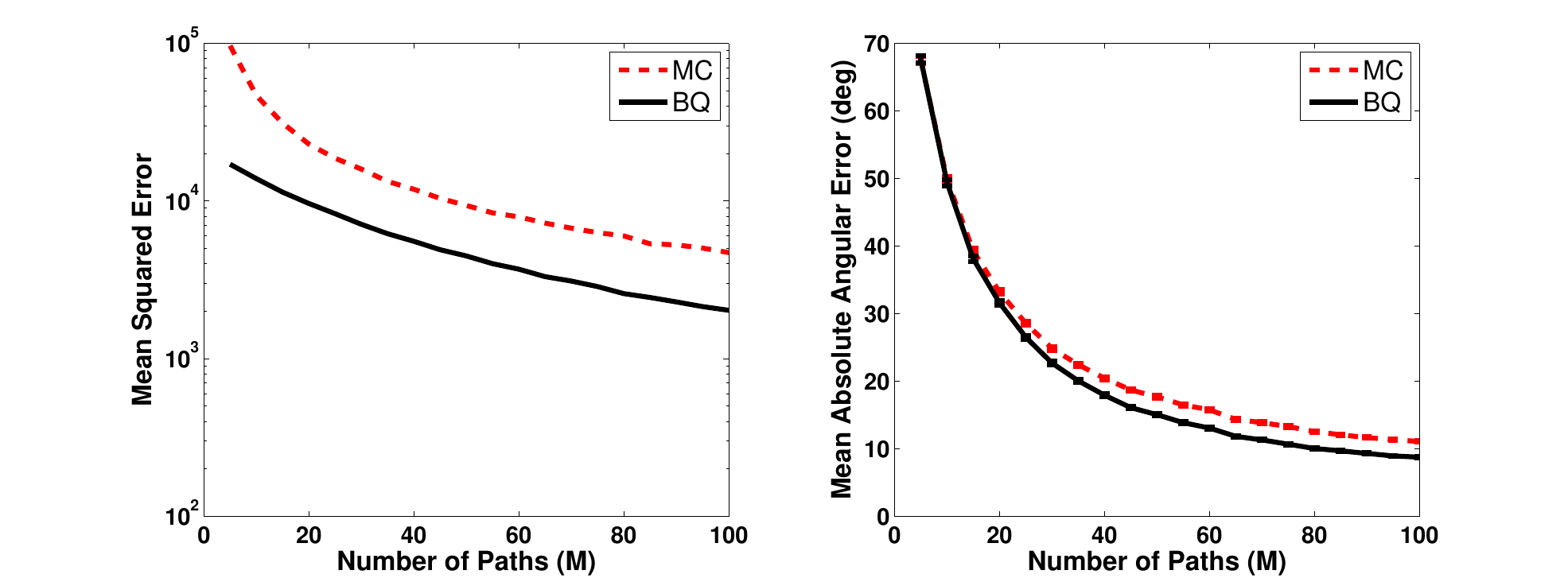} 
\caption{Results for the LQR problem in which the rewards are corrupted by i.i.d. Gaussian noise with $\sigma_r^2=1$. Shown are the MSE (left column) and the mean absolute angular error (right column) of the BQ estimates with and without noise in the rewards as a function of the number of sample paths $M$. The BQ gradient estimates were calculated using Model 1 (top row) and Model 2 (bottom row) with sparsification. All results are averaged over $10^4$ runs.} 
\label{Fig2-LQR}
\end{center}
\end{figure}


\subsubsection{Policy Optimization}
\label{subsec:LQR_policy_opt}

In this section, we use Bayesian policy gradient (BPG) to optimize the policy parameters in the LQR problem. Figure~\ref{Fig3-LQR} shows the performance of the BPG algorithm with the conventional 
(BPG) and natural (BPNG) gradient estimates, versus a MC-based policy gradient (MCPG) algorithm, for sample sizes (number of sample paths used to estimate the gradient of each policy) $M=5$, $10$, $20$, and $40$. We use Algorithm~\ref{alg:BPG} with the number of updates set to $N=100$, and Model 1 with sparsification for the BPG and BPNG methods. Since Algorithm~\ref{alg:BPG} computes the Fisher information matrix for each set of policy parameters, the estimate of the natural gradient is provided at little extra cost at each step. The returns obtained by these methods are averaged over $10^4$ runs. The policy parameters are initialized randomly at each run. In order to ensure that the learned parameters do not exceed an acceptable range, the policy parameters are defined as 
$\lambda=-1.999+1.998/(1+e^{\kappa_1})$ and $\sigma=0.001+1/(1+e^{\kappa_2})$. The optimal solution is $\lambda^*\approx-0.92,\;\sigma^*=0.001,\;\eta_B(\lambda^*,\sigma^*)=0.3067$, corresponding to $\kappa^*_1\approx-0.16$ and $\kappa^*_2\rightarrow\infty$.  

\begin{figure}[!h]
\begin{center}
\includegraphics[width=0.49\textwidth]{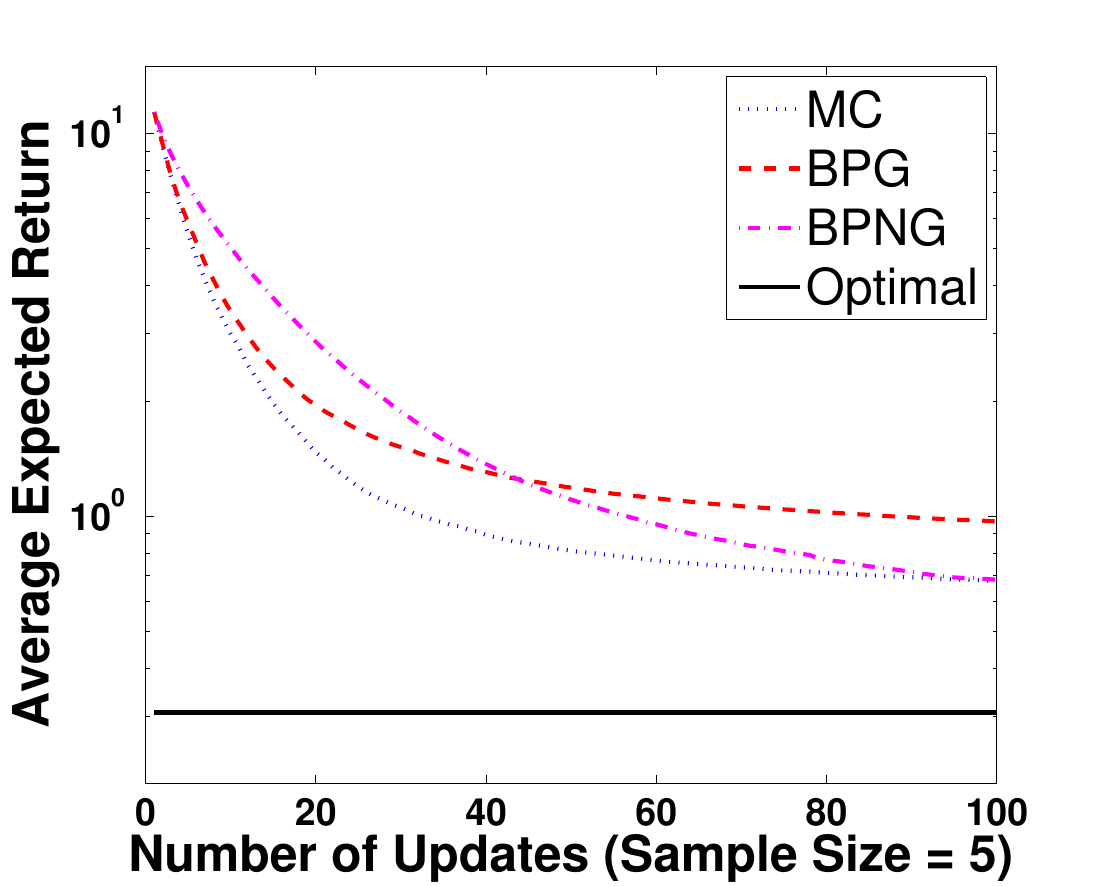}
\hfill
\includegraphics[width=0.49\textwidth]{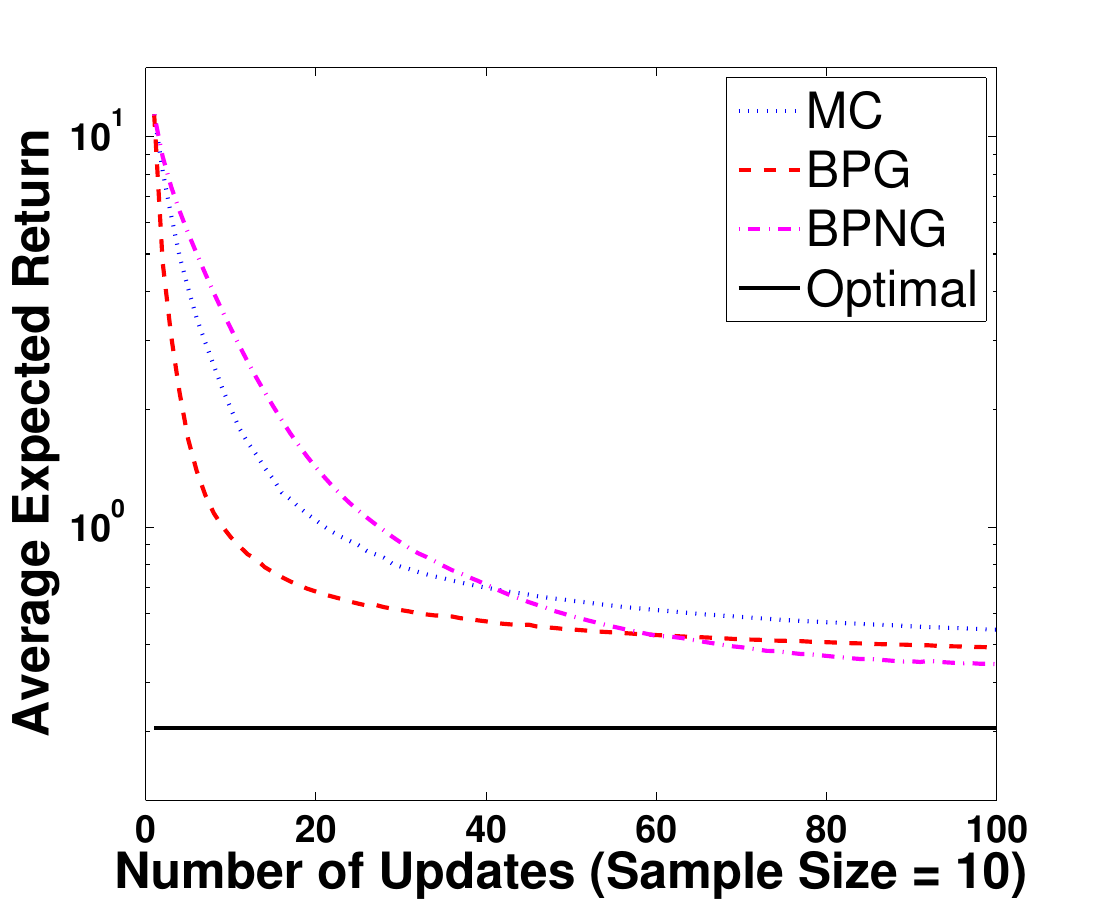} \\
\includegraphics[width=0.49\textwidth]{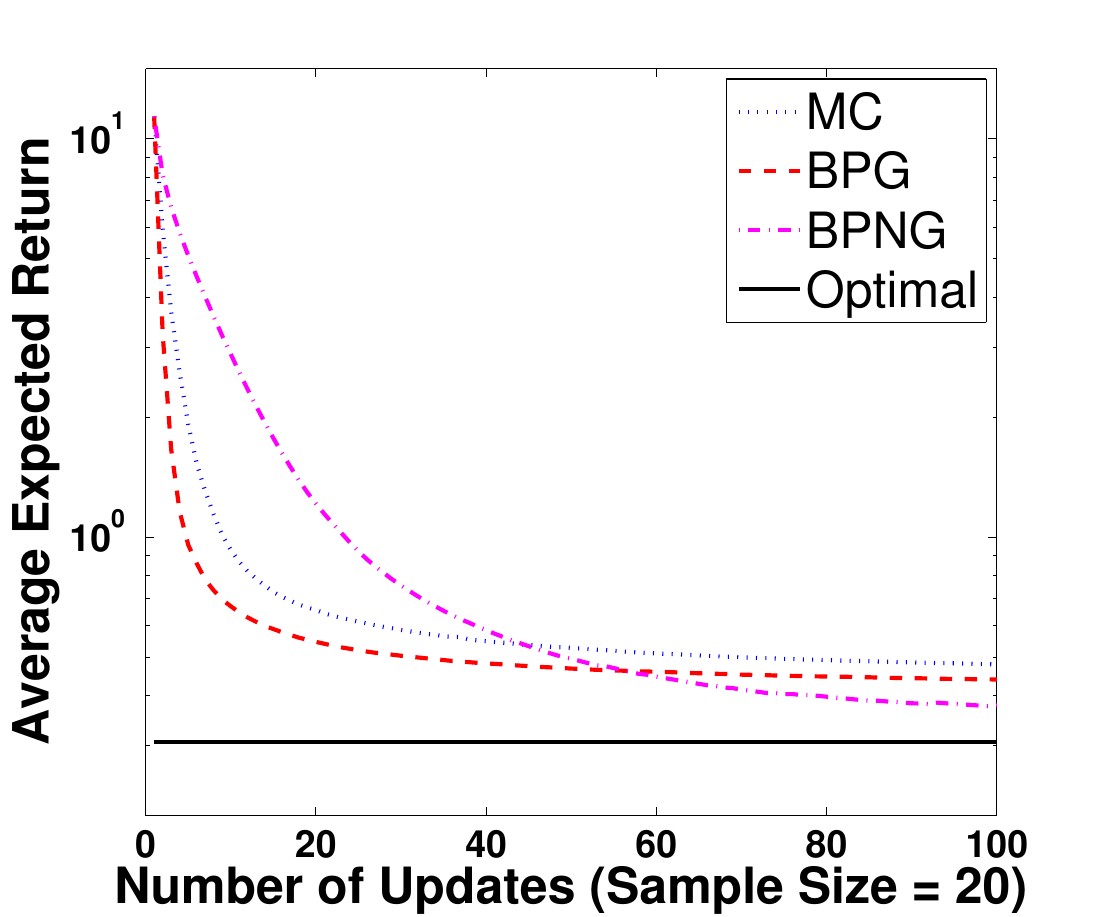}
\hfill
\includegraphics[width=0.49\textwidth]{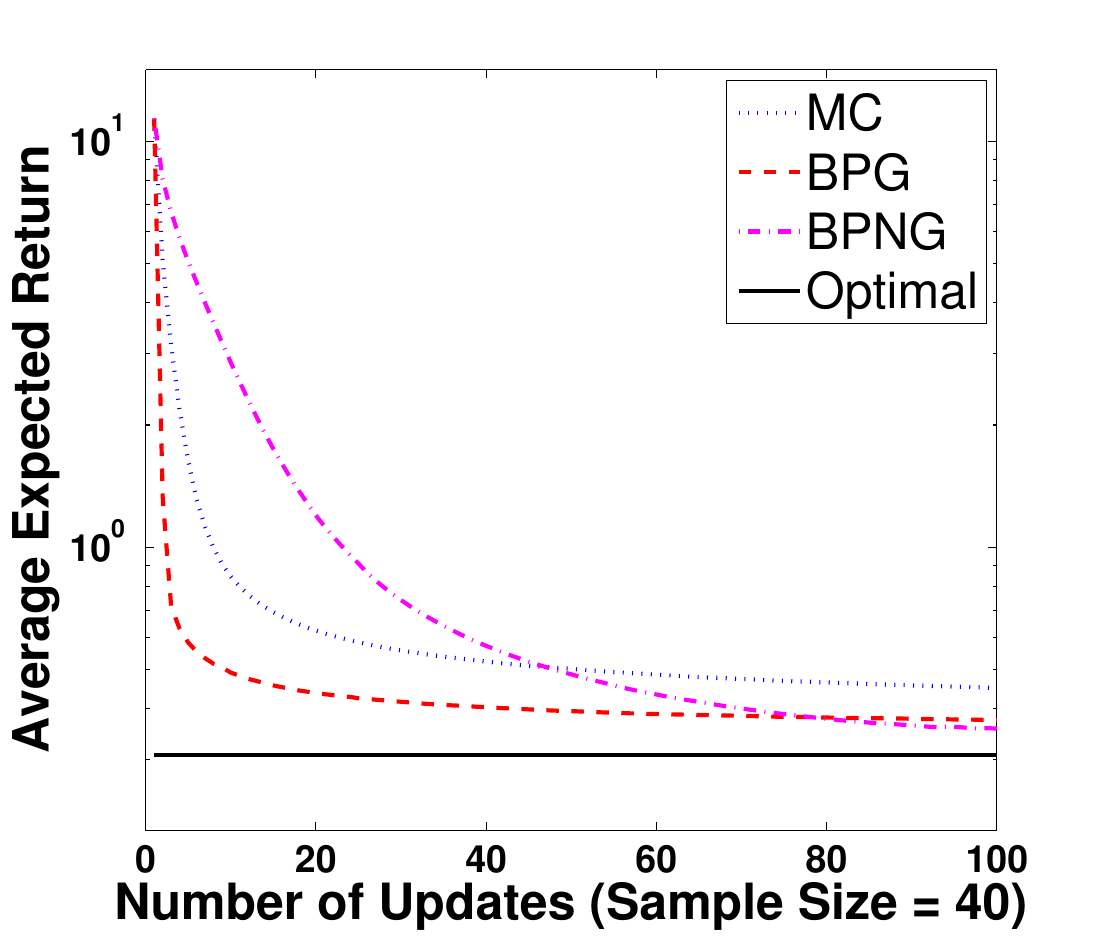}
\caption{A comparison of the average expected returns of the Bayesian policy gradient algorithm using conventional (BPG) and natural (BPNG) gradient estimates, with the average expected return of a MC-based policy gradient algorithm (MCPG) for sample sizes $M=5$, $10$, $20$, and $40$. All results are averaged over $10^4$ runs.}        
\label{Fig3-LQR}
\end{center}
\end{figure}

Figure~\ref{Fig3-LQR} shows that the MCPG algorithm performs better than BPG and BPNG only for the smallest sample size ($M=5$), whereas for larger samples BPG and BPNG dominate MCPG. The better performance of MCPG for very small sample size is due to the fact that in this case, the Bayesian estimators, BPG and BPNG, like any other Bayesian estimator or posterior in such case, rely more on the prior, and thus, are not accurate if the prior is not very informative. A similar phenomenon was also reported by~\citet{Rasmussen03BM}. We used two different learning rates for the two components of the gradient. For a fixed sample size, BPG and MCPG methods start with an initial learning rate and decrease it according to the schedule $\beta_j=\beta_0\big(20/(20+j)\big)$. The BPNG algorithm uses a fixed learning rate multiplied by the determinant of the Fisher information matrix. We tried many values for the initial learning rates used by these algorithms and those in Table~\ref{tab:LQR} yielded the best performance of those we tried. 

\begin{table}[!h]
\centering
\begin{tabular}{|c|c|c|c|c|} \hline
$\beta_0$ & $M=5$ & $M=10$ & $M=20$ & $M=40$ \\ \hline
MCPG & $0.01,0.05$ & $0.05,0.05$ & $0.05,0.10$ & $0.05,0.10$ \\ \hline
BPG & $0.01,0.05$ & $0.07,0.10$ & $0.15,0.15$ & $0.10,0.30$ \\ \hline
BPNG & $0.010,0.005$ & $0.010,0.005$ & $0.015,0.005$ & $0.015,0.005$ \\ \hline
BPG-var & $0.05,0.05$ & $0.10,0.10$ & $0.10,0.15$ & $0.15,0.30$ \\ \hline
\end{tabular}
\caption{Initial learning rates $\beta_0$ used by the policy gradient algorithms for the two components of the gradient.}
\label{tab:LQR}
\end{table}

So far we have assumed that the Fisher information matrix is known. In the next experiment, we estimate it using both MC and a model-based maximum likelihood (ML) method, as discussed in Section~\ref{subsec:BPG_eval_alg}. In the ML approach, we model the transition probability function as $P(x_{t+1}|x_t,a_t)=\N(c_1x_t+c_2a_t+c_3,c_4^2)$, and then estimate its parameters $(c_1,c_2,c_3,c_4)$ using observing state transitions. Figure~\ref{Fig4-LQR} shows that the BPG algorithm, when the Fisher information matrix is estimated using ML and MC, still performs better than MCPG. Top and bottom rows contain the results for the BPG algorithm with conventional (BPG-ML and BPG-MC) and natural (BPNG-ML and BPNG-MC) gradient estimates, respectively. Although the BPG-ML (BPNG-ML) outperforms BPG-MC (BPNG-MC) for small sample sizes, the difference in their performance disappears as we increase the sample size. One reason for the good performance of BPG-ML is that the form of the state transition function $P(x_{t+1}|x_t,a_t)$ has been selected correctly. Here we used the same initial learning rates and learning rate schedules as in the experiments of Figure~\ref{Fig3-LQR} (see Table~\ref{tab:LQR}).   

\begin{figure}[!h]
\begin{center}
\includegraphics[height=5.3cm,width=0.49\textwidth]{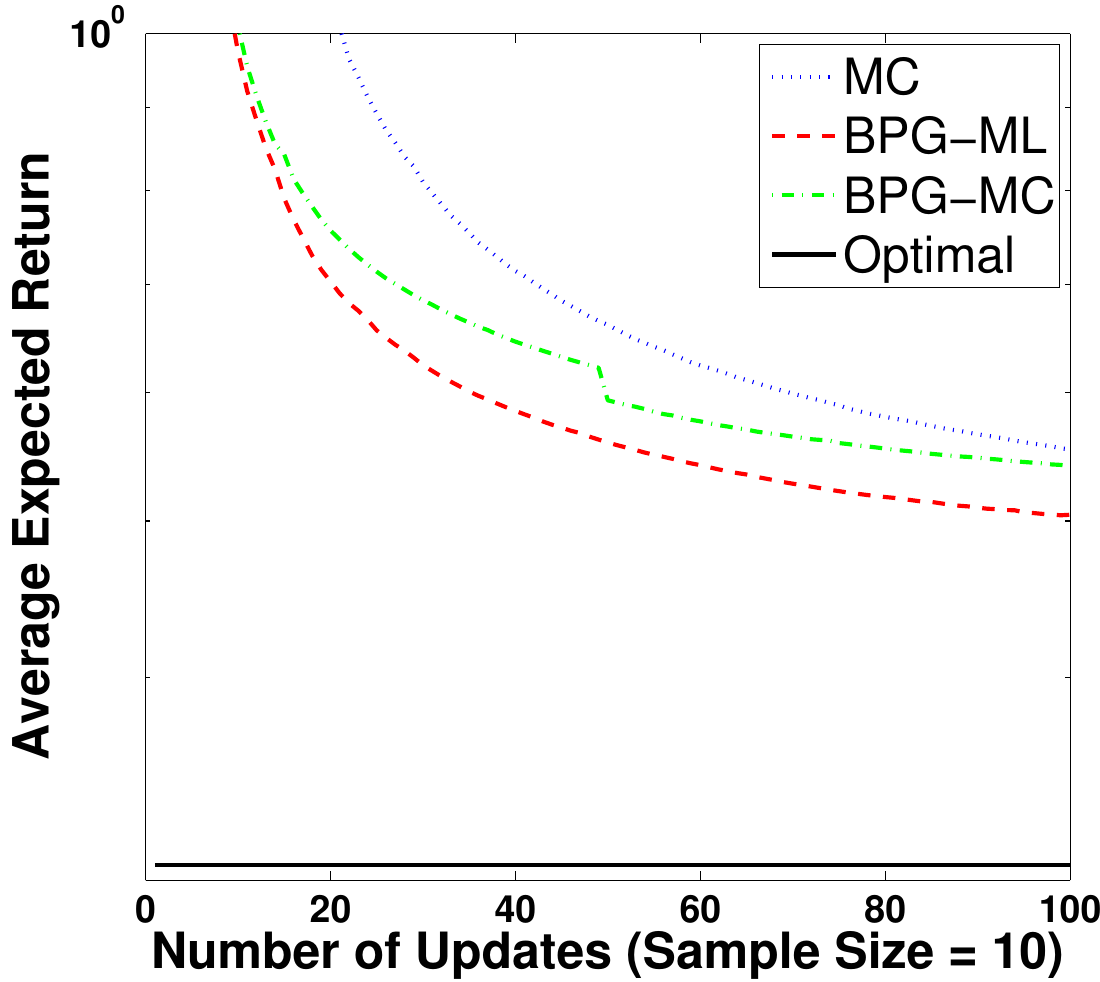}
\hfill
\includegraphics[height=5.3cm,width=0.49\textwidth]{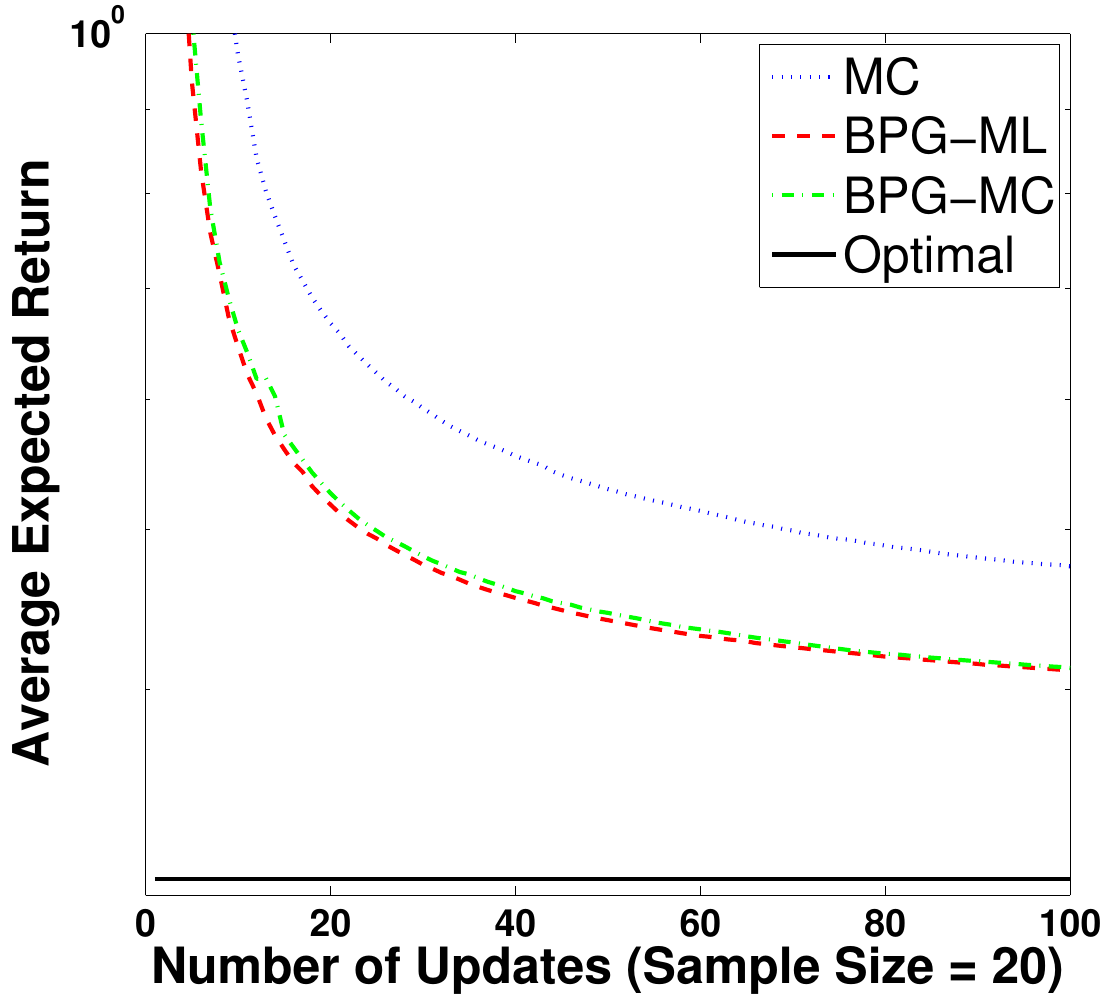} \\
\includegraphics[height=4.7cm,width=0.325\textwidth]{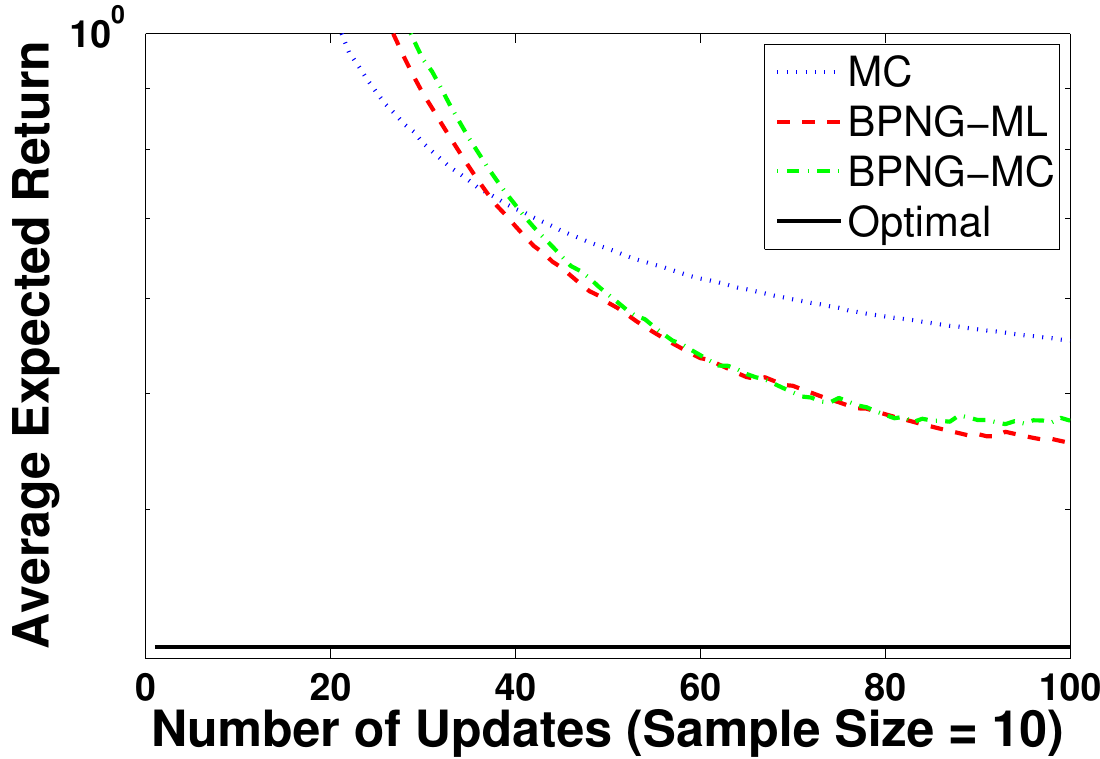}
\includegraphics[height=4.7cm,width=0.325\textwidth]{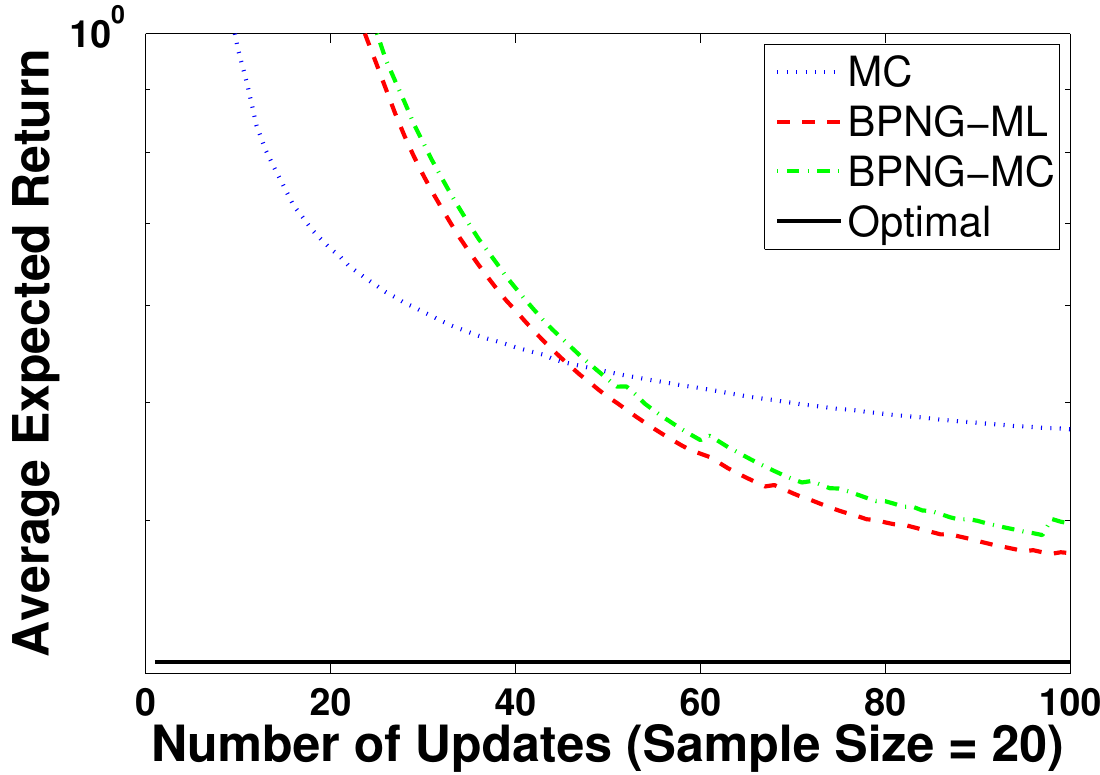}
\includegraphics[height=4.7cm,width=0.325\textwidth]{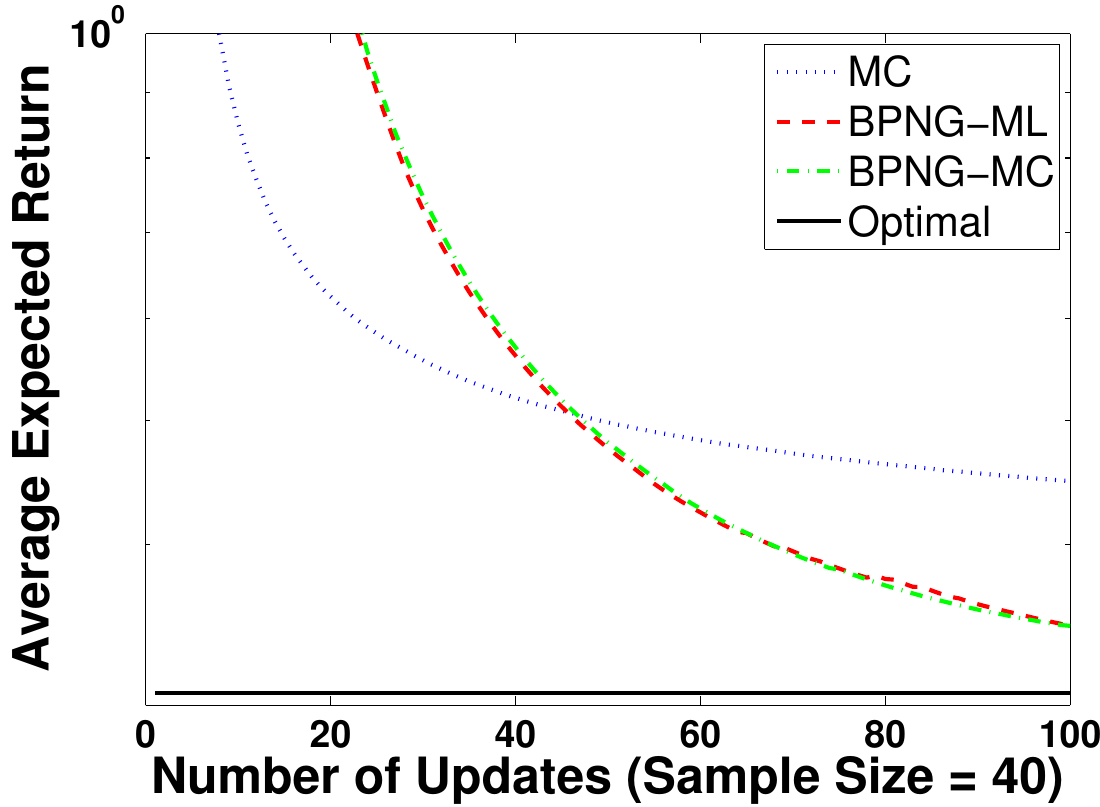}
\caption{A comparison of the average expected returns of the BPG algorithm, when the Fisher information matrix is estimated using ML and MC, with the average expected return of a MC-based policy gradient algorithm (MCPG). The top and bottom rows contain the results for the BPG algorithm with conventional (BPG-ML and BPG-MC) and natural (BPNG-ML and BPNG-MC) gradient estimates, respectively. All results are averaged over $10^4$ runs.}        
\label{Fig4-LQR}
\end{center}
\end{figure}

Although the proposed Bayesian policy gradient algorithm (Algorithm~\ref{alg:BPG}) uses only the posterior mean of the gradient in its updates, it can be extended to make judicious use of the second moment information provided by the Bayesian policy gradient estimation algorithm (Algorithm~\ref{alg:BPG-Eval}). In the last experiment of this section, we use the posterior covariance of the gradient, provided by Algorithm~\ref{alg:BPG-Eval}, to select the learning rate and the direction of the updates in Algorithm~\ref{alg:BPG}. The idea is to use a small learning rate when the variance of the gradient estimate is large, and to have a large update when it is small. We refer to the resulting algorithm by the name BPG-var. This algorithm uses a fixed learning rate parameter (see Table~\ref{tab:LQR}) multiplied by $\Big[\big(1+n\big)\matI - \Cov\big(\nabla\eta_B(\vectheta)|\D_M\big)\Big]/(1+n)$ in its updates. Note that $n+1$ is $b_0$ in the calculation of the posterior covariance of the gradient in Model 1 (see Proposition~\ref{prop:3}), and is used here as an upper bound for the posterior covariance of the gradient. Figure~\ref{Fig5-LQR} compares the average expected return of BPG-var with BPG and MCPG for sample sizes $M=5$, $10$, $20$, and $40$. The figure shows that BPG-var performs better than BPG and MCPG for all the sample sizes. It even has a better performance than MCPG for the smallest sample size ($M=5$). Comparing Figures~\ref{Fig3-LQR} and~\ref{Fig5-LQR} shows that BPG-var converges faster than BPNG and has similar final performance. As we expected, BPG-var and BPG perform more and more alike as we increase the sample size. This is because by increasing the sample size the estimated gradient (the posterior mean of the gradient), and as a result, the update direction used by BPG becomes more reliable.  

\begin{figure}
\begin{center}
\includegraphics[height=5.3cm,width=0.49\textwidth]{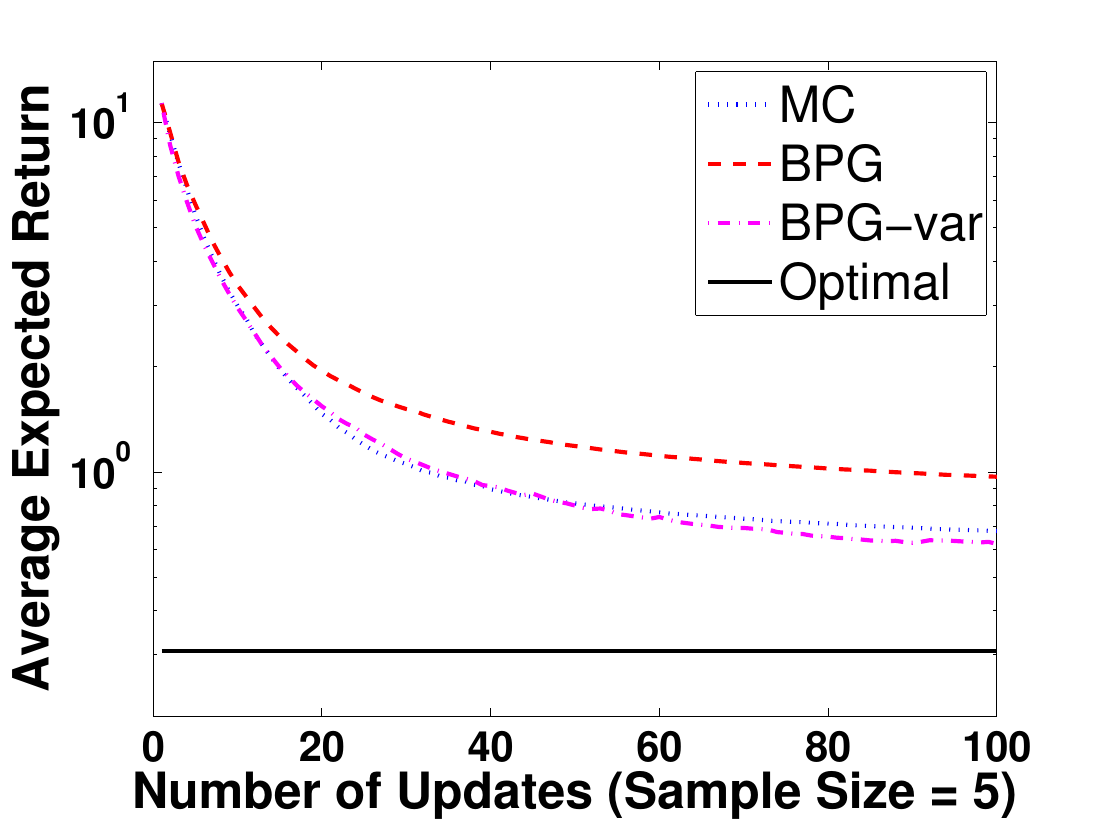}
\hfill
\includegraphics[height=5.3cm,width=0.49\textwidth]{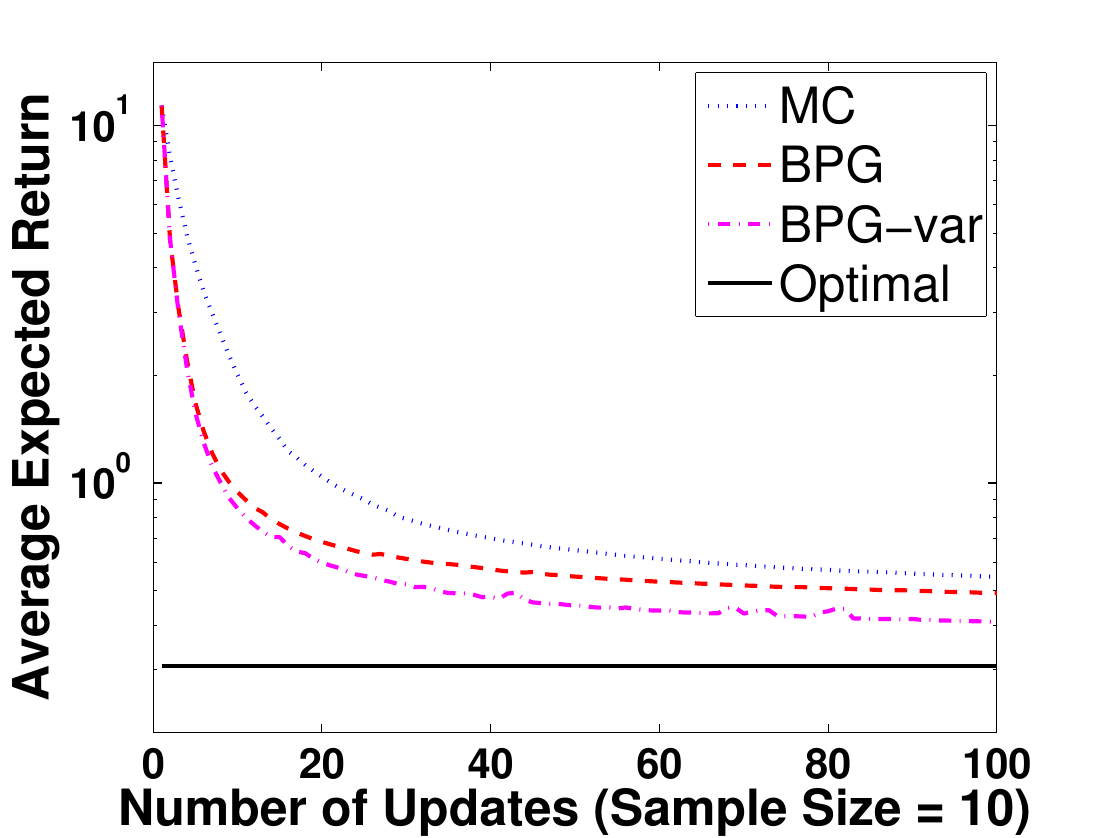} \\
\includegraphics[height=5.3cm,width=0.49\textwidth]{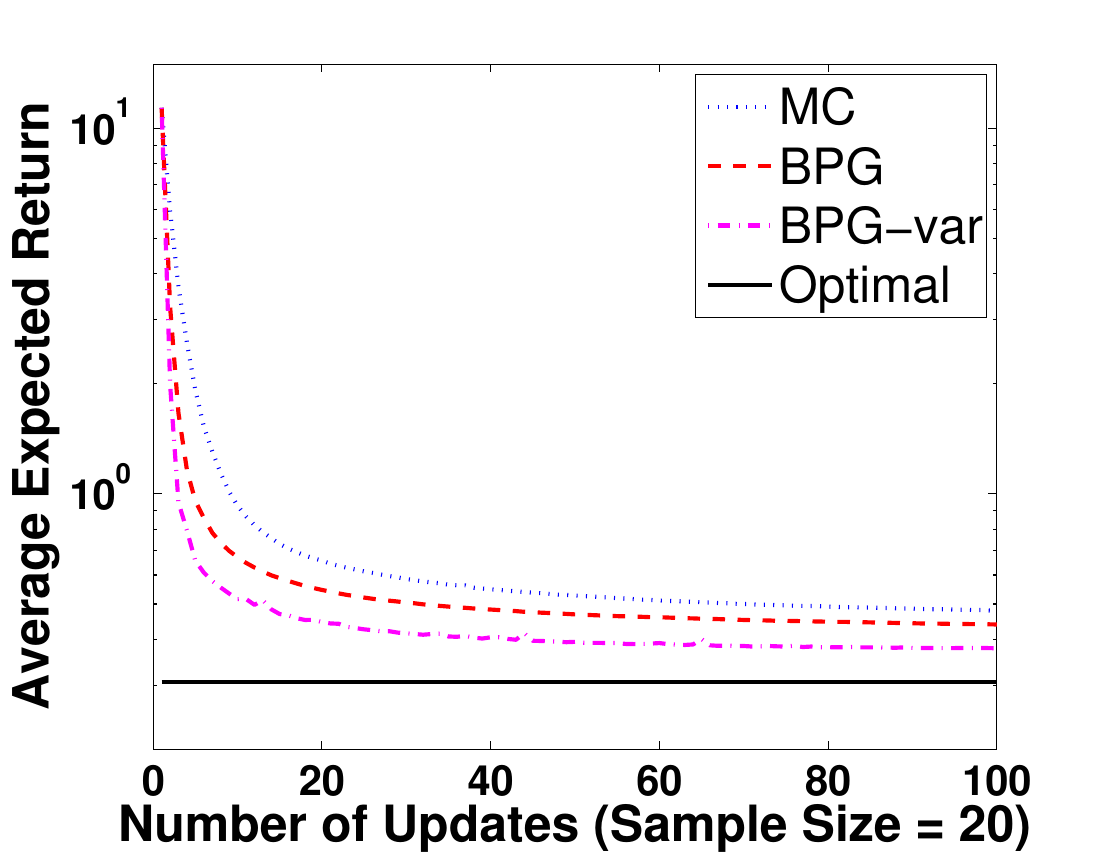}
\hfill
\includegraphics[height=5.3cm,width=0.49\textwidth]{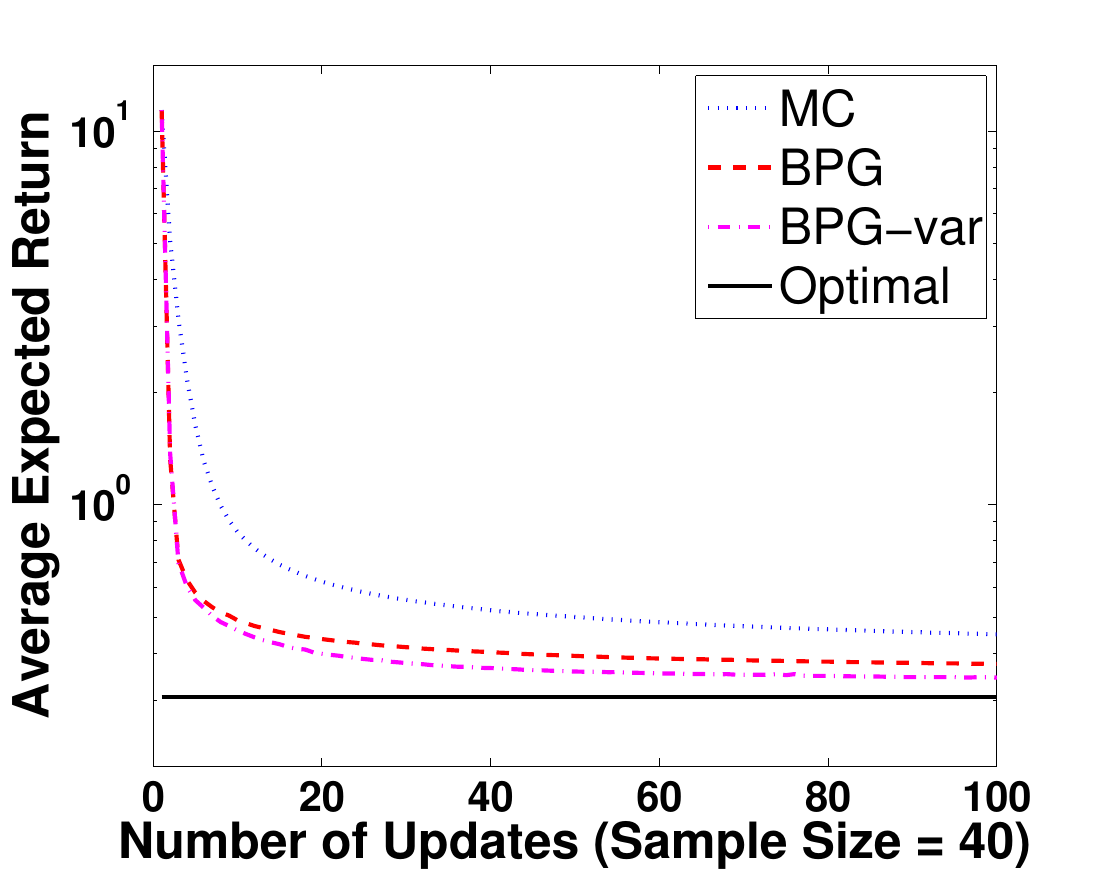}
\caption{A comparison of the average expected returns of the BPG algorithm that uses the posterior covariance in its updates (BPG-var), with the average expected return of the BPG and a MC-based policy gradient algorithm (MCPG) for sample sizes $M=5$, $10$, $20$, and $40$. All results are averaged over $10^4$ runs.}        
\label{Fig5-LQR}
\end{center}
\end{figure}

In an approach similar to the one used in the experiments of Figure~\ref{Fig5-LQR},~\citet{Vien11HM} used BQ to estimate the Hessian matrix distribution, and then used its mean as learning rate schedule to improve the performance of BPG. They empirically showed that their method performs better than BPG and BPNG in terms of convergence speed.


\section{Bayesian Actor-Critic}
\label{sec:BAC}

The models and algorithms of Section~\ref{sec:BPG} consider complete trajectories as the basic observable unit, and thus, do not require the dynamics within each trajectory to be of any special form. In particular, it is not necessary for the dynamics to have the Markov property, allowing the resulting algorithms to handle partially observable MDPs, Markov games, and other non-Markovian systems. On the down side, these algorithms cannot take advantage of the Markov property when operating in Markovian systems. Moreover, since the unit of observation of these algorithms is the entire trajectory, their gradient estimates have larger variance than the algorithms that will be discussed in this section, whose unit of observation is {\em (current state, action, next state)}, since they take advantage of the Markov property, especially when the size of the trajectories is large. 

In this section, we apply the Bayesian quadrature idea to the policy gradient expression given by Equation~\ref{eq:policy_grad}, i.e., 
\begin{equation*}
\nabla\eta(\vectheta)=\int dxda\;\nu(x;\vectheta)\nabla\mu(a|x;\vectheta)Q(x,a;\vectheta),
\end{equation*}
and derive a family of Bayesian actor-critic (BAC) algorithms. In this approach, we place a Gaussian process (GP) prior over action-value functions using a prior covariance kernel defined on state-action pairs: $k(\vecz,\vecz')=\Cov\big[Q(\vecz),Q(\vecz')\big]$. We then compute the GP posterior conditioned on the sequence of individual observed transitions. In the same vein as Section~\ref{sec:BPG}, by an appropriate choice of a prior on action-value functions, we are able to derive closed-form expressions for the posterior moments of $\nabla\eta(\vectheta)$. The main questions here are: {\bf 1)} how to compute the GP posterior of the action-value function given a sequence of observed transitions? and {\bf 2)} how to choose a prior for the action-value function that allows us to derive closed-form expressions for the posterior moments of $\nabla\eta(\vectheta)$? Fortunately, well developed machinery for computing the posterior moments of $Q(\vecz)$ is provided in a series of papers by~\citet{Engel03BM,Engel05RL} (for a thorough treatment see~\citealp{Engel05AR}). In the next two sections, we will first briefly review some of the main results pertaining to the Gaussian process temporal difference (GPTD) model proposed in~\citet{Engel05RL}, and then will show how they may be combined with the Bayesian quadrature idea in developing a family of Bayesian actor-critic algorithms. 


\subsection{Gaussian Process Temporal Difference Learning}
\label{subsec:GPTD}

The Gaussian process temporal difference (GPTD) learning~\citep{Engel03BM,Engel05RL} model is based on a statistical generative model relating the observed reward signal $r$ to the unobserved action-value function $Q$ 
\begin{equation}\label{eq:gen_model}
r(\vecz_i) = Q(\vecz_i) - \gamma Q(\vecz_{i+1}) + N(\vecz_i,\vecz_{i+1}),
\end{equation}
where $N(\vecz_i,\vecz_{i+1})$ is a zero-mean noise signal that accounts for the discrepancy between $r(\vecz_i)$ and $Q(\vecz_i)-\gamma Q(\vecz_{i+1})$. Let us define the finite-dimensional processes $\vecr_t$, $Q_t$, $N_t$, and the $t \times (t+1)$ matrix $\matH_t$ as follows:
\begin{align} \label{eq:RQN}
\vecr_t &= \big(r(\vecz_0),\ldots,r(\vecz_t)\big)^\top,\hspace{1in} 
Q_t = \big(Q(\vecz_0),\ldots,Q(\vecz_t)\big)^\top, \nonumber\\
N_t &= \big(N(\vecz_0,\vecz_1),\ldots,N(\vecz_{t-1},\vecz_t)\big)^\top,
\end{align}
\begin{equation} \label{eq:Ht}
\matH_t =
\left[
\begin{array}{ccccc}
1 & -\gamma & 0 & \ldots & 0 \\
0 & 1 & -\gamma & \ldots & 0 \\
\vdots & & & & \vdots \\
0 & 0 & \ldots & 1 & -\gamma
\end{array}
\right] .
\end{equation}
The set of Equations~\ref{eq:gen_model} for $i=0,\ldots,t-1$ may be written as $\vecr_{t-1} = \matH_{t} Q_{t} + N_t$. Under certain assumptions on the distribution of the discounted return random process~\citep{Engel05RL}, the covariance of the noise vector $N_t$ is given by  
\begin{equation} \label{eq:Sigma}
\matSigma_t = \sigma^2 \matH_t \matH_t^\top= \sigma^2
\begin{bmatrix}
1+ \gamma^2 & -\gamma & 0 & \ldots & 0 \\
-\gamma & 1 + \gamma^2 & -\gamma & \ldots & 0 \\
\vdots & \vdots & & & \vdots \\
0 & 0 & \ldots & -\gamma & 1 + \gamma^2
\end{bmatrix}.
\end{equation}
In episodic tasks, if $\vecz_{t-1}$ is the last state-action pair in the episode (i.e., $\vecx_t$ is a zero-reward absorbing terminal state), $\matH_t$ becomes a square $t \times t$ invertible matrix of the form shown in Equation~\ref{eq:Ht} with its last column removed. The effect on the noise covariance matrix $\matSigma_t$ is that the bottom-right element becomes $1$ instead of $1+\gamma^2$.

Placing a GP prior on $Q$ and assuming that $N_t$ is also normally distributed, we may use Bayes' rule to obtain the posterior moments of $Q$:
\begin{align} \label{eq:full_post}
\hat{Q}_t(\vecz) &= \exptE\left[ Q(\vecz)|\D_t \right] = \veck_t(\vecz)^\top \vecalpha_t ,\nonumber\\
\hat{S}_t(\vecz,\vecz') &= \Cov \left[Q(\vecz),Q(\vecz')|\D_t \right]= k(\vecz,\vecz')-\veck_t(\vecz)^\top \matC_t \veck_t(\vecz'),
\end{align}
where $\D_t$ denotes the observed data up to and including time step $t$. We used here the following definitions:
\begin{align} \label{eq:alpha_C}
&\veck_t(\vecz)=\big( k(\vecz_0,\vecz),\ldots, k(\vecz_t,\vecz)\big)^\top,\hspace{0.775in}
\matK_t =\big[\veck_t(\vecz_0), \veck_t(\vecz_1), \ldots, \veck_t(\vecz_t)\big],\nonumber\\
&\vecalpha_t=\matH_t^\top \left(\matH_t\matK_t\matH_t^\top+\matSigma_t \right)^{-1}\vecr_{t-1}\;,\hspace{0.55in}
\matC_t =\matH_t^\top \left(\matH_t \matK_t\matH_t^\top+\matSigma_t\right)^{-1}\matH_t\;.
\end{align}
Note that $\hat{Q}_t(\vecz)$ and $\hat{S}_t(\vecz,\vecz')$ are the posterior mean and covariance functions of the posterior GP, respectively. As more samples are observed, the posterior covariance decreases, reflecting a growing confidence in the Q-function estimate $\hat{Q}_t$. 


\subsection{A Family of Bayesian Actor-Critic Algorithms}
\label{subsec:BAC-Alg}

We are now in a position to describe the main idea behind our BAC approach. Making use of the linearity of Equation~\ref{eq:policy_grad} in $Q$ and denoting $\vecg(\vecz;\vectheta)=\pi^\mu(\vecz)\nabla\log\mu(\veca|\vecx;\vectheta)$, we obtain the following expressions for the posterior moments of the policy gradient~\citep{Ohagan91BQ}:
\begin{align}
\label{eq:grad_eta_post}
\exptE[\nabla\eta(\vectheta)|\D_t]&=\int_{\Z}d\vecz\vecg(\vecz;\vectheta)\hat{Q}_t(\vecz;\vectheta),\nonumber\\
\Cov\left[\nabla\eta(\vectheta)|\D_t \right]&=\int_{\Z^2}d\vecz d\vecz'\vecg(\vecz;\vectheta)\hat{S}_t(\vecz,\vecz')\vecg(\vecz';\vectheta)^\top. 
\end{align}
Substituting the expressions for the posterior moments of $Q$ from Equation~\ref{eq:full_post} into Equation~\ref{eq:grad_eta_post}, we obtain
\begin{align*}
\exptE[\nabla\eta(\vectheta)|\D_t]&=\int_{\Z} d\vecz\vecg(\vecz;\vectheta)\veck_t(\vecz)^\top \vecalpha_t ,\nonumber \\
\Cov \left[\nabla\eta(\vectheta)|\D_t \right]&=\int_{\Z^2}d\vecz d\vecz'\vecg(\vecz;\vectheta)\left(k(\vecz,\vecz')-\veck_t(\vecz)^\top\matC_t\veck_t(\vecz')\right)\vecg(\vecz';\vectheta)^\top.
\end{align*}
These equations provide us with the general form of the posterior policy gradient moments. We are now left with a computational issue, namely, how to compute the integrals appearing in these expressions? We need to be able to evaluate the following integrals:
\begin{equation} \label{eq:UV_def}
\matB_t=\int_{\Z} d\vecz\vecg(\vecz;\vectheta)\veck_t(\vecz)^\top,\hspace{0.5in}\matB_0=\int_{\Z^2} d\vecz d\vecz' \vecg(\vecz;\vectheta)k(\vecz,\vecz')\vecg(\vecz';\vectheta)^\top.
\end{equation}
Using these definitions, we may write the gradient posterior moments compactly as
\begin{equation} \label{eq:post_compact}
\exptE[\nabla\eta(\vectheta)|\D_t]=\matB_t\vecalpha_t\;,\hspace{0.5in}\Cov\left[\nabla\eta(\vectheta)|\D_t \right]=\matB_0-\matB_t\matC_t\matB_t^\top.
\end{equation}

In order to render these integrals analytically tractable, we choose our prior covariance kernel to be the sum of an arbitrary state-kernel $k_x$ and the (invariant) Fisher kernel $k_F$ between state-action pairs (see e.g.,~\citealp{ShaweTaylor04KM}, Chapter~12). The (policy dependent) Fisher information kernel and our overall state-action kernel are then given by
\begin{equation}
k_F(\vecz,\vecz')=\vecu(\vecz;\vectheta)^\top \matG(\vectheta)^{-1}\vecu(\vecz')\;,\hspace{0.5in}k(\vecz,\vecz')=k_x(\vecx,\vecx') + k_F(\vecz,\vecz')\;,
\label{eq:kernels}
\end{equation}
where $\vecu(\vecz;\vectheta)$ and $\matG(\vectheta)$ are the score function and Fisher information matrix defined as\footnote{Similar to $\vecu(\xi)$ and $\matG$ defined by Equations~\ref{eq:score} and~\ref{Fisher:Matrix}, to simplify the notation, we omit the dependence of $\vecu$ and $\matG$ to the policy parameters $\vectheta$, and replace $\vecu(\vecz;\vectheta)$ and $\matG(\vectheta)$ with $\vecu(\vecz)$ and $\matG$ in the sequel.}
\begin{align}
\label{eq:score-state-action}
\vecu(\vecz;\vectheta)&=\nabla\log\mu(a|x;\vectheta), \\
\label{eq:Fisher-state-action}
\matG(\vectheta)&=\exptE_{x\sim \nu^\mu,a\sim\mu}\left[\nabla\log\mu(a|x;\vectheta)\nabla\log\mu(a|x;\vectheta)^\top\right]=\exptE_{\vecz\sim\pi^\mu}\left[\vecu(\vecz;\vectheta)\vecu(\vecz;\vectheta)^\top\right].
\end{align}
Although here we have total flexibility in selecting the state kernel, we are restricted to the Fisher kernel for state-action pairs. This restriction may cause an error in approximating some action-value functions $Q$. This error depends on the problem at hand and is hard to quantify. This is exactly the same as selecting an inaccurate prior in any Bayesian algorithm or choosing a wrong representation (function space) in any machine learning algorithm (referred to as {\em approximation error} in the approximation theory). However, this restriction did not cause a significant error in our experiments (see Section~\ref{sec:BAC-experiments}), as in almost all of them, the gradients estimated by BAC were more accurate than those estimated by the MC-based method, given the same number of samples.

Note that in Sections~\ref{sec:BPG} to~\ref{sec:BPG-experiments} we used a formulation in which the observable unit is a system trajectory, and thus, the expected return and its gradient are defined by Equations~\ref{exp-ret} and~\ref{eq:grad}. In this formulation, the score function and Fisher information matrix are defined by Equations~\ref{eq:score} and~\ref{Fisher:Matrix}. However, in the formulation used in this section and in the rest of the paper, where the observable unit is an individual state-action-reward transition, the expected return and its gradient are defined by Equations~\ref{eq:exp-ret2} and~\ref{eq:policy_grad}. In this formulation, the score function and Fisher information matrix are defined by Equations~\ref{eq:score-state-action} and~\ref{eq:Fisher-state-action}, respectively.

A nice property of the Fisher kernel is that while $k_F(\vecz,\vecz')$ depends on the policy, it is invariant to policy reparameterization. In other words, it only depends on the actual probability mass assigned to each action and not on its explicit dependence on the policy parameters. As mentioned above, another attractive property of this particular choice of kernel is that it renders the integrals in Equation~\ref{eq:UV_def} analytically tractable, as made explicit in the following proposition 

\begin{proposition} \label{prop:UV}
Let $k(\vecz,\vecz')=k_x(x,x')+k_F(\vecz,\vecz')$ for all $(\vecz,\vecz')\in\Z^2$, where $k_x: \X^2\rightarrow\Re$ is an arbitrary positive definite state-kernel and $k_F:\Z^2\rightarrow\Re$ is the Fisher information kernel. Then $\matB_t$ and $\matB_0$ from Equation~\ref{eq:UV_def} satisfy
\begin{equation} \label{eq:UV}
\matB_t=\matU_t\;,\hspace{1.0in}\matB_0=\matG,
\end{equation}
where $\matU_t=\big[\vecu(\vecz_0),\vecu(\vecz_1),\ldots,\vecu(\vecz_t)\big]$.
\end{proposition}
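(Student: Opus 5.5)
The plan is to compute the two integrals in Equation~\ref{eq:UV_def} directly, after writing $\vecg(\vecz;\vectheta)=\pi^\mu(\vecz)\vecu(\vecz)$ and splitting the kernel into its state part and its Fisher part. The state part will vanish because the score has zero conditional mean. The Fisher part will reduce to the identity through the definition of $\matG$ in Equation~\ref{eq:Fisher-state-action}.

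\textbf{Step 1 (zero-mean score).} The key lemma is that for every state $x$,
\[
\int_\A da\,\mu(a|x;\vectheta)\vecu(x,a)=\int_\A da\,\nabla\mu(a|x;\vectheta)=\nabla\int_\A da\,\mu(a|x;\vectheta)=\nabla 1=\vec0 .
\]
This is the same baseline argument used after Equation~\ref{eq:policy_grad}. It needs the interchange of $\nabla$ and the integral over actions, which we take as justified by Assumption~\ref{ass:1}, in the same way the paper does there. Combining this with the factorization $\pi^\mu(x,a)=\nu^\mu(x)\mu(a|x;\vectheta)$, we get that for any function $h$ of the state only,
\[
\int_\Z d\vecz\,\vecg(\vecz)h(x)=\int_\X dx\,\nu^\mu(x)h(x)\int_\A da\,\mu(a|x)\vecu(x,a)=\vec0 .
\]

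\textbf{Step 2 ($\matB_t$).} The $i$th column of $\matB_t$ is
\[
\int d\vecz\,\vecg(\vecz)\big(k_x(x_i,x)+\vecu(\vecz_i)^\top\matG^{-1}\vecu(\vecz)\big).
\]
The first term is zero by Step 1 with $h(x)=k_x(x_i,x)$. Using the symmetry of the scalar $\vecu(\vecz_i)^\top\matG^{-1}\vecu(\vecz)$, the second term equals
\[
\Big(\int d\vecz\,\pi^\mu(\vecz)\vecu(\vecz)\vecu(\vecz)^\top\Big)\matG^{-1}\vecu(\vecz_i)=\matG\matG^{-1}\vecu(\vecz_i)=\vecu(\vecz_i).
\]
Hence $\matB_t=\matU_t$.

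\textbf{Step 3 ($\matB_0$).} For the state-kernel part of $\matB_0$, integrate over $\vecz$ first with $\vecz'$ held fixed. This gives $\int d\vecz\,\vecg(\vecz)k_x(x,x')=\vec0$ by Step 1, so the whole term vanishes. The Fisher part factorizes as
\[
\Big(\int d\vecz\,\pi^\mu\vecu\vecu^\top\Big)\matG^{-1}\Big(\int d\vecz'\,\pi^\mu\vecu\vecu^\top\Big)=\matG\matG^{-1}\matG=\matG .
\]

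\textbf{Main obstacle.} The only delicate points are the interchange of differentiation and integration in Step 1, and the use of Fubini to separate the double integral in Step 3. We will assume standard integrability of $\vecu$ and of $k_x$ under $\pi^\mu$ to justify both. We also use the fact that $\matG$, as defined in Equation~\ref{eq:Fisher-state-action}, is exactly $\int\pi^\mu\vecu\vecu^\top$; the normalization of $\pi^\mu$, which is not a distribution, is absorbed consistently into that definition.
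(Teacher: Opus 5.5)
Your proposal is correct and follows the paper's proof essentially step for step: you split the kernel, kill the state-kernel terms by factoring $\pi^\mu=\nu^\mu\mu$ and using $\int_\A da\,\nabla\mu(a|x;\vectheta)=\vec0$, and collapse the Fisher terms to $\matG\matG^{-1}\vecu(\vecz_i)$ and $\matG\matG^{-1}\matG$. Your caveat that $\matG$ must be read as $\int_\Z d\vecz\,\pi^\mu(\vecz)\vecu(\vecz)\vecu(\vecz)^\top$, rather than as an expectation under the normalized $(1-\gamma)\pi^\mu$, matches the convention the paper itself invokes in its last step.
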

\begin{proof}
See Appendix~\ref{sec:proofP6}.
\end{proof}

An immediate consequence of Proposition~\ref{prop:UV} is that, in order to compute the posterior moments of the policy gradient, we only need to be able to evaluate (or estimate) the score vectors $\vecu(\vecz_i),\;i=0,\ldots,t$ and the Fisher information matrix $\matG$ of our policy. Evaluating the Fisher information matrix $\matG$ is somewhat more challenging, since on top of taking the expectation with respect to the policy $\mu(a|x;\vectheta)$, computing $\matG$ involves an additional expectation over the state-occupancy density $\nu^\mu(x)$, which is not generally known. In most practical situations we therefore have to resort to estimating $\matG$ from data. When $\nu^\mu$ in the definition of the Fisher information matrix (Equation~\ref{eq:Fisher-state-action}) is the stationary distribution over states under policy $\mu$, one straightforward method to estimate $\matG$ from a trajectory $\vecz_0,\vecz_1,\ldots,\vecz_t$ is to use the (unbiased) estimator (see Proposition~\ref{prop:UV} for the definition of $\matU_t$):
\begin{equation} \label{eq:Ghat}
\hat{\matG}_t=\frac{1}{t+1}\sum_{i=0}^t\vecu(\vecz_i)\vecu(\vecz_i)^\top=\frac{1}{t+1}\matU_t \matU_t^\top.
\end{equation}
%
In case $\nu^\mu$ in Equation~\ref{eq:Fisher-state-action} is a discounted weighting of states encountered by following policy $\mu$ (as it is considered in this paper), a method for estimating $\matG$ from a number of trajectories is shown in Algorithm~\ref{alg:G_est}. Note that $(1-\gamma)\nu^\mu$ corresponds to the distribution of a Markov chain that starts from a state sampled according to $P_0$ and at each step either follows the policy $\mu$ with probability $\gamma$ or restarts from a new initial state drawn from $P_0$ with probability $1-\gamma$. It is easy to show that the average number of steps between two successive restarts of this distribution is $1/(1-\gamma)$.

\begin{algorithm}
\caption{\label{alg:G_est} Fisher Information Matrix Estimation Algorithm}
\begin{algor}[1]
\item [{*}]{\bf G-EST}$(\vectheta, M)$ \\
$\bullet$ $\vectheta$ policy parameters\\ 
$\bullet$ $M>0$ number of episodes used to estimate the Fisher information matrix\\
\item [{*}]$\hat{\matG}(\vectheta)=\vec0$
\item [for]$i=1$ to $M$
\item [{*}]done = false,$\hspace{0.25in}$term = false,$\hspace{0.25in}t=-1$
\item [{*}]Draw $x_0^i\sim P_0(\cdot)$
\item [while]not done
\item [{*}] $t=t+1$
\item [{*}] Draw $a_t^i\sim\mu(\cdot|x_t^i;\vectheta)\;$ and $\;x_{t+1}^i\sim P(\cdot|x_t^i,a_t^i)$
\item [{*}] {\bf if} $x_{t+1}^i=x_{\text{term}}$ {\bf then} done = true
\item [{*}] {\bf if} (done = false $\wedge$ term = false) {\bf then} 
\item [{*}]$\hspace{0.175in}\hat{\matG}(\vectheta):=\hat{\matG}(\vectheta)+\vecu(\vecz_t^i;\vectheta)\vecu(\vecz_t^i;\vectheta)^\top\hspace{0.125in}$ and $\hspace{0.125in}$ w.p. $1-\gamma\;\;$ term = true
\item [{*}] {\bf end if}
\item [endwhile]
\item [{*}] {\bf if} term = false {\bf then} $\hat{\matG}(\vectheta)=\hat{\matG}(\vectheta)+\big(\vecu(\vecz_t^i;\vectheta)\vecu(\vecz_t^i;\vectheta)^\top\big)/(1-\gamma)$
\item [endfor]
\item [{*}]{\bf return} $\hat{\matG}(\vectheta):=\hat{\matG}(\vectheta)/M$
\end{algor}
\end{algorithm}

Algorithm~\ref{alg:bac} is a pseudocode sketch of the Bayesian actor-critic algorithm, using either the conventional gradient or the natural gradient in the policy update, and with $\matG$ estimated using either $\hat{\matG}_t$ in Equation~\ref{eq:Ghat} or $\hat{\matG}(\vectheta)$ in Algorithm~\ref{alg:G_est}. 

\begin{algorithm}
\caption{\label{alg:bac} A Bayesian Actor-Critic Algorithm}
\begin{algor}[1]
\item [{*}]{\bf BAC}$(\vectheta, M, \epsilon)$ \\
$\bullet$ $\vectheta$ initial policy parameters\\ 
$\bullet$ $M>0$ episodes for gradient evaluation\\
$\bullet$ $\epsilon>0$ termination threshold\\
\item [{*}]done = false
\item [while]not done
\item [{*}]Run {\bf GPTD} for $M$ episodes. {\bf GPTD} returns $\vecalpha_t$ and $\matC_t$ (Equation~\ref{eq:alpha_C})
\item [{*}]Compute an estimate of the Fisher information matrix $\hat{\matG}_t$ (Equation~\ref{eq:Ghat}) or $\hat{\matG}(\vectheta)$ (Algorithm~\ref{alg:G_est})
\item [{*}]Compute $\matU_t$ (Proposition~\ref{prop:UV})
\item [{*}]$\Delta\vectheta=\matU_t\vecalpha_t\hspace{2.55in}$ {\bf (conventional gradient)} $\quad$ or \\
	$\Delta\vectheta=\hat{\matG}_t^{-1}\matU_t\vecalpha_t\;\;\;$ or
	$\;\;\;\Delta\vectheta=\hat{\matG}(\vectheta)^{-1}\matU_t\vecalpha_t\hspace{0.5in}$ {\bf (natural gradient)} 
\item [{*}]$\vectheta := \vectheta + \beta \Delta \vectheta$
\item [{*}] {\bf if} $\left| \Delta \vectheta \right| < \epsilon$ {\bf then }done = true
\item [endwhile]
\item [{*}]{\bf return} $\vectheta$
\end{algor}
\end{algorithm}


\subsection{BAC Online Sparsification}
\label{subsec:BAC-sparsification}

As was done for the BPG algorithms in Section~\ref{subsec:BPG-sparsification}, Algorithm~\ref{alg:bac} may be made more efficient, both in time and memory, by sparsifying the solution. 
\citet{Engel05RL} presented a sparse approximation of the GPTD algorithm by using an online sparsification method from~\citet{Engel02SO}. This sparsification method incrementally constructs a dictionary $\tilde{\D}$ of representative state-action pairs. Upon observing a new state-action pair $\vecz_i$, the distance between the feature-space image of $\vecz_i$ and the span of the images of current dictionary members is computed. If the squared distance 
$\delta_i=k(\vecz_i,\vecz_i)-\tilde{\veck}_{i-1}^\top(\vecz_i)\tilde{\matK}_{i-1}^{-1}\tilde{\veck}_{i-1}(\vecz_i)$ exceeds some positive threshold $\tau$, $\vecz_i$ is added to the dictionary, otherwise, it is left out. In calculating $\delta_i$, 
$\tilde{\veck}_{i-1}$ and $\tilde{\matK}_{i-1}$ are the dictionary kernel vector and kernel matrix before observing $\vecz_i$, respectively.~\citet{Engel05RL} showed that using this sparsification procedure, the posteriors moments of $Q$ may be compactly approximated as $\hat{Q}_t(\vecz)=\tilde{\veck}_t^\top(\vecz)\tilde{\vecalpha}_t$ and $\hat{S}_t(\vecz,\vecz')=k(\vecz,\vecz')-\tilde{\veck}_t^\top(\vecz)\tilde{\matC}_t\tilde{\veck}_t(\vecz')$, where
\begin{equation}
\tilde{\vecalpha}_t=\tilde{\matH}_t^\top \left(\tilde{\matH}_t\tilde{\matK}_t\tilde{\matH}_t^\top+\matSigma_t \right)^{-1}\vecr_{t-1}\;,\hspace{0.55in}
\tilde{\matC}_t =\tilde{\matH}_t^\top \left(\tilde{\matH}_t \tilde{\matK}_t\tilde{\matH}_t^\top+\matSigma_t\right)^{-1}\tilde{\matH}_t\;.
\label{eq:sparseQ}
\end{equation}
In Equation~\ref{eq:sparseQ}, $\tilde{\matH}_t=\matH_t\matA_t$, where $\matA_t$ is a $|\D_t|\times |\tilde{\D}_t|$ matrix whose $i$'th row is $[\matA]_{i,|\tilde{\D}_i|}=1$ and $[\matA]_{i,j}=0\;;\;\forall j\neq |\tilde{\D}_i|$, if we add the state-action pair $\vecz_i$ to the dictionary, and is 
$\tilde{\veck}^\top_{i-1}(\vecz_i)\tilde{\matK}_{i-1}^{-1}$ followed by zeros otherwise.

\begin{proposition}\label{prop:UV-sparse}
Using the sparsification method described above, the posterior moments of the gradient are approximated as
\begin{equation*}
\exptE\big[\nabla\eta(\vectheta)|\D_t\big]=\tilde{\matU}_t\tilde{\vecalpha}_t\;, \hspace{1in}
\Cov\big[\nabla\eta(\vectheta)|\D_t\big]=\matG-\tilde{\matU}_t\tilde{\matC}_t\tilde{\matU}_t^\top\;,
\end{equation*}
where $\tilde{\vecalpha}_t$ and $\tilde{\matC}_t$ are given by Equation~\ref{eq:sparseQ} and $\tilde{\matU}_t=\big[\vecu(\vecz_1),\ldots,\vecu(\vecz_{|\tilde{\D}_t|})\big]$ with $\vecz_i\in\tilde{\D}_t$.
\end{proposition}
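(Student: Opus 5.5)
The plan is to repeat the argument behind Proposition~\ref{prop:UV}, replacing the full posterior moments of $Q$ with their sparse counterparts.

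First, substitute the sparse posterior moments $\hat{Q}_t(\vecz)=\tilde{\veck}_t(\vecz)^\top\tilde{\vecalpha}_t$ and $\hat{S}_t(\vecz,\vecz')=k(\vecz,\vecz')-\tilde{\veck}_t(\vecz)^\top\tilde{\matC}_t\tilde{\veck}_t(\vecz')$ into the Bayesian quadrature expressions of Equation~\ref{eq:grad_eta_post}. Because $\tilde{\vecalpha}_t$ and $\tilde{\matC}_t$ do not depend on $\vecz$, linearity of the integral gives
\begin{equation*}
\exptE[\nabla\eta(\vectheta)|\D_t]=\tilde{\matB}_t\tilde{\vecalpha}_t,\qquad \Cov[\nabla\eta(\vectheta)|\D_t]=\matB_0-\tilde{\matB}_t\tilde{\matC}_t\tilde{\matB}_t^\top,
\end{equation*}
where $\tilde{\matB}_t=\int_{\Z}d\vecz\,\vecg(\vecz;\vectheta)\tilde{\veck}_t(\vecz)^\top$. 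The matrix $\matB_0$ is exactly the one in Equation~\ref{eq:UV_def}, so Proposition~\ref{prop:UV} already gives $\matB_0=\matG$.

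It remains to show that $\tilde{\matB}_t=\tilde{\matU}_t$. The matrix $\tilde{\matB}_t$ is just $\matB_t$ with its columns restricted to the dictionary points, so we compute it column by column. For a dictionary element $\vecz_j$, the corresponding column is $\int d\vecz\,\pi^\mu(\vecz)\vecu(\vecz)\big(k_x(x_j,x)+\vecu(\vecz_j)^\top\matG^{-1}\vecu(\vecz)\big)$. We handle the two terms of the kernel separately.

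\begin{itemize}
\item \emph{State-kernel term.} This term vanishes because $\int_\A da\,\mu(a|x;\vectheta)\nabla\log\mu(a|x;\vectheta)=\nabla 1=0$. This is the same baseline argument used for Equation~\ref{eq:policy_grad_baseline}.
\item \emph{Fisher term.} This term equals $\exptE_{\pi^\mu}[\vecu\vecu^\top]\matG^{-1}\vecu(\vecz_j)=\matG\matG^{-1}\vecu(\vecz_j)=\vecu(\vecz_j)$, by the definition of $\matG$ in Equation~\ref{eq:Fisher-state-action}.
\end{itemize}

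Stacking these columns gives $\tilde{\matB}_t=\tilde{\matU}_t$. Alternatively, one can cite the column-wise computation in the proof of Proposition~\ref{prop:UV} directly, since that computation holds for any fixed $\vecz_j$.

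The only point needing care is bookkeeping, not analysis. The sparse moments of Equation~\ref{eq:sparseQ} are written with the dictionary kernel vector $\tilde{\veck}_t$, while the matrix $\matA_t$ is absorbed into $\tilde{\matH}_t$. I expect this to be the main obstacle: checking that the $\vecz$-dependence enters only through $\tilde{\veck}_t(\vecz)$, so that the BQ integral touches only that factor. Once this is confirmed, the rest follows from Proposition~\ref{prop:UV}.
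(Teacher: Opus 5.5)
Your proposal is correct and takes the paper's approach: substitute the sparsified moments $\tilde{\veck}_t(\vecz)^\top\tilde{\vecalpha}_t$ and $k(\vecz,\vecz')-\tilde{\veck}_t(\vecz)^\top\tilde{\matC}_t\tilde{\veck}_t(\vecz')$ into Equation~\ref{eq:grad_eta_post}, then reuse the column-wise computation and the $\matB_0=\matG$ computation from the proof of Proposition~\ref{prop:UV}. You simply write out the steps that the paper leaves as ``straightforward.''
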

\begin{proof}
The proof is straightforward by plugging the sparsified posterior mean and covariance of $Q$ with $\tilde{\vecalpha}_t$ and $\tilde{\matC}_t$ from Equation~\ref{eq:sparseQ} in Equation~\ref{eq:grad_eta_post} and following the steps until the end of Proposition~\ref{prop:UV}.
\end{proof}

\section{BAC Experimental Results}
\label{sec:BAC-experiments}

In this section, we empirically\footnote{The code for all the experiments of this section is available at \url{https://sequel.lille.inria.fr/Software/BAC}.} evaluate the performance of the Bayesian actor-critic method presented in this paper in a 10-state random walk problem as well as in the widely used continuous-state-space mountain car problem~\citep{Sutton98IR} and ship steering problem~\citep{Miller90NN}. In Section~\ref{sec:random-walk}, we first compare BAC, Bayesian quadrature (BQ), and Monte Carlo (MC) gradient estimates in the 10-state random walk problem. We then evaluate the performance of the BAC algorithm on the same problem, and compare it with a Bayesian policy gradient (BPG) algorithm and a MC-based policy gradient (MCPG) algorithm. In Section~\ref{sec:mountain-car}, we compare the performance of the BAC algorithm with a MCPG algorithm on the mountain car problem. The BPG, BAC, and MCPG algorithms used in our experiments are Algorithms~\ref{alg:BPG} and~\ref{alg:bac} presented in this paper, and Algorithm~1 in~\citet{Baxter01IP}, respectively. In Section~\ref{sec:ship-steering}, we compare the performance of the BAC algorithm with a MCPG algorithm on a problem in the ship steering domain. Similar to Section~\ref{sec:mountain-car}, the BAC, and MCPG algorithms used in our experiments are Algorithm~\ref{alg:bac} presented in this paper and Algorithm~1 in~\citet{Baxter01IP}, respectively.


\subsection{A Random Walk Problem}
\label{sec:random-walk}
   
In this section, we consider a 10-state random walk problem, $\X =\{1,2,\ldots,10\}$, with states arranged linearly from state 1 on the left to state 10 on the right. The agent has two actions to choose from: $\A=\{left,right\}$. The left wall is a retaining barrier, meaning that if the $left$ action is taken at state 1, in the next time-step the state will be 1 again. State 10 is a zero reward  absorbing state. The only stochasticity in the transitions is induced by the policy, which is defined as $\mu(right|x)=1/1+\exp(-\theta_x)$ and $\mu(left|x)=1-\mu(right|x)$, for all $x\in\X$. Note that each state $x$ has an independent parameter $\theta_x$. Each episode begins at state 1 and ends when the agent reaches state 10. The mean reward is 1 for states 1--9 and is 0 for state 10. The observed rewards for states 1--9 are obtained by corrupting the mean rewards with a 0.1 standard deviation i.i.d.~Gaussian noise. The discount factor is $\gamma=0.99$. In the BAC experiments, we use the Gaussian state kernel $k_x(x,x')=\exp(-||x-x'||^2/(2\sigma_k^2))$ with $\sigma_k=3$ and the state-action kernel $0.01k_F(\vecz,\vecz')$.

We first compare the MC, BQ, and BAC estimates of $\nabla\eta(\vectheta)$ for the policy induced by the parameters $\theta_x=\log(41/9)$ for all $x\in\X$, which is equivalent to 
$\mu(right|x)=0.82$. 
We use several different sample sizes: $M=5j,\;j=1,\ldots,20$. Here, by sample size we mean the number of episodes used to estimate the gradient. For each value of $M$, we compute the gradient estimates $10^3$ times. The true gradient is calculated analytically for reference. Figure~\ref{Fig1-RW} shows the mean squared error and the mean absolute angular error of MC, BQ, and BAC estimates of the gradient for different sample sizes $M$. 
The error bars in the right figure are the standard errors of the mean absolute angular errors. The results depicted in Figure~\ref{Fig1-RW} indicate that the BAC gradient estimates are more accurate and have lower variance than their MC and BQ counterparts. 

\begin{figure}
\begin{center}
\includegraphics[height=2in,width=1\textwidth]{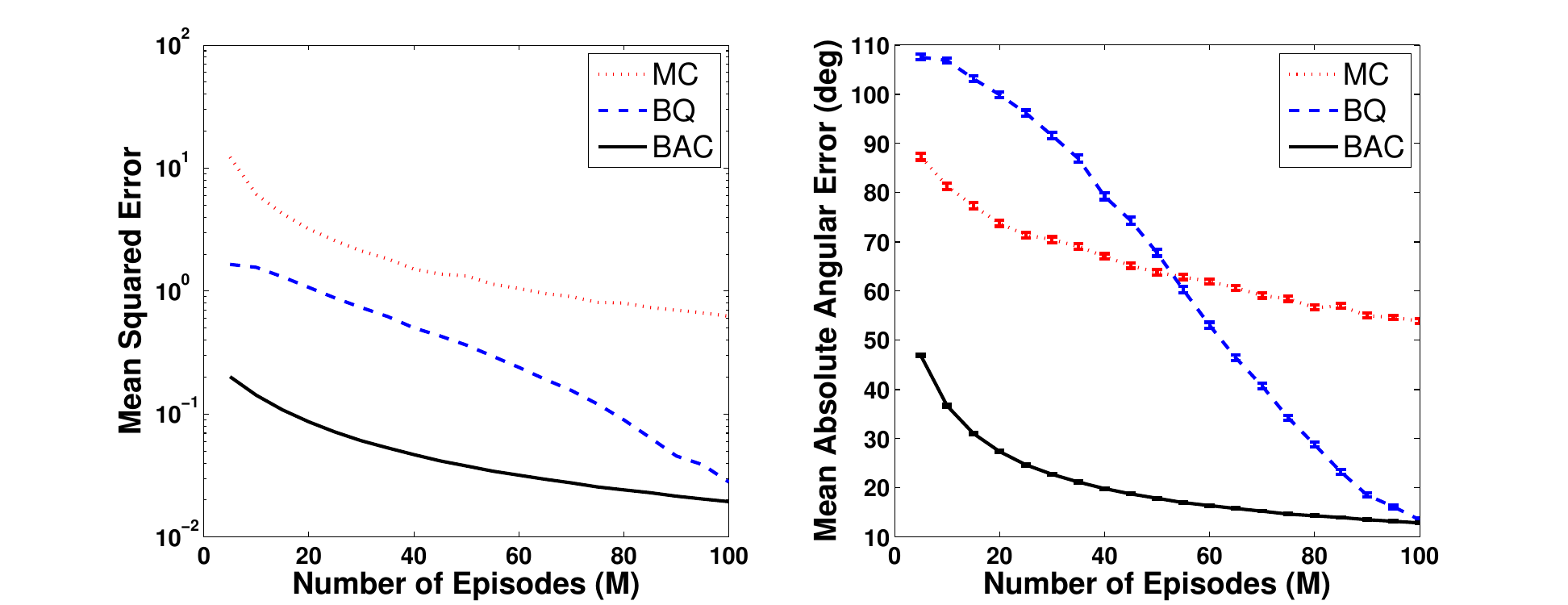}
\caption{The mean squared error and the mean absolute angular error of MC, BQ, and BAC gradient estimations as a function of the number of sample episodes $M$. All results are averaged over $10^3$ runs.}        
\label{Fig1-RW}
\end{center}
\end{figure}



Next, we use BAC to optimize the policy parameters and compare its performance with a BPG algorithm and a MCPG algorithm for $M=1,\;25,\;50$, and $75$. The BPG algorithm uses Model~1 of~Section~\ref{subsec:M1}. We use Algorithm~\ref{alg:bac} with the number of policy updates set to $500$ and the same kernels as in the previous experiment. The Fisher information matrix is estimated using Algorithm~\ref{alg:G_est}. The returns obtained by these methods are averaged over $10^3$ runs. For a fixed sample size $M$, we tried many values of the learning rate, $\beta$, for MCPG, BPG, and BAC, and those in Table~\ref{tab1-RW} yielded the best performance. Note that the learning rate used for each algorithm in each experiment is fixed and does not converge to zero. BAC showed a very robust performance when we changed the learning rate. By robust we mean that it never generated a policy for which an episode does not end after $10^6$ steps. This seems to be due to the fact that BAC gradient estimates are more accurate and have less variance than their MC and BPG counterparts. The performance of BPG improves as we increase the sample size $M$. It performs worse than MCPG for $M=1$ and $25$, but achieves a performance similar to BAC for $M=100$.  

\begin{table}
\begin{center}
\begin{tabular}{|c|c|c|c|c|}
\hline 
$\beta$ & $M=1$ & $M=25$ & $M=50$ & $M=75$ \\
\hline
MCPG & $0.005$ & $0.075$ & $0.1$ & $0.75$ \\
BPG & $0.0035$ & $0.015$ & $0.09$ & $0.5$ \\
BAC & $5$ & $5$ & $5$ & $5$ \\
\hline
\end{tabular}
\end{center}
\caption{Learning rates used by the algorithms in the experiments of Figure~\ref{Fig2-RW}.}
\label{tab1-RW}
\end{table}  

Figure~\ref{Fig2-RW} depicts the results of these experiments. From left to right and top to bottom the sub-figures correspond to the experiment in which all the algorithms used $M=1,\;25,\;50,$ and $75$ trajectories per policy update, respectively. Each curve depicts the difference between the exact average discounted return for the $500$ policies that follow each policy update and $\eta^*$ -- the optimal average discounted return. All curves are averaged over $10^3$ repetitions of the experiment. The BAC algorithm clearly learns significantly faster than the other algorithms (note that the vertical scale is logarithmic).

\begin{figure*}[!ht]
\begin{center}
\includegraphics[height=2in,width=0.475\textwidth]{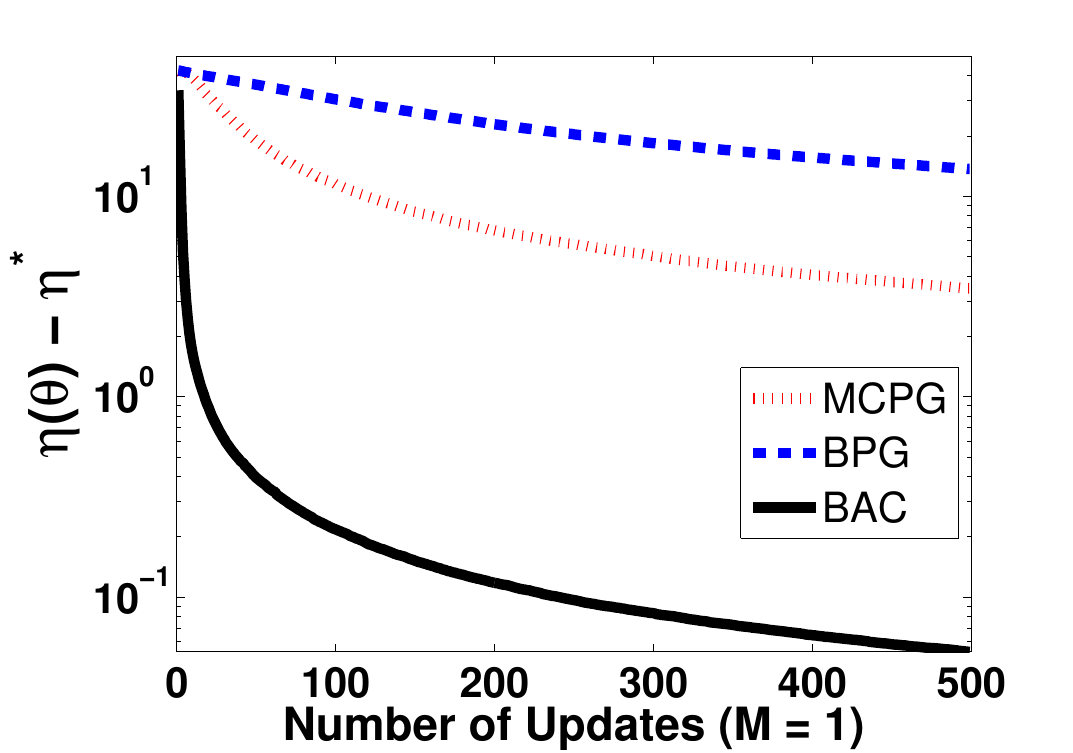}
\includegraphics[height=2in,width=0.475\textwidth]{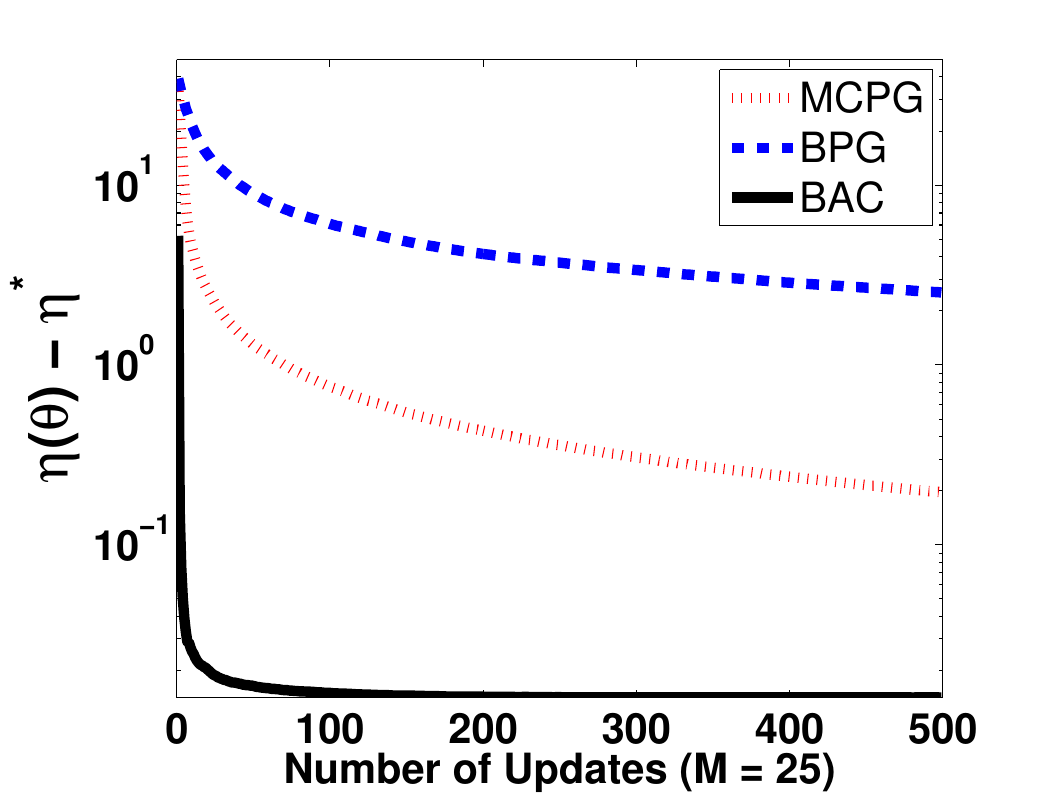} \\
\includegraphics[height=2in,width=0.475\textwidth]{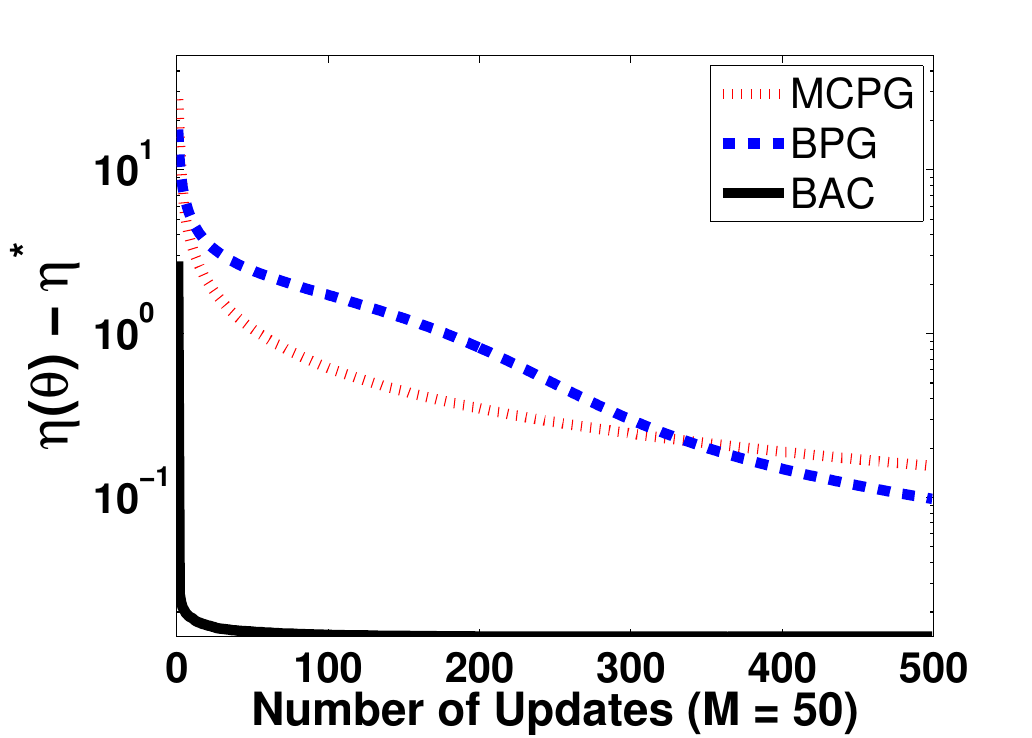}
\includegraphics[height=2in,width=0.475\textwidth]{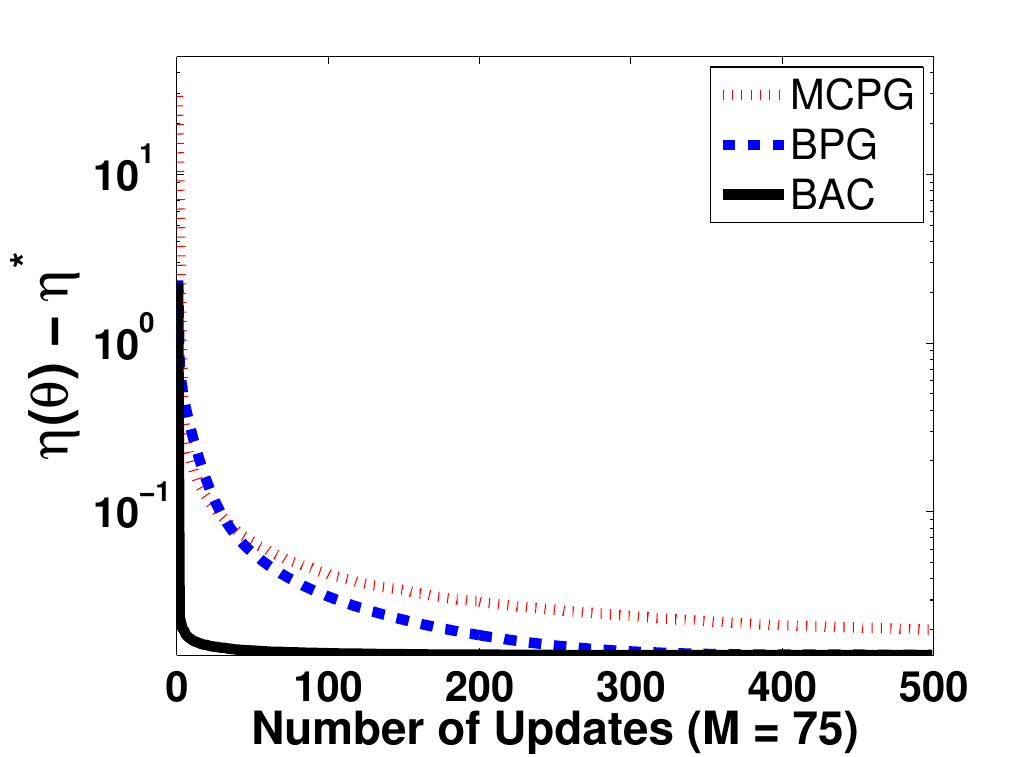}
\caption{Results for the policy learning experiment. The graphs depict the performance of the policies learned by each algorithm during $500$ policy updates. From left to right and top to bottom the number of episodes used to estimate the gradient is $M = 1,\;25,\;50$ and $75$. All results are averaged over $10^3$ independent runs.} 
\label{Fig2-RW}
\end{center}
\end{figure*}

\noindent
{\bf Remark:} Since BQ (and as a result BPG) is based on defining a kernel over system trajectories (quadratic Fisher kernel in Model~1~and Fisher kernel in Model~2), its performance degrades when the system generates trajectories of different size. This effect can be observed by most kernels that have been used in the literature for the trajectories generated by dynamical systems. This can be also observed in our experiments: BQ performs much better than MC in the ``Linear Quadratic Regulator" problem (Section~\ref{subsec:LQR}), in which all the system trajectories are of size 20, while its superiority over MC is less apparent in the ``Random Walk" problem (Section~\ref{sec:random-walk}). This is why we are not going to use BQ and BPG in the ``Mountain Car" (Section~\ref{sec:mountain-car}) and ``Ship Steering" (Section~\ref{sec:ship-steering}) problems, in which the system trajectories have different lengths.


\subsection{Mountain Car}
\label{sec:mountain-car}

In this section, we consider the mountain car problem as formulated in~\citet{Sutton98IR}, and report the results of applying the BAC and MCPG algorithms to optimize the policy parameters in this task.
The state $\vecx$ consists of the position $x$ and the velocity $\dot{x}$ of the car: $\vecx=(x,\dot{x})$. The reward is $-1$ on all time steps until the car reaches its goal at the top of the hill, which ends the episode. There are three possible actions: {\em forward}, {\em reverse}, and {\em zero}. The car moves according to the following simplified dynamics: 

\begin{table}
\centering
\begin{tabular}{lll}
$-1.2\leq x_{t+1} \leq 0.5\hspace{0.25in}$ & , & $\hspace{0.25in}-0.07\leq\dot{x}_{t+1} \leq 0.07\;,$ \\
$x_{t+1}=\text{bound}[x_t+\dot{x}_{t+1}]\hspace{0.25in}$ & , & $\hspace{0.25in}\dot{x}_{t+1}=\text{bound}\big[\dot{x}_t+0.001a_t-0.0025\cos(3x_t)\big]\;.$
\end{tabular}
\end{table}

\noindent
When $x_{t+1}$ reaches the left boundary, $\dot{x}_{t+1}$ is set to zero and when it reaches the right boundary, the goal is reached and the episode ends. Each episode starts from a random position and velocity uniformly sampled from their domains. We use the discount factor $\gamma=0.99$.

In order to define the policy, we first map the states $\vecx=(x,\dot{x})$ to the unit square $[0,1]\times[0,1]$. The policy used in our experiments has the following form:
\begin{equation*}
\mu(a_i|\vecx)=\frac{\exp\big(\phi(\vecx,a_i)^\top\vectheta\big)}{\sum_{j=1}^3\exp\big(\phi(\vecx,a_j)^\top\vectheta\big)}\;,\quad\quad i=1,2,3.
\end{equation*}
The policy feature vector is defined as $\phi(\vecx,a_i)=\big(\phi(\vecx)^\top\delta_{a_1a_i},\phi(\vecx)^\top\delta_{a_2a_i},\phi(\vecx)^\top\delta_{a_3a_i}\big)^\top$, where $\delta_{a_ja_i}$ is $1$ if $a_j=a_i$, and is $0$ otherwise. The state feature vector $\phi(\vecx)$ is composed of $16$ Gaussian functions arranged in a $4\times 4$ grid over the unit square as follows:
\begin{equation*}
\phi(\vecx)=\Big(\exp\big(-||\vecx-\bar{\vecx}_1||^2/(2\kappa^2)\big),\ldots,\exp\big(-||\vecx-\bar{\vecx}_{16}||^2/(2\kappa^2)\big)\Big)^\top,
\end{equation*}
where the $\bar{\vecx}_i$'s are the $16$ points of the grid $\{0,0.25,0.5,1\}\times\{0,0.25,0.5,1\}$ and $\kappa=1.3\times 0.25$. 

In Figure~\ref{Fig1-MC}, we compare the performance of BAC with a MCPG algorithm for $M=5,\;10,\;20,$ and $40$ episodes used to estimate each gradient. For BAC, we use Algorithm~\ref{alg:bac} with the number of policy updates set to $500$, a Gaussian state kernel $k_x(\vecx,\vecx')=\exp\big(-||\vecx-\vecx'||^2/(2\sigma_k^2)\big)$, with $\sigma_k=1.3\times 0.25$, and the state-action kernel $k_F(\vecz,\vecz')$. The Fisher information matrix is estimated using Algorithm~\ref{alg:G_est}. After every $50$ policy updates the learned policy is evaluated for $10^3$ episodes to estimate accurately the average number of steps to goal. Each evaluation episode starts from a random position and velocity uniformly chosen from their ranges, and continues until the car either reaches the goal or a limit of $200$ time-steps is exceeded. The experiment is repeated $100$ times for the entire horizontal axis to obtain average results and confidence intervals. The error bars in this figure are the standard errors of the performance of the algorithms.

\begin{figure*}[!h]
\begin{center}
\includegraphics[height=2in,width=0.475\textwidth]{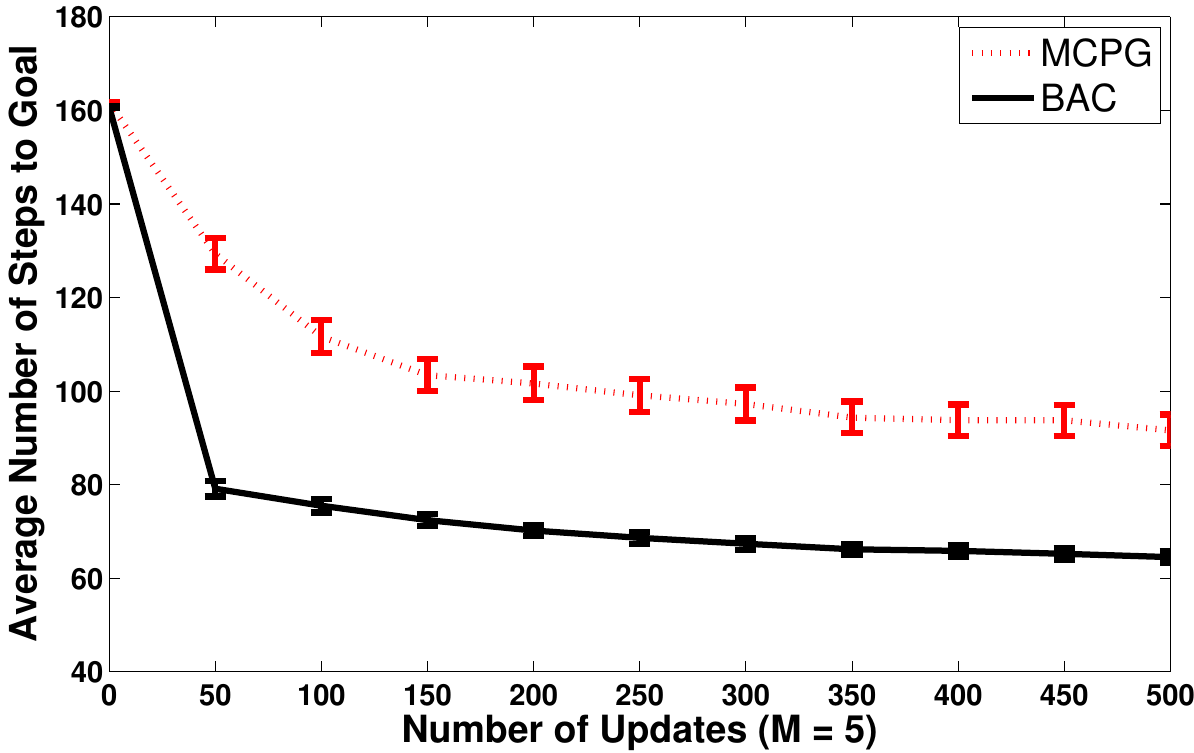}
\includegraphics[height=2in,width=0.475\textwidth]{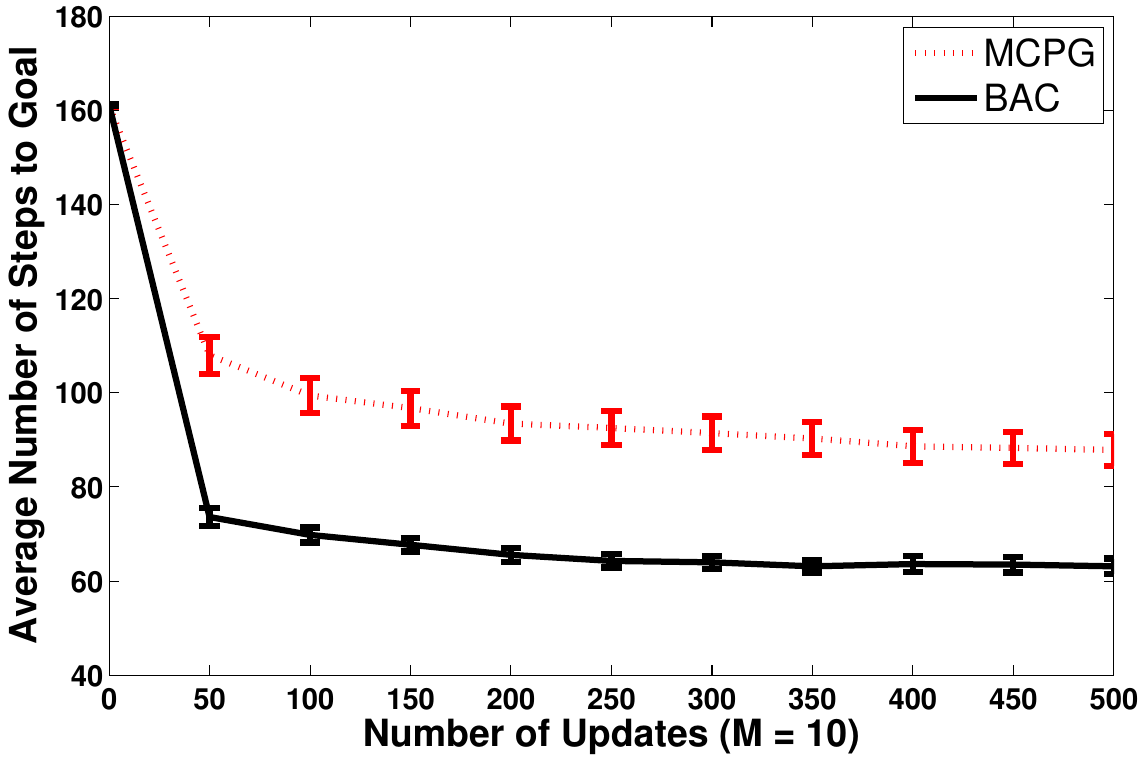} \\
\includegraphics[height=2in,width=0.475\textwidth]{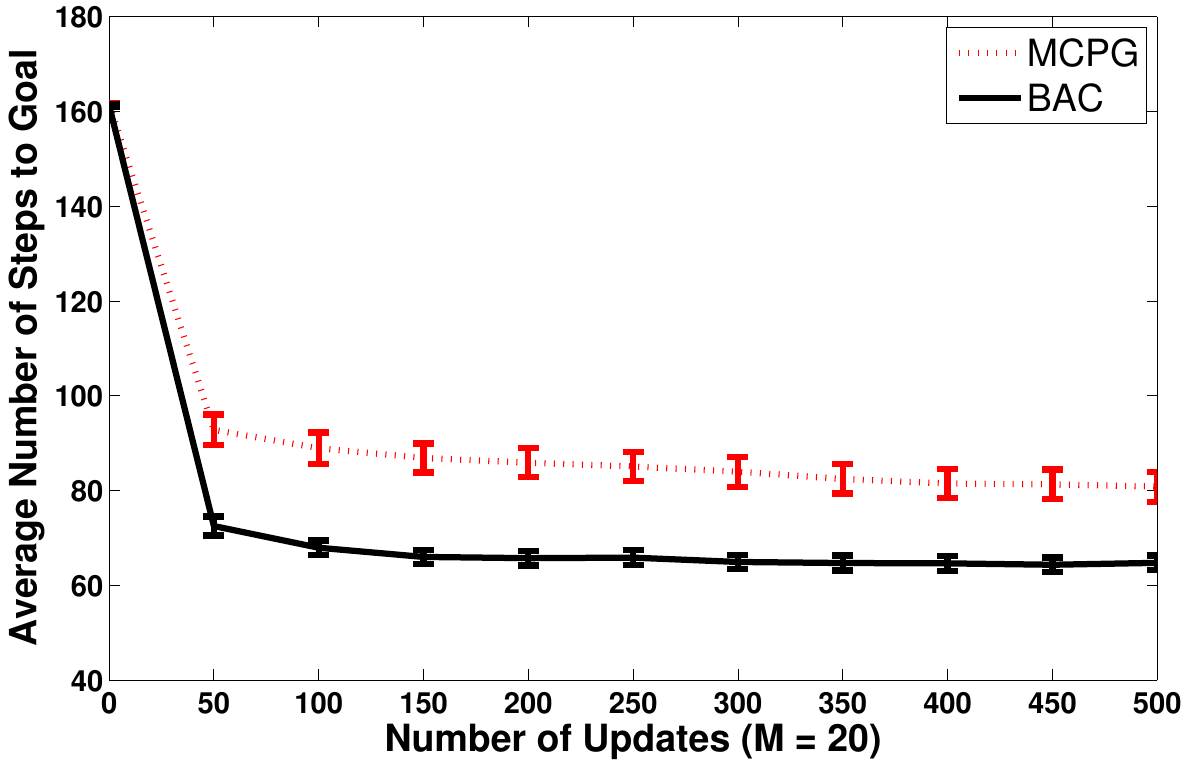}
\includegraphics[height=2in,width=0.475\textwidth]{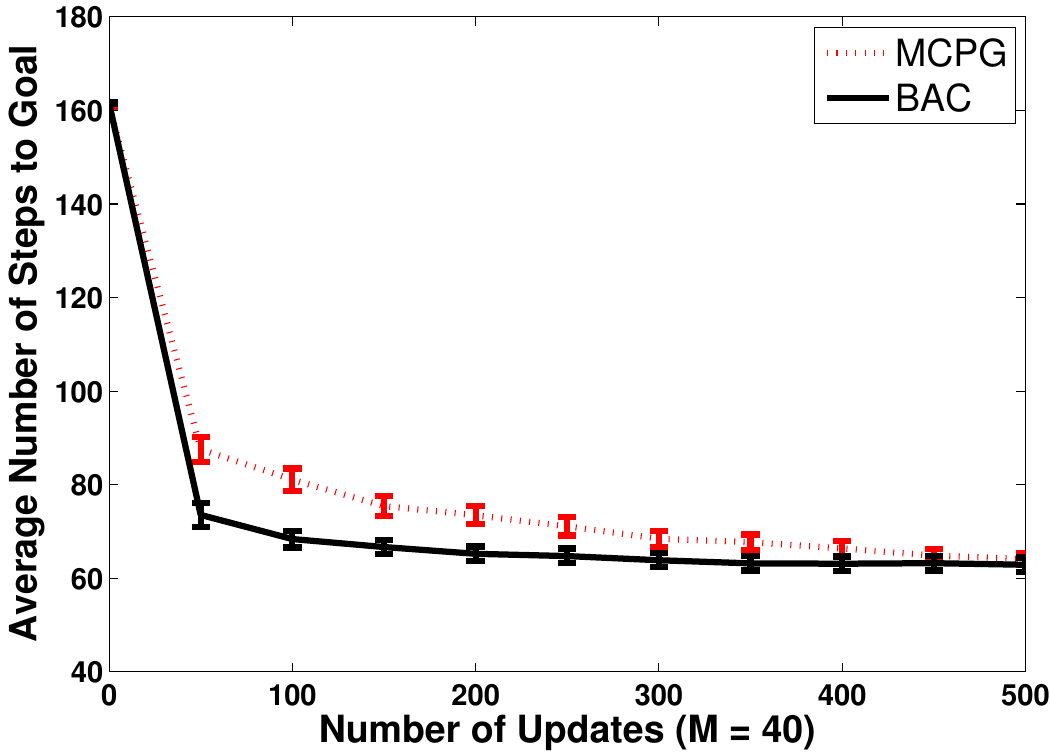}
\caption{The graphs depict the performance of the policies learned by BAC and a MCPG algorithm during $500$ policy updates in the mountain car problem. From left to right and top to bottom the number of episodes used to estimate the gradient is $M = 5,\;10,\;20,$ and $40$. All results are averaged over $100$ independent runs.}        
\label{Fig1-MC}
\end{center}
\end{figure*}

For a fixed sample size $M$, each method starts with an initial learning rate and decreases it according to the schedule $\beta_t=\beta_0\beta_c/(\beta_c+t)$. We tried many values of the learning rate parameters $(\beta_0,\beta_c)$ for MCPG and BAC, and those in Table~\ref{tab1-MC} yielded the best performance. Note that $\beta_c=\infty$ means that we used a fixed learning rate $\beta_0$ for that experiment. The graphs indicate that BAC performs better and has lower variance than MCPG. It is able to find a good policy with only $M=5$ sample size and its performance does not change much as the sample size is increased. On the other hand, the performance of MCPG improves and its variance is reduced as we increase the sample size. Note that for $M=40$, MCPG finally achieves a similar performance (still with slower rate) as BAC.  

\begin{table}
\begin{center}
\begin{tabular}{|c|c|c|c|c|}
\hline 
$\beta$ & $M=5$ & $M=10$ & $M=20$ & $M=40$ \\
\hline
MCPG & $0.025\;,\;\infty$ & $0.1\;,\;100$ & $0.2\;,\;100$ & $0.25\;,\;\infty$ \\
BAC & $0.025\;,\;\infty$ & $0.05\;,\;\infty$ & $0.1\;,\;\infty$ & $0.1\;,\;250$ \\
\hline
\end{tabular}
\end{center}
\caption{Learning rates used by the algorithms in the experiments of Figure~\ref{Fig1-MC}.}
\label{tab1-MC}
\end{table}  


\subsection{Ship Steering}
\label{sec:ship-steering}

In this section, we perform comparative experiments between BAC and MCPG on a more challenging problem in the continuous state continuous action \textit{ship steering} domain~\citep{Miller90NN}.

\paragraph{Domain Description}
In this domain, a ship is located in a $150 \times 150$ meter square water surface. At any point in time $t$, the state of the ship is described by four continuous variables that are defined below along with their range of values
\begin{equation*}
{\bf x}_t =(x_t,y_t,\theta_t,\dot\theta_t) \in [0\mathrm{m},150\mathrm{m}] \times
[0\mathrm{m},150\mathrm{m}] \times [-180^{\circ},180^{\circ}] \times 
[-15^{\circ}/\mathrm{s},15^{\circ}/\mathrm{s}],
\end{equation*}
where $x_t$ and $y_t$ represent the position of the ship, $\theta_t$ the angle between the vertical axis and the ship orientation, and $\dot\theta_t$ the actual turning rate (see the upper-left panel in Figure~\ref{fig:ship-m5}). At the beginning of each episode, the ship starts at $(x_1,y_1) = (40\mathrm{m},40\mathrm{m})$, with $\theta_1$ and $\dot\theta_1$ sampled uniformly at random from their ranges. The only available action variable is $a_t \in [-15^{\circ},15^{\circ}]$, which is the \textit{desired} turning rate. To model the ship inertia and water resistance, there is a $T=5$ time steps lag for the desired turning rate to become the actual turning rate. Moreover, the ship moves with the constant speed of $V = 3\mathrm{m/s}$ and $\Delta = 0.2\mathrm{s}$ is the sampling interval. The following set of equations summarizes the ship's dynamics:
\begin{align*}
x_{t+1} &= x_t + \Delta \; V  \sin \theta_t  \quad\quad\quad\quad\quad\quad\quad y_{t+1} = y_t + \Delta \; V \cos \theta_t \\
\theta_{t+1} &= \theta_t + \Delta \; \dot\theta_t \quad\quad\quad\quad\quad\quad\quad\quad\quad\;\; \dot\theta_{t+1} = \dot\theta_t + \frac{\Delta}{T}(a_t - \dot\theta_t)
\end{align*}
The goal of the ship is to navigate to $(x_*,y_*) = (100\mathrm{m},100\mathrm{m})$ within 500 times steps. If this does not happen or the ship moves out of the boundary, the episode terminates as a failure. The goal of the policy is to maximize the probability of the ship successfully reaching $(x_*,y_*)$. Thus, we set the discount factor to $\gamma=1$ in this problem.

\paragraph{Learning} 
For both MCPG and BAC, we used a CMAC function approximator with $9$ four dimensional tilings, each of them discretizing the state space into $5 \times 5 \times 36 \times 5 = 4500$ tiles. Therefore, each policy parameter $\vecw$ is of size $N = 9 \times 4500 = 40500$. Each state ${\bf x}$ is represented by a binary vector ${\mathbf \phi}({\bf x})$, where $\phi_i({\bf x}) = 1$ if and only if the state ${\bf x}$ falls in the $i$th tile, and thus, $\sum_{i=1}^N \phi_i({\bf x}) \leq 9$. To define a precise mapping from states to actions, $\vecw_t: {\bf x_t}=(x_t,y_t,\theta_t,\dot\theta_t) \rightarrow a_t$, we first sample $\tilde a_t$ from the Gaussian
\begin{equation*}
\tilde a_t \sim \mathcal{N}\left(\frac{\sum_{i=1}^N \vecw_t^{(i)}\phi_i({\bf x}_t)}{\sum_{i=1}^N \phi_i({\bf x}_t)},1\right),
\end{equation*}
and then map it to the allowed range $[-15^{\circ},15^{\circ}]$ using the sigmoid transformation
\begin{equation*}
a_t = 15^{\circ} \cdot \frac{2}{\pi} \cdot \arctan\left(\frac{\pi}{2} \cdot 
\tilde a_t 
\right).
\end{equation*}
For the BAC experiments, we used the Gaussian state kernel $k_x({\bf x},{\bf x}')=\exp(-||{\bf x}-{\bf x}'||^2/(2\sigma_k^2))$, with $\sigma_k=1$ and the state-action kernel $k_F(\vecz,\vecz')$, i.e.,~the Fisher kernel.

\paragraph{Setup}
In order to improve the computational efficiency, we use several numerical approximations. First, to calculate the score function for a trajectory (Equation~\ref{eq:score}) in both MCPG and BAC, we approximate the gradient of the action distribution in $a_t$ with the one in $\tilde a_t$, i.e.,
\begin{equation*}
\nabla\log\mu(a_t|{\bf x}_t;\vecw_t) \approx\nabla\log\mu(\tilde a_t|{\bf x}_t;\vecw_t).
\end{equation*}
Second, we calculate the gradient using the online sparsification procedure described in Section~\ref{subsec:BPG-sparsification}. Finaly, we never explicitly calculate the inverse of the Fisher information matrix $\hat{\matG}$ and instead calculate the product of $\hat{\matG}^{-1}$ with the score. For the numerical stability we also add $10^{-6}$ to the diagonal of $\hat{\matG}$.

Similar to the other experiments in the paper, we varied the number of trajectories used to estimate the gradient of a policy as $M=5$, $10$, and $20$. Table~\ref{tab:shiprates} shows the best values of the learning rate $\beta$ for both MCPG and BAC for different values of $M$. To evaluate each method, we ran $100$ independent learning trials. At each trial, we evaluate the performance of the policy every $100$ iterations by executing it $100$ (independent) times with $\theta_1$ and $\dot\theta_1$ randomly sampled. For each of these execution, we observe if the ship reached $(x_*,y_*)$ within $500$ steps and estimate the policy success ratio. We set the total number of gradient updates to $T=3000$ for $M=5$ and $10$ and to $T=1000$ for $M=20$.

\begin{table}
\begin{center}
\begin{tabular}{|c|c|c|c|c|}
\hline 
$\beta$ & $M=5$ & $M=10$ & $M=20$  \\
\hline
MCPG & $0.01$ & $0.01$ & $0.01$  \\
BAC & $0.5$ & $0.4$ & $0.5$ \\
\hline
\end{tabular}
\end{center}
\caption{Learning rates used by the algorithms in the experiments of Figure~\ref{fig:ship-m5}.}
\label{tab:shiprates}
\end{table}  

\paragraph{Results}
The results for all the experiments are presented in Figure~\ref{fig:ship-m5} along with their standard deviations. Naturally, using more trajectories for the gradient update improves both methods. However, this improvement is bigger for the BAC method. In the case of $M=5$, MCPG produces slightly better policies at the beginning of learning, but is soon outperformed by BAC. For $M=10$ and $20$, BAC produces better policies from the beginning, especially for $M=20$. This is consistent with the results of the other experimental domains reported in the paper. For all values of $M$, BAC converges to a policy with a better success ratio than MCPG. Finally, as expected, BAC has usually less variance in its performance than MCPG.

\begin{figure*}[!ht]
\begin{center}
\includegraphics[width=0.45\textwidth]{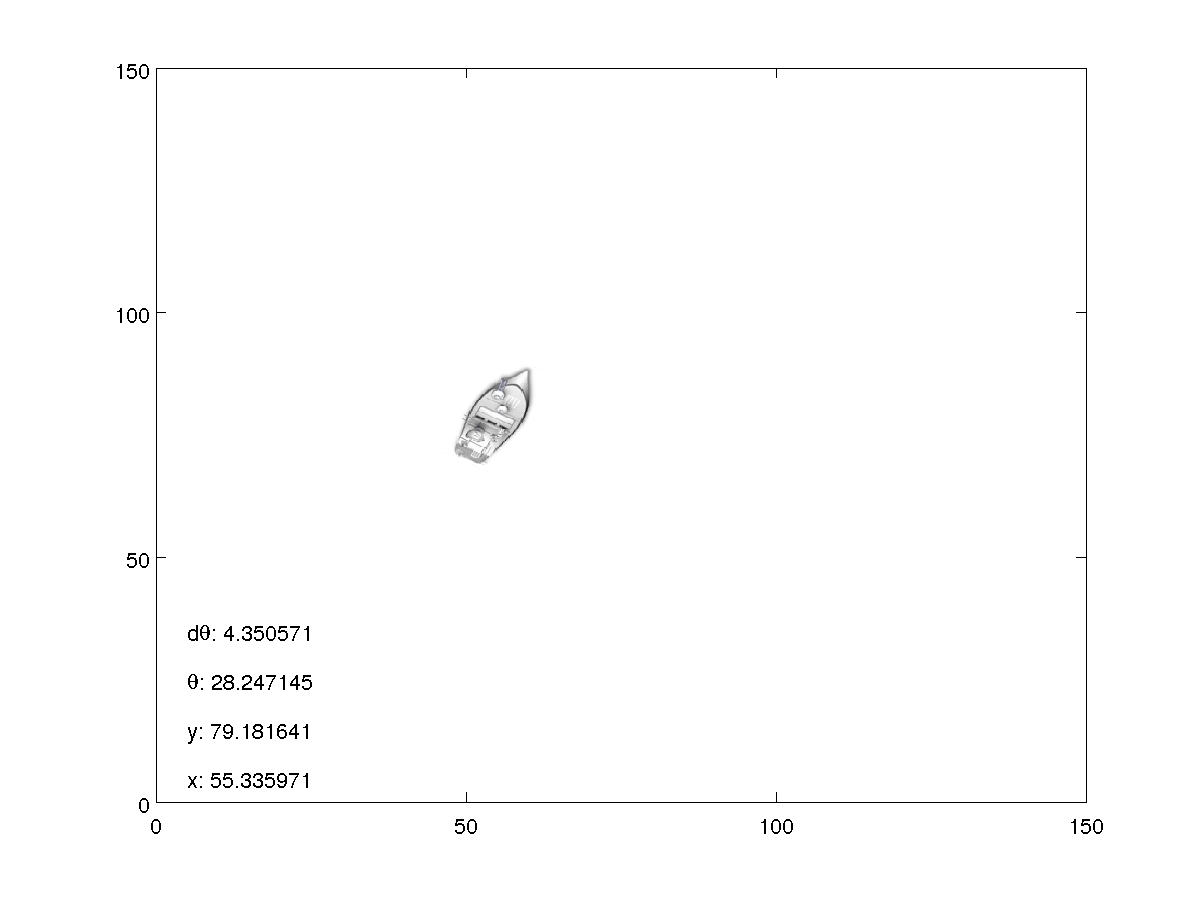}
\includegraphics[width=0.45\textwidth]{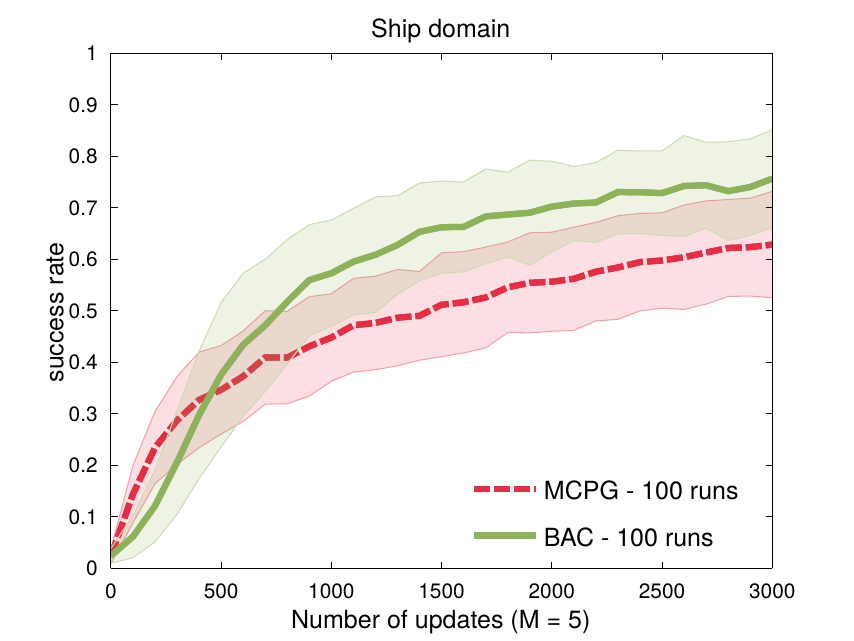}
\includegraphics[width=0.45\textwidth]{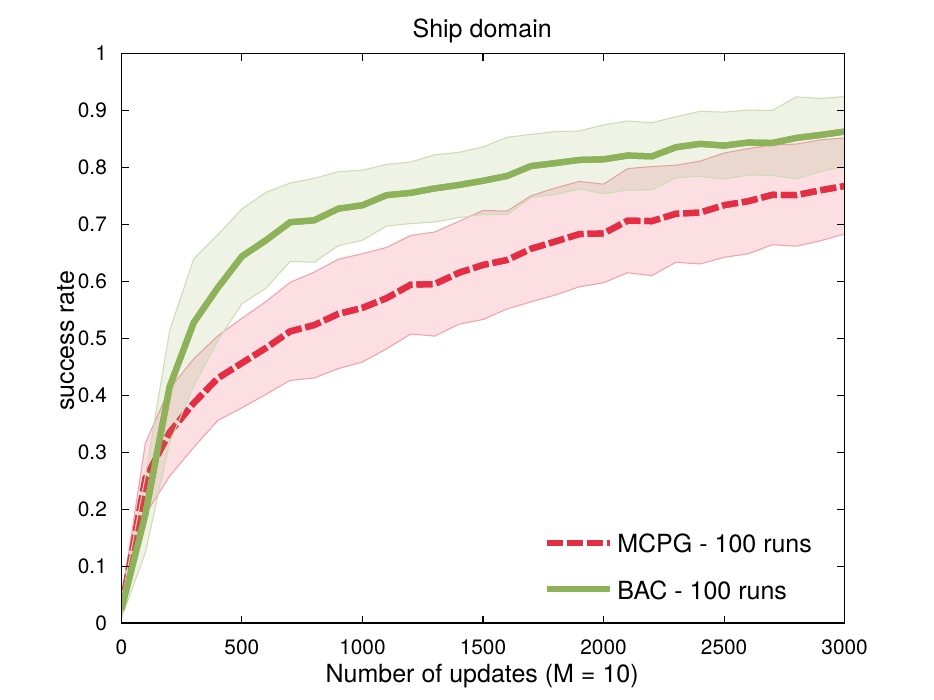}
\includegraphics[width=0.45\textwidth]{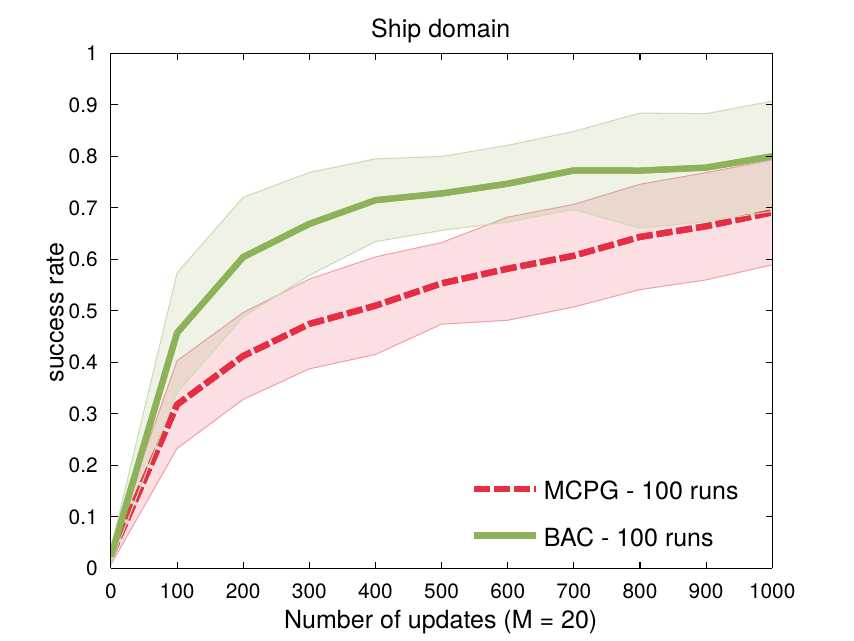}
\caption{Success rate of the policies learned by BAC and MCPG in the ship steering problem.}        
\label{fig:ship-m5}
\end{center}
\end{figure*}


\section{Other Advancements in Bayesian Reinforcement Learning}
\label{sec:related-work}

The algorithms presented in this paper belong to the class of {\em Bayesian model-free RL}, as they do not assume that the system's dynamic is known and do not explicitly construct a model of the system. In recent years, Bayesian methodology has been used to develop algorithms in several other areas of RL. In this section, we provide a brief overview of these results (for more details, see the survey by~\citealt{Ghavamzadeh15BR}).

Another widely-used class of RL algorithms are those that build an explicit model of the system and use it to find a good (or optimal) policy, thus, are known as {\em model-based RL} algorithms. Recent years have witnessed many applications of the Bayesian methodology to this class of RL algorithms. The main idea of model-based Bayesian RL is to explicitly maintain a posterior over the model parameters and to use it to select actions in order to appropriately balance exploration and exploitation. The class of model-based Bayesian RL algorithms include those that work with MDPs and those that work with POMDPs (e.g.,~\citealt{Ross08BAPOMDP,doshi08}). The MDP-based algorithms can be further divided to those that are offline (e.g.,~\citealt{duff01bamdpfsc,poupart06beetle}), those that are online (e.g.,~\citealt{dearden99model,strens00,wang05sparse,Ross08BAPOMDP}), and those that have probably approximately correct (PAC)-guarantees (e.g.,~\citealt{kolter09,asmuth09,sorg10}). 

The use of Bayesian methodology has also been explored to solve the {\em inverse RL} (IRL) problem, i.e.,~learning the underlying model of the decision-making agent (expert) from its observed behavior and the dynamics of the system~\citep{Russell98LA}.  The main idea of Bayesian IRL (BIRL) is to use a prior to encode the reward preference and to formulate the compatibility with the expert's policy as a likelihood in order to derive a probability distribution over the space of reward functions, from which the expert's reward function is somehow extracted. The most notable works in the area of BIRL include those by~\citet{Ramachandran07BI,Choi11MA,Choi12NV,Michini12BN,Michini12IE}. 

Bayesian techniques have also been used to derive algorithms for the {\em collaborative multi-agent RL} problem. When dealing with multi-agent systems, the complexity of the decision problem is increased in the following way: while single-agent BRL requires maintaining a posterior over the MDP parameters (in the case of model-based methods) or over the value/policy (in the case of model-free methods), in multi-agent BRL, it is also necessary to keep a posterior over the policies of the other agents.~\citet{Chalkiadakis13CM} showed that this belief can be maintained in a tractable manner subject to certain structural assumptions on the domain, for example that the strategies of the agents are independent of each other.  

{\em Multi-task RL} (MTRL) is another area that has witnessed the application of Bayesian methodology. All approaches to MTRL assume that the tasks share similarity in some components of the problem such as dynamics, reward structure, or value function. The Bayesian MTRL methods assume that the shared components are drawn from a common generative model~\citep{Wilson07MR,Mehta08TV,Lazaric10BM}. In~\citet{Mehta08TV}, tasks share the same dynamics and reward features, and only differ in the weights of the reward function. The proposed method initializes the value function for a new task using the previously learned value functions as a prior.~\citet{Wilson07MR} and~\citet{Lazaric10BM} both assume that the distribution over some components of the tasks is drawn from a hierarchical Bayesian model. 

Bayesian learning methods have also been used for regret minimization in multi-armed bandits. This area that goes back to the seminal work of~\citet{Gittins79BP}, has become very active with the Bayesian version of the upper confidence bound (UCB) algorithm~\citep{Kaufmann12BU} and the recent advancements in the analysis of Thompson Sampling~\citep{Agrawal11AT,Kaufmann12TS,agrawal2013further,agrawal2013thompson,Russo14IT,Gopalan14TS,guha2014stochastic,Liu2015prior} and its state-of-the-art empirical performance~\citep{Scott10MB,Chapelle11EE}, which has also led to its use in several industrial applications~\citep{Graepel10WB,Tang13AA}.


\section{Discussion}
\label{sec:discussion}

In this paper, we first proposed an alternative approach to the conventional frequentist (Monte-Carlo based) policy gradient estimation procedure. Our approach is based on {\em Bayesian quadrature}~\citep{Ohagan91BQ}, a Bayesian method for integral evaluation. The idea is to model the gradient of the expected return with respect to the policy parameters, which is of the form of an integral, as Gaussian processes (GPs). This is done by dividing the integrand into two parts, treating one as a random function (or random field), whose random nature reflects our subjective uncertainty concerning its true identity. This allows us to incorporate our prior knowledge of this term (part) into its prior distribution. Observing (possibly noisy) samples of this term allows us to employ Bayes' rule to compute a posterior distribution of it conditioned on these samples. This in turn induces a posterior distribution over the value of the integral, which is the gradient of the expected return. By properly partitioning the integrand and by appropriately selecting a prior distribution, a closed-form expression for the posterior moments of the gradient of the expected return is obtained. We proposed two different ways of partitioning the integrand resulting in two distinct Bayesian models. For each model, we showed how the posterior moments of the gradient conditioned on the observed data are calculated. In line with previous work on Bayesian quadrature, our Bayesian approach tends to significantly reduce the number of samples needed to obtain accurate gradient estimates. Moreover, estimates of the natural gradient and the gradient covariance are provided at little extra cost. We performed detailed experimental comparisons of the Bayesian policy gradient (BPG) algorithms presented in the paper with classic Monte-Carlo based algorithms on a bandit problem as well as on a linear quadratic regulator problem. The experimental results are encouraging, but we conjecture that even better gains may be attained using this approach. This calls for additional theoretical and empirical work. It is important to note that the gradient estimated by Algorithm~\ref{alg:BPG-Eval} may be employed in conjunction with conjugate-gradients and line-search methods for making better use of the gradient information. We also showed that the models and algorithms presented in this paper can be extended to partially observable problems without any change along the same lines as~\citet{Baxter01IP}. This is due to the fact that our BPG framework considers complete system trajectories as its basic observable unit, and thus, does not require the dynamic within each trajectory to be of any special form. This generality has the downside that our proposed framework cannot take advantage of the Markov property when the system is Markovian. 

To address this issue, we then extended our BPG framework to actor-critic algorithms and presented a new Bayesian take on the actor-critic architecture. By using GPs and choosing their prior distributions to make them compatible with a parametric family of policies, we were able to derive closed-form expressions for the posterior distribution of the policy gradient updates. The posterior mean is used to update the policy and the posterior covariance to gauge the reliability of this update. Our Bayesian actor-critic (BAC) framework uses individual state-action-reward transitions as its basic observable unit, and thus, is able to take advantage of the Markov property of the system trajectories (when the system is indeed Markovian). This improvement seems to be borne out in our experiments, where BAC provides more accurate estimates of the policy gradient than either of the two BPG models for the same amount of data. Similar to BPG, another feature of BAC is that its natural-gradient variant is obtained at little extra cost. For both BPG and BAC, we derived the sparse form of the algorithms, which would make them significantly more time and memory efficient. Finally, we performed an experimental evaluation of the BAC algorithm, comparing it with classic Monte-Carlo based policy gradient algorithms, as well as our BPG algorithms, on a random walk problem, the widely used mountain car problem~\citep{Sutton98IR}, and the continuous state and continuous action ship steering domain~\citep{Miller90NN}.

Additional experimental work is required to investigate the behavior of BPG and BAC algorithms in larger and more realistic domains, involving continuous and high-dimensional state and action spaces. The BPG and BAC algorithms proposed in the paper use only the posterior mean of the gradient in their updates. We conjecture that the second-order statistics obtained from BPG and BAC (both in the actor and critic) may be used to devise more efficient algorithms. In one of the experiments in Section~\ref{sec:BPG-experiments}, we employed the covariance information provided by Algorithm~\ref{alg:BPG-Eval} for risk-aware selection of the step size in the gradient updates, which showed promising performance. Other interesting directions for future work include {\bf 1)} investigating other possible partitions of the integrand in the expression for $\nabla\eta_B(\vectheta)$ into a GP term and a deterministic term, {\bf 2)} using other types of kernel functions such as sequence kernels, {\bf 3)} combining our approach with MDP model estimation to allow transfer of learning between different policies (model-based Bayesian policy gradient), and {\bf 4)} investigating more efficient methods for estimating the Fisher information matrix. Another direction is to derive a fully non-parametric actor-critic algorithm. In BAC, the critic is based on Gaussian process temporal difference learning, which is a non-parametric method, while the actor uses a family of parameterized policies. The idea here would be to replace the actor in the BAC algorithm with a non-parametric actor that performs gradient search in a function space (e.g., a reproducing kernel Hilbert space) of policies. 


\acks{Part of the computational experiments was conducted using the Grid'5000 experimental testbed (https://www.grid5000.fr). Yaakov Engel was supported by an Alberta Ingenuity fellowship.} 


\newpage

\appendix
\section{Proof of Proposition~\ref{prop:3}}
\label{sec:proofP3}

We start the proof with the $M\times 1$ vector $\vecb$, whose $i$th element can be written as
\begin{small}
\begin{align*}
(\vecb)_i
&=
\int k(\xi,\xi_i)\Pr(\xi;\vectheta)d\xi \\
&\stackrel{\text{(a)}}{=} \int\big(1+\vecu(\xi)^\top\matG^{-1}\vecu(\xi_i)\big)^2\Pr(\xi;\vectheta)d\xi \\
&\stackrel{\text{(b)}}{=}
\int\Pr(\xi;\vectheta)d\xi+2\left(\int\vecu(\xi)\Pr(\xi;\vectheta)d\xi\right)^\top\matG^{-1}\vecu(\xi_i)+\int\vecu(\xi_i)^\top\matG^{-1}\vecu(\xi)\vecu(\xi)^\top\matG^{-1}\vecu(\xi_i)\Pr(\xi;\vectheta)d\xi \\
&\stackrel{\text{(c)}}{=}
1+\big(\vecu(\xi_i)^\top\matG^{-1}\big)\left(\int\vecu(\xi)\vecu(\xi)^\top\Pr(\xi;\vectheta)d\xi\right)\big(\matG^{-1}\vecu(\xi_i)\big) \\
&\stackrel{\text{(d)}}{=}
1+\big(\vecu(\xi_i)^\top\matG^{-1}\big)\matG\big(\matG^{-1}\vecu(\xi_i)\big) \stackrel{\text{(e)}}{=} 1+\vecu(\xi_i)^\top\matG^{-1}\vecu(\xi_i)
\end{align*}
\end{small}

\vspace{-0.1in}
\noindent
{\bf (a)} substitutes $k(\xi,\xi_i)$ with the quadratic Fisher kernel from Equation~\ref{kernel1}, {\bf (b)} is algebra, {\bf (c)} follows from {\bf (i)} $\int\Pr(\xi;\vectheta)d\xi=1$, and {\bf (ii)} $\int\vecu(\xi)\Pr(\xi;\vectheta)d\xi=\int\nabla\log\Pr(\xi;\vectheta)\Pr(\xi;\vectheta)d\xi$ $=\int\nabla\Pr(\xi;\vectheta)d\xi=\nabla\int\Pr(\xi;\vectheta)d\xi=\nabla(1)=0$, {\bf (d)} is the result of replacing the integral with the Fisher information matrix $\matG$, {\bf (e)} is algebra, and thus, the claim follows. \\
\newline  
Now the proof for the scalar $b_0$
\begin{small}
\begin{align}
b_0
&=
\iint k(\xi,\xi')\Pr(\xi;\vectheta)\Pr(\xi';\vectheta)d\xi d\xi' \nonumber \\
&\stackrel{\text{(a)}}{=}
\iint\big(1+\vecu(\xi)^\top\matG^{-1}\vecu(\xi')\big)^2\Pr(\xi;\vectheta)\Pr(\xi';\vectheta)d\xi d\xi' \nonumber\\
&\stackrel{\text{(b)}}{=}
\iint\Pr(\xi;\vectheta)\Pr(\xi';\vectheta)d\xi d\xi'+2\iint\vecu(\xi)^\top\matG^{-1}\vecu(\xi')\Pr(\xi;\vectheta)\Pr(\xi';\vectheta)d\xi d\xi' \nonumber\\
&+ 
\iint\vecu(\xi)^\top\matG^{-1}\vecu(\xi')\vecu(\xi')^\top\matG^{-1}\vecu(\xi)\Pr(\xi;\vectheta)\Pr(\xi';\vectheta)d\xi d\xi' \nonumber\\
&\stackrel{\text{(c)}}{=}
1+2\left(\int\vecu(\xi)\Pr(\xi;\vectheta)d\xi\right)^\top\matG^{-1}\left(\int\vecu(\xi')\Pr(\xi';\vectheta)d\xi'\right) \nonumber\\
&+\int\vecu(\xi)^\top\matG^{-1}\left(\int\vecu(\xi')\vecu(\xi')^\top\Pr(\xi';\vectheta)d\xi'\right)\matG^{-1}\vecu(\xi)\Pr(\xi;\vectheta)d\xi \nonumber\\
&\stackrel{\text{(d)}}{=}
1+\int\vecu(\xi)^\top\matG^{-1}\vecu(\xi)\Pr(\xi;\vectheta)d\xi
\label{proof1}
\end{align}
\end{small}

\vspace{-0.1in}
\noindent
{\bf (a)} substitutes $k(\xi,\xi')$ with the quadratic Fisher kernel from Equation~\ref{kernel1}, {\bf (b)} is algebra, {\bf (c)} follows from {\bf (i)} $\iint\Pr(\xi;\vectheta)\Pr(\xi';\vectheta)d\xi d\xi'=1$, and {\bf (ii)} $\int\vecu(\xi)\Pr(\xi;\vectheta)d\xi=0$, and finally {\bf (d)} is the result of replacing the integral within the parentheses with the Fisher information matrix $\matG$.

The Fisher information matrix $\matG$ is positive definite and symmetric. Thus, it can be written as $\matG=\matV\matLambda\matV^\top$, where $\matV=[\vecv_1,\ldots,\vecv_n]$ and $\matLambda=\diag[\lambda_1,\ldots,\lambda_n]$ are the matrix of orthonormal eigenvectors and the diagonal matrix of eigenvalues of matrix $\matG$, respectively. By replacing $\matG^{-1}$ with $\matV\matLambda^{-1}\matV^\top$ in Equation~\ref{proof1} we obtain 
\begin{small}
\begin{align*}
b_0
&=
1+\int\vecu^\top(\xi)\matV\matLambda^{-1}\matV^\top\vecu(\xi)\Pr(\xi;\vectheta)d\xi \\
&\stackrel{\text{(a)}}{=}
1+\int\left(\matV^\top\vecu(\xi)\right)^\top\matLambda^{-1}\left(\matV^\top\vecu(\xi)\right)\Pr(\xi;\vectheta)d\xi
\stackrel{\text{(b)}}{=} 
1+\int\left(\sum_{i=1}^n\lambda^{-1}_i\left(\matV^\top\vecu(\xi)\right)_i^2\right)\Pr(\xi;\vectheta)d\xi \\
&\stackrel{\text{(c)}}{=}
1+\sum_{i=1}^n\lambda^{-1}_i\left(\int\left(\vecv_i^\top\vecu(\xi)\right)^2\Pr(\xi;\vectheta)d\xi\right) 
\stackrel{\text{(d)}}{=}
1+\sum_{i=1}^n\lambda^{-1}_i\left(\int\vecv_i^\top\vecu(\xi)\vecv_i^\top\vecu(\xi)\Pr(\xi;\vectheta)d\xi\right) \\
&\stackrel{\text{(e)}}{=}
1+\sum_{i=1}^n\lambda^{-1}_i\left(\int\vecv_i^\top\vecu(\xi)\vecu(\xi)^\top\vecv_i\Pr(\xi;\vectheta)d\xi\right)
\stackrel{\text{(f)}}{=}
1+\sum_{i=1}^n\lambda^{-1}_i\vecv_i^\top\left(\int\vecu(\xi)\vecu(\xi)^\top\Pr(\xi;\vectheta)d\xi\right)\vecv_i \\
&\stackrel{\text{(g)}}{=} 
1+\sum_{i=1}^n\lambda^{-1}_i\vecv_i^\top\matG\vecv_i \stackrel{\text{(h)}}{=} 1+\sum_{i=1}^n\lambda^{-1}_i\vecv_i^\top\lambda_i\vecv_i=1+\sum_{i=1}^n\vecv_i^\top\vecv_i=1+\sum_{i=1}^n||\vecv_i||^2 
\stackrel{\text{(i)}}{=}
1+n
\end{align*}
\end{small}

\vspace{-0.1in}
\noindent
{\bf (a)} and {\bf (b)} are algebra, {\bf (c)} is the result of switching the sum and the integral, {\bf (d)} is algebra, {\bf (e)} follows from the fact that $\vecv_i^\top\vecu(\xi)$ is a scalar, and thus, can be replaced by its transpose, {\bf (f)} is algebra, {\bf (g)} substitutes the integral within the parentheses with the Fisher information matrix $\matG$, {\bf (h)} replaces $\matG\vecv_i$ with $\lambda_i\vecv_i$, {\bf (i)} follows from the orthonormality of $\vecv_i$'s, and thus, the claim follows.


\section{Proof of Proposition~\ref{prop:4}}
\label{sec:proofP4}

We start with the proof of $\matB$. This $n\times M$ matrix may be written as
\begin{small}
\begin{align*} 
\matB
&=
\int\nabla\Pr(\xi;\vectheta)\veck(\xi)^\top d\xi=\int\nabla\Pr(\xi;\vectheta)\left[k(\xi,\xi_1),\ldots,k(\xi,\xi_M)\right]d\xi \\
&\stackrel{\text{(a)}}{=} 
\int\nabla\Pr(\xi;\vectheta)\left[\vecu(\xi)^\top\matG^{-1}\vecu(\xi_1),\ldots,\vecu(\xi)^\top\matG^{-1}\vecu(\xi_M)\right]d\xi \\
&\stackrel{\text{(b)}}{=}  
\left(\int\nabla\Pr(\xi;\vectheta)\vecu(\xi)^\top d\xi\right)\matG^{-1}\left[\vecu(\xi_1),\ldots,\vecu(\xi_M)\right] \\
&\stackrel{\text{(c)}}{=}  
\left(\int\vecu(\xi)\vecu(\xi)^\top\Pr(\xi;\vectheta)d\xi\right)\matG^{-1}\left[\vecu(\xi_1),\ldots,\vecu(\xi_M)\right] \\
&\stackrel{\text{(d)}}{=}  
\matG\matG^{-1}\left[\vecu(\xi_1),\ldots,\vecu(\xi_M)\right] 
\stackrel{\text{(e)}}{=} 
\left[\vecu(\xi_1),\ldots,\vecu(\xi_M)\right]=\matU
\end{align*}
\end{small}

\vspace{-0.1in}
\noindent
{\bf (a)} substitutes $k(\xi,\xi_i)$ with the Fisher kernel from Equation~\ref{kernel2}, {\bf (b)} is algebra, {\bf (c)} follows from $\nabla\Pr(\xi;\vectheta)=\vecu(\xi)\Pr(\xi;\vectheta)$, {\bf (d)} substitutes the integral within the parentheses with the Fisher information matrix $\matG$, {\bf (e)} is algebra, and thus, the claim follows. \\
\newline
Now the proof for the $n\times n$ matrix $\matB_0$
\begin{small}
\begin{align*}
\matB_0
&=
\iint k(\xi,\xi')\nabla\Pr(\xi;\vectheta)\nabla\Pr(\xi';\vectheta)^\top d\xi d\xi' \\
&\stackrel{\text{(a)}}{=} 
\iint\nabla\Pr(\xi;\vectheta)k(\xi,\xi')\nabla\Pr(\xi';\vectheta)^\top d\xi d\xi' \\
&\stackrel{\text{(b)}}{=} 
\iint\big(\vecu(\xi)\Pr(\xi;\vectheta)\big)\vecu(\xi)^\top\matG^{-1}\vecu(\xi')\big(\vecu(\xi')\Pr(\xi';\vectheta)\big)^\top d\xi d\xi' \\
&\stackrel{\text{(c)}}{=}
\left(\int\vecu(\xi)\vecu(\xi)^\top\Pr(\xi;\vectheta)d\xi\right)\matG^{-1}\left(\int\vecu(\xi')\vecu(\xi')^\top\Pr(\xi';\vectheta)d\xi'\right)
\stackrel{\text{(d)}}{=}
\matG\matG^{-1}\matG=\matG
\end{align*}
\end{small}

\vspace{-0.1in}
\noindent
{\bf (a)} follows from the fact that $k(\xi,\xi')$ is scalar, {\bf (b)} substitutes $k(\xi,\xi')$ with the Fisher information kernel from Equation~\ref{kernel2} and $\nabla\Pr(\xi;\vectheta)$ with $\vecu(\xi)\Pr(\xi;\vectheta)$, {\bf (c)} is algebra, {\bf (d)} is the result of substituting the integrals within the parentheses with the Fisher information matrix $\matG$, and thus, the claim follows.


\section{Proof of Proposition~\ref{prop:5}}
\label{sec:proofP5}

Here we only show the proof for Model 1, the proof for Model 2 is straightforward following the same arguments. The sparse approximations of the kernel matrix $\matK$ and kernel vector $\veck(\cdot)$ may be written as $\matK\approx\matA\tilde{\matK}\matA^\top$ and 
$\veck(\cdot)\approx\matA\tilde{\veck}(\cdot)$, respectively (Equations.~2.2.8 and~2.2.9 in~\citealp{Engel05AR}). If we replace $\matK$ and $\veck(\cdot)$ in Equation~\ref{model1_posteriors} with their sparse approximations, we obtain
\begin{align}
\label{model1_sparse_posteriors}
\exptE\big[\nabla\eta_B(\vectheta)|\D_M\big]&=\matY\big(\matA\tilde{\matK}\matA^\top+\matSigma\big)^{-1}\vecb,\nonumber \\
\Cov\big[\nabla\eta_B(\vectheta)|\D_M\big]&= \Big[b_0-\vecb^\top \big(\matA\tilde{\matK}\matA^\top+\matSigma\big)^{-1}\vecb\Big]\matI.
\end{align} 
Sparsification does not change $b_0$ and it remains equal to $n+1$ (see Proposition~\ref{prop:3}), however it modifies $\vecb$ to
\begin{equation*}
\vecb=\int\veck(\xi)\Pr(\xi;\vectheta)d\xi=\int\matA\tilde{\veck}(\xi)\Pr(\xi;\vectheta)d\xi=\matA\int\tilde{\veck}(\xi)\Pr(\xi;\vectheta)d\xi=\matA\tilde{\vecb},
\end{equation*}
where $\tilde{\vecb}=\int\tilde{\veck}(\xi)\Pr(\xi;\vectheta)d\xi$ is exactly $\vecb$, only the kernel vector $\veck(\cdot)$ has been replaced by the sparse kernel vector $\tilde{\veck}(\cdot)$. Thus using Proposition~\ref{prop:3}, we have $(\tilde{\vecb})_i=1+\vecu(\xi_i)^\top\matG^{-1}\vecu(\xi_i)$, with 
$\xi_i\in\tilde{\D}$. By replacing $\vecb$ with $\matA\tilde{\vecb}$ in Equation~\ref{model1_sparse_posteriors}, we obtain

\begin{small}
\begin{align*}
\exptE\big[\nabla\eta_B(\vectheta)|\D_M\big]&=\matY\big(\matA\tilde{\matK}\matA^\top+\matSigma\big)^{-1}\matA\tilde{\vecb}=\matY\matSigma^{-1}\big(\matA\tilde{\matK}\matA^\top\matSigma^{-1}+\matI\big)^{-1}\matA\tilde{\vecb},\nonumber \\
\Cov\big[\nabla\eta_B(\vectheta)|\D_M\big]&=\Big(b_0-\tilde{\vecb}^\top\matA^\top\big(\matA\tilde{\matK}\matA^\top+\matSigma\big)^{-1}\matA\tilde{\vecb}\Big)\matI=\Big(b_0-\tilde{\vecb}^\top\matA^\top\matSigma^{-1}\big(\matA\tilde{\matK}\matA^\top\matSigma^{-1}+\matI\big)^{-1}\matA\tilde{\vecb}\Big)\matI.
\end{align*} 
\end{small}

\vspace{-0.05in}
\noindent
The claim follows using Lemma 1.3.2 in~\citet{Engel05AR}.


\section{Proof of Proposition~\ref{prop:UV}}
\label{sec:proofP6}

We start the proof with the $n\times(t+1)$ matrix $\matB_t$, whose $i$th column may be written as
\begin{small}
\begin{align*}
(\matB_t)_i&=\int_\Z d\vecz\vecg(\vecz;\vectheta)k(\vecz,\vecz_i)=\int_\Z d\vecz\pi^\mu(\vecz)\nabla\log\mu(a|x;\vectheta)\Big(k_x(x,x_i)+k_F(\vecz,\vecz_i)\Big) \\
&=\int_\Z d\vecz\pi^\mu(\vecz)\nabla\log\mu(a|x;\vectheta)k_x(x,x_i)+\int_\Z d\vecz\pi^\mu(\vecz)\nabla\log\mu(a|x;\vectheta)k_F(\vecz,\vecz_i) \\
&=\int_\X dx\nu^\mu(x)k_x(x,x_i)\int_\A da\mu(a|x;\vectheta)\nabla\log\mu(a|x;\vectheta)+\int_\Z d\vecz\pi^\mu(\vecz)\vecu(\vecz)\vecu(\vecz)^\top\matG^{-1}\vecu(\vecz_i) \\
&=\int_\X dx\nu^\mu(x)k_x(x,x_i)\int_\A da\nabla\mu(a|x;\vectheta)+\left(\int_\Z d\vecz\pi^\mu(\vecz)\vecu(\vecz)\vecu(\vecz)^\top\right)\matG^{-1}\vecu(\vecz_i) \\
&=\int_\X dx\nu^\mu(x)k_x(x,x_i)\nabla\left(\int_\A da\mu(a|x;\vectheta)\right)+\matG\matG^{-1}\vecu(\vecz_i) \\
&=\int_\X dx\nu^\mu(x)k_x(x,x_i)\nabla(1)+\vecu(\vecz_i)=\vecu(\vecz_i)
\end{align*}
\end{small}

\noindent
The 1st line follows from the definition of matrix $\matB_t$, function $\vecg$, and kernel $k$, the 2nd line is algebra, the 3rd line follows from the definition of $\pi^\mu$ and the Fisher kernel $k_F$, the 4th line is algebra, the 5th line is the result of replacing the integral in the parentheses with the Fisher information matrix $\matG$, finally the 6th line is algebra, and the claim follows. \\

\noindent
Now the proof for the $n\times n$ matrix $\matB_0$
\begin{small}
\begin{align*}
\matB_0&=\int_{\Z^2}d\vecz d\vecz'\vecg(\vecz;\vectheta)k(\vecz,\vecz')\vecg(\vecz';\vectheta)^\top \\
&\stackrel{\text{(a)}}{=}
\int_{\Z^2}d\vecz d\vecz'\pi^\mu(\vecz)\nabla\log\mu(a|x;\vectheta)\Big(k_x(x,x')+k_F(\vecz,\vecz')\Big)\nabla\log\mu(a'|x';\vectheta)^\top\pi^\mu(\vecz') \\
&\stackrel{\text{(b)}}{=}
\int_{\Z^2}d\vecz d\vecz'\pi^\mu(\vecz)\nabla\log\mu(a|x;\vectheta)k_x(x,x')\nabla\log\mu(a'|x';\vectheta)^\top\pi^\mu(\vecz') \\
&+
\int_{\Z^2}d\vecz d\vecz'\pi^\mu(\vecz)\nabla\log\mu(a|x;\vectheta)k_F(\vecz,\vecz')\nabla\log\mu(a'|x';\vectheta)^\top\pi^\mu(\vecz') \\
&\stackrel{\text{(c)}}{=}
\int_{\X^2}dxdx'\nu^\mu(x)\nu^\mu(x')k_x(x,x')\int_{\A^2}dada'\mu(a|x;\vectheta)\nabla\log\mu(a|x;\vectheta)\nabla\log\mu(a'|x';\vectheta)^\top\mu(a'|x';\vectheta) \\
&+
\int_{\Z^2}d\vecz d\vecz'\pi^\mu(\vecz)\vecu(\vecz)\vecu(\vecz)^\top\matG^{-1}\vecu(\vecz')\vecu(\vecz')^\top\pi^\mu(\vecz') \\
&\stackrel{\text{(d)}}{=}
\int_{\X^2}dxdx'\nu^\mu(x)\nu^\mu(x')k_x(x,x')\int_\A da\nabla\mu(a|x;\vectheta)\int_\A da'\nabla\mu(a'|x';\vectheta)^\top \\
&+
\left(\int_\Z d\vecz\pi^\mu(\vecz)\vecu(\vecz)\vecu(\vecz)^\top\right)\matG^{-1}\left(\int_\Z d\vecz'\pi^\mu(\vecz')\vecu(\vecz')\vecu(\vecz')^\top\right)=\matG\matG^{-1}\matG=\matG
\end{align*}
\end{small}

\noindent
{\bf (a)} follows from the definition of function $\vecg$ and kernel $k$, {\bf (b)} is algebra, {\bf (c)} follows from the definition of $\pi^\mu$ and the Fisher kernel $k_F$, {\bf (c)} is algebra, finally {\bf (d)} follows from $\int_\A da\nabla\mu(a|x;\vectheta)=0$ and $\matG=\int_\Z dz\pi^\mu(\vecz)\vecu(\vecz)\vecu(\vecz)^\top$, and the claim follows.

\newpage
\bibliography{10-245}


\end{document}